\newcommand{\defeq}{\triangleq}
\newtheorem*{theorem*}{Theorem}
\newtheorem*{runex*}{Running Example}
\newtheorem*{ex*}{Example}
\newtheorem*{remark*}{Remark}
\definecolor{darkgreen}{rgb}{0,0.5,0}
\definecolor{darkred}{rgb}{0.7,0,0}
\definecolor{teal}{rgb}{0.3,0.8,0.8}
\definecolor{orange}{rgb}{1.0,0.5,0.0}
\definecolor{purple}{rgb}{0.8,0.0,0.8}
\newcommand{\kibitz}[2]{\ifnum\Comments=1{\textcolor{#1}{\textsf{\footnotesize #2}}}\fi}
\newcommand{\wen}[1]{\kibitz{darkred}{[WS: #1]}}
\newcommand{\nan}[1]{\kibitz{teal}{[NJ: #1]}}
\newcommand{\akshay}[1]{\kibitz{darkgreen}{[AK: #1]}}
\newcommand{\alekh}[1]{\kibitz{purple}{[AA: #1]}}
\newcommand{\olive}{\textsc{Olive}\xspace}
\newcommand{\variation}[1]{\nbr{#1}_{\textrm{TV}}}
\newcommand{\OP}{\texttt{OP}}
\newcommand{\tMcal}{\widetilde{\Mcal}}
\newcommand{\tFcal}{\widetilde{\Fcal}}
\newcommand{\tQcal}{\widetilde{\Qcal}}
\newcommand{\bmp}{\textbf{p}}
\newcommand{\bmx}{\textbf{x}}
\newcommand{\bma}{\textbf{a}}
\newcommand{\parent}{\textup{pa}}
\newcommand{\sd}{\eta} 
\DeclareMathOperator{\E}{\mathbb{E}}
\title[Model-based RL in CDPs]{Model-based RL in Contextual Decision Processes: PAC bounds and Exponential Improvements over Model-free Approaches}
\begin{document}

\maketitle

\begin{abstract}%
  We study the sample complexity of model-based reinforcement learning (henceforth RL)
in general contextual decision processes that require strategic exploration to find a near-optimal policy.  We design new algorithms
for RL with a generic model class and analyze their statistical
properties.  Our algorithms 
have sample complexity governed by a new
structural parameter called the \emph{witness rank}, which we show to
be small in several settings of interest, including factored MDPs. We also show that the witness rank is
never larger than the recently proposed \emph{Bellman rank} parameter
governing the sample complexity of the model-free
algorithm \olive~\citep{jiang2017contextual}, the only other provably
sample-efficient algorithm for global exploration at this level of generality. Focusing on
the special case of factored MDPs, we prove an exponential lower bound
for a general class of model-free approaches, including \olive, which, when combined
with our algorithmic results, demonstrates exponential separation
between model-based and model-free RL in some rich-observation
settings.

\end{abstract}

\begin{keywords}%
  Reinforcement Learning, exploration
\end{keywords}
\setlist[enumerate]{topsep=0pt,itemsep=-1ex,partopsep=1ex,parsep=1ex,leftmargin=*}
\setlist[itemize]{topsep=0pt,itemsep=-1ex,partopsep=1ex,parsep=1ex,leftmargin=*}
\setlength{\abovedisplayskip}{2pt}
\setlength{\belowdisplayskip}{2pt}

\section{Introduction}
\label{sec:introduction}

Reinforcement learning algorithms can be broadly categorized as
model-based or model-free methods. Methods in the former family
explicitly model the environment dynamics and then use planning
techniques to find a near-optimal policy. In contrast, the latter
family models much less, typically only an optimal policy and its value. Algorithms from both
families have seen substantial empirical success, but we lack a
rigorous understanding of the tradeoffs between them, making algorithm
selection difficult for practitioners. This paper provides a new
understanding of these tradeoffs, via a comparative analysis between
model-based and model-free methods in general RL settings.

Conventional wisdom and intuition suggests that model-based methods
are more sample-efficient than model-free methods, since they leverage
more supervision. This argument is supported by classical
control-theoretic settings like the linear quadratic regulator, where
state-of-the-art model-based methods have better dimension dependence
than contemporary model-free ones~\citep{tu2018gap}.
On the other hand, since models typically have more degrees
of freedom (e.g., parameters) and can waste effort on unimportant
elements of the environment, one might suspect that model-free methods
have better statistical properties. Indeed, recent work in tabular
Markov Decision Processes (MDPs) suggest that there is almost no
sample-efficiency gap between the two families~\citep{jin2018q}. Even
worse, 
in complex environments where function
approximation and global exploration are essential, the only algorithms with sample complexity
guarantees are model-free~\citep{jiang2017contextual}. In such
environments, which of these competing perspectives applies?

To answer this question, we study model-based RL
in episodic \emph{contextual decision processes} (CDPs) where
high-dimensional observations are used for decision
making and the learner needs to perform strategic exploration to find a near-optimal policy. 
For model-based algorithms, we assume access to a class $\Mcal$ of
models and that the true environment is representable by the class,
while for model-free algorithms, we assume access to a class of value
functions that realizes the optimal value function (with analogous assumptions for policy-based methods). 
Under such assumptions, we posit:

\vspace{-0.2cm}
\begin{quote}
\emph{Model-based methods rely on stronger function-approximation
  capabilities but can be exponentially more sample efficient than their
  model-free counterparts.}
\end{quote}
\vspace{-0.2cm}

\vspace{0.05cm}
Our contributions provide evidence for this thesis and can be summarized as follows:
\begin{enumerate}[itemsep=0pt, leftmargin=*]
 \item We show that there exist MDPs where (1) all model-free methods, given a value function class satisfying the above 
 realizability condition,
 incur exponential sample complexity (in horizon); and (2) there exist model-based methods that, given a model class containing the true model, obtain polynomial sample complexity. In fact, these MDPs belong to the well-studied
   \emph{factored MDPs}~\citep{kearns1999efficient}, which we
   use as a running example throughout the paper.
\item We design a new model-based algorithm for general CDPs and show that it has sample
  complexity governed by a new structural parameter, the \emph{witness
    rank}. We further show that many concrete settings, including tabular and low rank MDPs, reactive POMDPs, and
  reactive PSRs have a small witness rank. This algorithm is the first
  provably-efficient model-based algorithm that does not rely on
  tabular representations or highly structured control-theoretic
  settings.
\item We compare our algorithm and the witness rank with the
  model-free algorithm \olive~\citep{jiang2017contextual} and the
  Bellman rank, the only other algorithm and structural parameter
  at this level of generality. We show that the witness rank is never
  larger, and can be exponentially smaller than the Bellman rank. In particular, our algorithm has polynomial sample complexity in factored MDPs, an exponential gain over \olive and any other realizability-based model-free algorithm.

\end{enumerate}
\vspace{0.1cm}

The caveat in our thesis is that model-based methods rely on strong
realizability assumptions. In the rich environments we study, where
function approximation is essential, some form of realizability is
necessary (see Proposition 1 in~\citet{krishnamurthy2016pac}), but our
model-based assumption (See~\pref{ass:realizable}) is strictly
stronger than prior value-based
analogs~\citep{antos2008learning,krishnamurthy2016pac}.
On the other hand, our results precisely quantify the tradeoffs
between model-based and model-free approaches, which may guide
future empirical efforts.



\section{Preliminaries}
\label{sec:preliminaries}
We study \emph{Contextual Decision Processes} (CDPs), a general
sequential decision making setting where the agent optimizes long-term
reward by learning a policy that maps from rich observations
(e.g., raw-pixel images) to actions. The term CDP was
proposed by \cite{krishnamurthy2016pac} and extended by
\cite{jiang2017contextual}, 
with CDPs capturing broad classes of RL problems allowing rich observation spaces
including (Partially Observable) MDPs and Predictive State
Representations.  Please
see the above references for further background.

\paragraph{Notation.}
We use $[N] \defeq \{1,\ldots,N\}$. For a finite set
$S$, $\Delta(S)$ is the set of distributions over $S$, and $U(S)$ is the uniform distribution over $S$. For a function $f:S\to\RR$, $\|f\|_{\infty}$ denotes $\sup_{s\in S}\abr{f(s)}$. 

\subsection{Basic Definitions} \label{sec:basic}
Let $H\in \NN$ denote a time horizon and let $\Xcal$ be a
large context space of unbounded size, partitioned into subsets
$\Xcal_1,\ldots,\Xcal_{H+1}$.  A finite horizon episodic CDP is a tuple
$(\Xcal, \Acal, R, P)$ consisting of a (partitioned) context space
$\Xcal$, an action space $\Acal$, a transition operator $P: \{\bot\}
\cup (\Xcal \times\Acal) \to \Delta(\Xcal)$, and a reward function $R:
\Xcal \times \Acal \to \Delta(\Rcal)$ with $\Rcal\subseteq [0,1]$.\footnote{We assume Markov transitions w.l.o.g., since context may encode history.}
We assume a \emph{layered Markovian}
structure, so that for any $h \in [H]$, $x_h \in \Xcal_h$ and $a \in
\Acal$, the future context and the reward distributions are characterized by $x_h,a$ and moreover $P_{x_h,a} \defeq P(x_h,a) \in \Delta(\Xcal_{h+1})$.
We use $P_0 \defeq P(\bot) \in \Delta(\Xcal_1)$ to denote the initial
context distribution, and we assume $|\Acal|=K$
throughout.
\footnote{Partitioning the context space by time
  allows us to capture more general time-dependent dynamics, reward,
  and policy.} Note that the layering of contexts allows us to implicitly model the level $h$ as part of the context.


A policy $\pi:\Xcal \to \Delta(\Acal)$ maps each context to a
distribution over actions. By executing this policy in the CDP for
$h-1$ steps, we naturally induce a distribution over $\Xcal_h$, and we
use $\EE_{x_h \sim \pi}[\cdot]$ to denote the expectation with respect
to this distribution.  A policy ${\pi}$ has associated value
and action-value functions $V^{\pi}:\Xcal \to \RR^+$ and
$Q^\pi:\Xcal\times\Acal \to \RR^+$, defined as
\begin{align*}
\forall h \in [H], x \in \Xcal_h:~ V^{\pi}(x) \defeq \EE_{\pi}\sbr{\sum_{t = h}^H  r_t  \,\Big|\, x_h = x}, ~ Q^{\pi}(x, a) \defeq \EE_{r\sim R(x,a)}[r] + \EE_{x'\sim P_{x,a}}\left[V^{\pi}(x')\right],
\end{align*}
Here, the expectation is over randomness in the environment and the
policy, with actions sampled by $\pi$.  Note that there is no need to index $V$
and $Q$ by the level $h$ since it is encoded in the context.
The value of a policy $\pi$ is $v^\pi \defeq
\EE_{x_1\sim P_0}\sbr{V^\pi(x_1)}$ and the goal is to find a policy
$\pi$ that maximizes $v^\pi$.

For regularity, we assume that almost surely
$\sum_{h=1}^H r_h \le 1$.





\begin{runex*}
As a running example, we consider \emph{factored
  MDPs}~\citep{kearns1999efficient}. Let $d \in \NN$ and let $\Ocal$
be a small finite set. Define the context space $\Xcal \defeq
[H]\times\Ocal^d$, with the natural partition by time. For a state $x \in
\Xcal$ we use $x[i]$ for $i \in [d]$ to denote the value of $x$ on the  $i^{\textrm{th}}$
state variable (ignoring the time step $h$), and similar notation for a
subset of state variables.  For each state variable $i \in [d]$, the
parents of $i$, $\parent_i \subseteq [d]$ are the subset of state
variables that directly influence $i$. In factored MDPs, the
transition dynamics $P$ factorize according to the parent
relationships:
\begin{align}\label{eq:factorization}
\forall h, x \in \Xcal_h, x'\in\Xcal_{h+1}, a\in \Acal, \ \ P(x' \mid x,a) = \prod_{i=1}^d P^{(i)}[x'[i] \mid x[\parent_i] ,a, h]
\end{align}
for conditional distributions $\{P^{(i)}\}_{i=1}^d$ of the appropriate
dimensions. Note that we always condition on the time point $h$ to
allow for non-stationary transitions. This transition
operator has  $L \defeq \sum_{i=1}^d HK\cdot
|\Ocal|^{1+|\parent_i|}$ parameters, which can be much smaller than
$dHK|\Ocal|^{1+d}$ for an unfactorized process on $|\Ocal|^d$
states.\footnote{Actually the full unfactored process has
  $HK|\Ocal|^{2d}$ parameters. Here we are assuming that the state
  variables are conditionally independent given the previous state and
  action.} When $|\parent_i|$ is small for all $i$, we can
expect algorithms with low sample
complexity. Indeed~\citet{kearns1999efficient} show that factored MDPs
can be PAC learned with $\textrm{poly}(H,K,L,\epsilon,\log(1/\delta))$
samples in the average and discounted reward settings. For more recent development in this line of research, we refer the readers to  \citet{diuk2009adaptive, nguyen2013online, osband2014near, guo2017sample} and the references therein.
\end{runex*}

\subsection{Model Class}
Since we are interested in general CDPs with large state spaces (i.e., non-tabular setting), we
equip model-based algorithms with a model class $\Mcal$, where all models in $\Mcal$ share the same $\Xcal$ and $\Acal$ but can differ in reward function $R$ and transition operator $P$. 
The environment reward and dynamics are called the \emph{true model}
and denoted $M^\star \defeq (R^\star, P^\star)$.  For a model $M \in
\Mcal$, $\pi_M,V_M,Q_M$, and $v_M$ are the optimal policy, value
function, action-value function, and value \emph{in the model $M$},
respectively. These objects are purely functions of $M$ and do not depend on the environment. For the true model $M^\star$,
these quantities are denoted $\pi^\star, V^\star, Q^\star, v^\star$,
suppressing subscripts.  For $M\defeq(R, P)$, we denote $(r,x')\sim
M_{x,a}$ as sampling a reward and next context from $M$: $r\sim
R(x,a), x'\sim P_{x,a}$.  We use $x_h\sim \pi$ to denote a state sampled by executing $\pi$ in the true environment $M^\star$ for $h-1$
steps.

We use $\texttt{OP}$ (for Optimal Planning) to represent the operator
that maps a model $M$ to its optimal $Q$ function and its optimal policy, that is
$\texttt{OP}(M) \defeq ({Q}_M, \pi_M)$. We denote $\texttt{OP}(\Mcal) \defeq \left\{{Q},\pi: \exists  M\in\Mcal\ s.t.\ \texttt{OP}(M) = (Q,\pi) \right\}$ as the
set of optimal $Q$ functions and optimal policies derived from the class $\Mcal$.\footnote{As $\pi_M$ is determined by $Q_M$, i.e., $\pi_M(x) = \arg\max_a Q_M(x,a)$, we sometimes overload notation and use $\Qcal = \OP(\Mcal)$ to represent the set of optimal Q functions derived from $\Mcal$. }
Throughout the paper, when we compare model-based and
model-free methods, we use $\Mcal$ as input for
the former and $\texttt{OP}(\Mcal)$ for the latter.




We assume the model class has \emph{finite (but exponentially large) cardinality and is realizable}. 
\begin{assum}[Realizability of $\Mcal$]
\label{ass:realizable}
We assume the model class $\Mcal$ contains the true model $M^\star$.
\end{assum}





The finiteness assumption is made only to simplify presentation and can be relaxed using standard techniques; see~\pref{thm:factored_MDP} for a result with infinite model classes. While realizability can also be relaxed (as in \citet{jiang2017contextual}),
it is impossible to avoid it altogether (that is, to compete with $\OP(\Mcal)$ for arbitrary $\Mcal$) due to exponential lower bounds~\citep{krishnamurthy2016pac}.


\begin{runex*}
For factored MDPs, it is standard to assume the factorization,
formally $\textrm{pa}_i$ for all $i \in [d]$, and the reward function
are known~\citep{kearns1999efficient}. Thus
the natural model class $\Mcal$ is just the set of
all dynamics of the form~\pref{eq:factorization}, which
obey the factorization, with shared reward function. While this class is infinite, our techniques apply as shown in the proof of~\pref{thm:factored_MDP}.
\end{runex*}


\section{Why Model-based RL?}
\label{sec:separation}



This section contains our first main result, 
that model-based methods can be \emph{exponentially} more
sample-efficient than model-free ones.  To our knowledge, this is the
first result of this form.

To show such separation, we must prove a lower bound against all
model-free methods, and, to do so, we first formally define this class
of algorithms. \citet{strehl2006pac} define model-free algorithms to
be those with $o(|\Xcal|^2|\Acal|)$ space, but this definition is specialized to the tabular setting and provides little insight
for algorithms employing function approximation. In contrast, our
definition is information-theoretic: Intuitively, a model-free
algorithm does not operate on the context $x$ directly, but rather
through the evaluations of a state-action function class
$\Gcal$. Formally:


\begin{definition}[Model-free algorithm]
\label{def:model_free}
Given a (finite) function class $\Gcal: (\Xcal\times\Acal) \to \RR$,
define the \emph{$\Gcal$-profile} $\Phi_\Gcal: \Xcal \to
\RR^{|\Gcal|\times|\Acal|}$ by $\Phi_{\Gcal}(x) := [g(x,a)]_{g \in
  \Gcal,a \in \Acal}$. An algorithm is \emph{model-free} using $\Gcal$
if it accesses $x$ exclusively through $\Phi_\Gcal(x)$ for all $x \in
\Xcal$ during its entire execution. 
\end{definition}

In this definition, $\Gcal$ could be a class of $Q$ functions, a class
of policies, or even the union of such classes.  As such, it captures
both value-function-based algorithms like \olive, optimistic
$Q$-learning~\citep{jin2018q}, and Delayed
$Q$-learning~\citep{strehl2006pac} as well as direct policy search algorithms
like policy gradient methods (See~\pref{app:def_value_based} for a
details).%
\footnote{{There are model-free algorithms that elude our definition (for example, ones that approximate the state-action distributions \citep{chen2018scalable, liu2018breaking}), although these algorithms do not address the exploration setting.} }
In~\pref{app:def_value_based}, we show that when $\Gcal$ consists of all $Q$-functions as in the tabular setting,
the underlying context/state
can be recovered from the $\Gcal$-profile, so~\pref{def:model_free}
introduces no restriction whatsoever. 
However, beyond
tabular settings, the $\Gcal$-profile can obfuscate the context from
the agent and may even introduce partial observability. This can lead
to a significant loss of information, which can have a dramatic effect
on the sample complexity. Such information loss is formalized in
the following theorem.



\begin{theorem}
\label{thm:separation}
Fix $\delta, \epsilon\in (0,1]$. There exists a family
  $\Mcal$ of CDPs with horizon $H$, all with the same reward function, and satisfying $|\Mcal| \leq 2^H$,
  such that
\begin{enumerate}[itemsep=0pt,leftmargin=*]
 \item For any CDP in the family, with probability at least
   $1-\delta$, a model based algorithm using $\Mcal$ as the model class (\pref{alg:factored_mdp},
   ~\pref{app:factored_infinite}) outputs $\hat{\pi}$ satisfying
   $v^{\hat{\pi}} \ge v^\star - \epsilon$ using at most
   $\textrm{poly}(H,1/\epsilon,\log(1/\delta))$ trajectories.
 \item With $\Gcal = \texttt{OP}(\Mcal)$, 
   any model-free algorithm using $o(2^H)$ trajectories outputs a
   policy $\hat{\pi}$ with $v^{\hat{\pi}} < v^\star - 1/2$ with
   probability at least $1/3$ on some CDP in the family.
\end{enumerate}
\end{theorem}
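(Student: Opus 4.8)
The plan is to exhibit a single family $\Mcal$ that is simultaneously a factored MDP (so that Part~1 follows by specializing our later model-based guarantee) and a rich-observation combination lock whose optimal $Q$-functions and policies are blind to the hidden combination (so that Part~2 reduces to a needle-in-a-haystack argument). Concretely, I would take $K=2$ actions and index the family by a secret string $\theta \in \{0,1\}^H$, giving $|\Mcal| = 2^H$. Each CDP $M_\theta$ has, at every level $h$, a single ``live'' latent state and an absorbing ``dead'' state; the agent starts live, remains live only by playing the action deemed correct at level $h$, and collects a unit reward if and only if it is still live after level $H$. All $M_\theta$ share this reward. The context at a live state is a rich observation that, read at the raw (factored) bit level, determines the correct action, so the context is informative for a model-based learner; the wrong action transitions to dead. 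I would design the per-level gadget so the transition factorizes with a bounded number of parents, making each $M_\theta$ a genuine factored MDP with $\mathrm{poly}(H)$ parameters.

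For Part~1 I would simply observe that this family lies inside the factored-MDP class analyzed later, so the model-based algorithm (\pref{alg:factored_mdp}, whose guarantee is \pref{thm:factored_MDP}) outputs a near-optimal policy using $\mathrm{poly}(H,1/\epsilon,\log(1/\delta))$ trajectories. No new argument is needed beyond specializing that result; the richness of the contexts is precisely what lets model-based planning read off the correct action at each live state.

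The heart of the theorem is Part~2. The key structural property I would establish is that $\Gcal = \OP(\Mcal)$ \emph{collapses}: for every live context $x$, the profile $\Phi_\Gcal(x) = [Q_{M_\theta}(x,a)]_{\theta,a}$ takes a value independent of which action is correct at $x$. Intuitively each $Q_{M_\theta}$ assigns optimal value $1$ to the action it deems correct and a strictly smaller value otherwise, and across the family these patterns are arranged so that the profile at a live context does not depend on the secret bit governing that context. Given this collapse, a model-free algorithm --- which by \pref{def:model_free} accesses $x$ only through $\Phi_\Gcal(x)$ --- receives, along any trajectory that has not yet reached the terminal reward, observations whose distribution is \emph{identical across all} $\theta$. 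Hence it learns nothing about $\theta$ until it happens to play the entire correct $H$-length combination and observe the reward. Under a uniform prior on $\theta$, the posterior stays uniform conditioned on the (uninformative) profile history, so each live step is survived with probability $1/2$ independently and a full trajectory reaches the reward with probability $2^{-H}$, regardless of whether the strategy is adaptive. With $o(2^H)$ trajectories the expected number of reward-observing trajectories is $o(1)$, so a coupling/Le Cam argument gives that with probability at least $1/3$ the algorithm never observes the reward; its output is then independent of $\theta$ and incurs $v^{\hat\pi} \le v^\star - 1/2$. Finally, averaging this over the uniform prior (Yao's principle) yields a single CDP in the family on which the algorithm fails with probability at least $1/3$.

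The main obstacle is engineering the gadget so that the two requirements hold simultaneously and \emph{exactly}: the raw contexts must be informative enough for model-based planning, yet the induced $\Gcal$-profiles must be provably invariant to the secret, and the reward must remain a genuine $2^{-H}$-probability needle. Making the profile collapse exact rather than approximate --- while verifying that the factorization has bounded in-degree so that Part~1 applies --- is the delicate step; the information-theoretic lower bound itself is then routine once the collapse is in hand.
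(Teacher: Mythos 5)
Your high-level strategy matches the paper's: realize the family as factored MDPs so that Part~1 is a corollary of \pref{thm:factored_MDP}, and prove Part~2 by showing the $\Gcal$-profile carries no information about the planted secret. Your information-theoretic skeleton for Part~2 (uniform prior, uninformative profiles, $2^{-H}$ needle, Markov plus Yao) is also a legitimate alternative to the paper's route, which instead builds a second, explicitly hard family $\tilde{\Mcal}$ (a $2^{H-2}$-armed bandit) and transfers its lower bound by showing the two families induce identical profiles.

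The genuine gap is in the gadget itself. Your per-level live/dead combination lock cannot achieve the profile collapse you need. At a dead (absorbing, rewardless) context $x$, every model assigns $Q_{M_{\theta'}}(x,a)=0$ for all $a$, so its profile is the all-zeros vector; at a live context, at least one model assigns value $1$ to some action (the model under which that context lies on the rewarded path), so its profile is nonzero. Hence a model-free algorithm can always detect \emph{when} it died, and each trajectory therefore reveals the first incorrect bit of $\theta$: play $-1$ at level $1$, observe whether the level-$2$ profile is the dead profile, learn $\theta_1$, and so on. This recovers $\theta$ in $O(H)$ trajectories and destroys the $\Omega(2^H)$ bound; your claim that the pre-reward observation stream is identically distributed across all $\theta$ fails precisely because of this leak, and no relabeling of contexts can fix it since the zero/nonzero dichotomy of profiles is forced by the values themselves. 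The paper avoids this by having \emph{no} early absorption: levels $1$ through $H-1$ form a complete binary tree whose dynamics are shared by all models (the state just records the action history), every action sequence stays ``alive'' until the final transition, and only the last step — equivalently, the terminal reward — depends on the planted path $\bmp$. The profile at an internal state then equals $[\one\{\bma=\bmp'\}]_{\bmp'}$, which reveals only the agent's own action history and nothing about $\bmp$, so the only signal about the secret is the terminal reward and the bandit reduction goes through. You correctly flagged the gadget as the delicate step, but the specific gadget you propose is exactly the one that does not work.
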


See~\pref{app:proof_of_separation} for the proof. Informally, the
result shows that there are CDPs where model-based methods can be exponentially more
sample-efficient than any model-free
method, when given access to a $\Gcal$ satisfying $Q^\star\in\Gcal, \pi^\star \in \Gcal$. As concrete instances of such methods, the lower bound applies to several recent value-based algorithms for CDPs~\citep{krishnamurthy2016pac, jiang2017contextual, dann2018polynomial} as well as any future algorithms developed assuming just realizable optimal value functions or optimal policies. On the other hand, it leaves room for sample efficient model-free techniques that require stronger representation conditions on $\Gcal$. 
To our knowledge, this is the first information-theoretic separation result for any broad class of
model-based/model-free algorithms. Indeed, even the definition of model-free
methods is new here.\footnote{\citet[Section 11.6]{sutton2018reinforcement} have a closely related definition (where the learner can only observe state features), but the definition is specialized to linear function approximation and is subsumed by ours.}

Given this result, it might seem that model-based methods should
always be preferred over model-free ones. However, it is worth also
comparing the assumptions required to enable the two paradigms.  Since $M^\star \in \Mcal$ for each CDP in the family, we
also have $Q^\star \in \OP(\Mcal)$.  This latter \emph{value-function
  realizability} assumption is standard in model-free RL with function
approximation~\citep{antos2008learning,krishnamurthy2016pac}, but our
model-based analog in~\pref{ass:realizable} can be substantially
stronger. As such, model-based methods operating with realizability
typically require more powerful function approximation.
Further, while we view setting $\Gcal = \OP(\Mcal)$ as the most natural choice for the purposes of comparison, using a more expressive $\Gcal$ may reveal state information and circumvent the lower bound (as we show in~\pref{app:overparametrization}).
Thus,
while~\pref{thm:separation} formalizes an argument in favor of
model-based methods, realizability considerations and choice of $\Gcal$ provide important
caveats.

\begin{runex*}
The construction in the proof of~\pref{thm:separation} is a simple
factored MDP with $d=H$, $|\Ocal| = 4$, $|\textrm{pa}_i|=1$ for all
$i$, and deterministic dynamics. As we see, our algorithm has
polynomial sample complexity in all factored MDPs (and a
broad class of other environments).

The construction implies that model-free methods cannot
succeed in factored MDPs. To our knowledge, no information theoretic
lower bounds for factored MDPs exist, but the result does agree
with known computational and representational barriers, namely (a) that planning is
NP-hard~\citep{mundhenk2000complexity}, (b)
that $Q^\star$ and $\pi^\star$ may not
factorize~\citep{guestrin2003efficient},  and (c) that $\pi^\star$ cannot be represented by a polynomially sized
circuit~\citep{allender2003note}. Our result provides a new
form of hardness, namely statistical complexity, for model-free RL in
factored MDPs.
\end{runex*}
\vspace{-0.5cm}

\section{Witnessed Model Misfit}
\label{sec:rank_introduction}
In this section we introduce \emph{witnessed model misfit}, a measure of model error, which we later use to eliminate candidate models in our algorithm.

To verify the validity of a candidate model, a natural idea is to
compare the samples
from the environment with synthetic samples generated from a model $M$.
To formalize this comparison approach, we use Integral Probability
Metrics (IPM) \citep{muller1997integral}: for two probability
distributions $P_1, P_2\in\Delta(\Zcal)$ over $z \in \Zcal$ and a function class $\Fcal:
 \Zcal\to\RR$ that is symmetric (i.e. if $f \in \Fcal$
then $-f \in \Fcal$ also holds), the IPM with respect $\Fcal$ is:
$\sup_{f\in\Fcal} \EE_{z\sim P_1}[f(z)] -\EE_{z\sim P_2}[f(z)]$.
We use IPMs to define 
witnessed model misfit.





\begin{definition}[Witnessed Model Misfit]
For a class $\Fcal: \Xcal\times\Acal\times\Rcal\times\Xcal \to \RR$,
models $M,M' \in \Mcal$ and a time step $h \in [H]$, the
{Witnessed Model Misfit} of $M'$ witnessed by $M$ at level $h$ is:
\begin{align}
\Wcal(M, M', h; \Fcal) \triangleq \sup_{f\in \Fcal} \mathop{\mathbb{E}}_{\overset{\scriptstyle x_h\sim \pi_M}{ a_h\sim \pi_{M'}}} \bigg[  \EE_{(r,x')\sim M'_h}[f(x_h,a_h,r, x')] - \EE_{(r,x')\sim M^\star_{h}}[f(x_h, a_h, r, x')]\bigg],\label{eq:model_error}
\end{align} where for a model $M=(R, P)$, $(r,x')\sim M_h$ is shorthand for $r\sim R_{x_h,a_h}, x'\sim P_{x_h,a_h}$.
\label{def:witnessed_model_misfit}
\end{definition}
$\Wcal(M, M', h; \Fcal)$ is the IPM between two
distributions over $\Xcal\times\Acal\times\Rcal\times\Xcal$ with the
same marginal over $\Xcal\times\Acal$ but two different conditionals
over $(r, x')$, according to $M'$ and the true model $M^\star$, respectively. The marginal over $\Xcal\times \Acal$ is the distribution over context-action pairs when $\pi_M$, the optimal policy of another candidate model $M$, is executed in the true environment.
We call this witnessed
model misfit since $M'$ might successfully masquerade as $M^\star$
unless we find the right context distribution to witness its
discrepancy. Below we illustrate the definition with some examples.




\vspace{-0.3cm}
\begin{ex*}[Total Variation]
When $\Fcal = \{f: \|f\|_{\infty}\leq 1\}$, the witnessed model misfit becomes
\begin{align}
\Wcal(M, M', h; \Fcal) = \mathbb{E}\left[  \variation{R'_{x_h,a_h} \circ P'_{x_h,a_h} - R^\star_{x_h,a_h} \circ P^\star_{x_h,a_h}}  \vert\, x_h\sim \pi_M, a_h\sim \pi_{M'}\right],
\label{eq:tv_model_error}
\end{align}
where $R_{x,a} \circ P_{x,a}$ is the distribution over $\Rcal\times\Xcal$ with $r\sim R_{x,a}, x'\sim P_{x,a}$ independently.
This is just the total variation distance\footnote{We use $\variation{P_1-P_2} = \sum_{x\in\Xcal}\abr{P_1(x) - P_2(x)}$, differing from the standard definition of $\variation{\cdot}$ by a factor of $2$.} between $R'\circ P'$ and $R^\star\circ P^\star$, averaged over context-action pairs $x\sim\pi_{M}, a\sim \pi_{M'}(\cdot|x)$ sampled from the true environment.
\end{ex*}
\vspace{-0.5cm}
\begin{ex*}[Exponential Family]
Suppose the models $M\defeq(R,P)$ are from a conditional exponential
family: conditioned on $(x,a)\in\Xcal\times\Acal$, we have
$R_{x,a}\circ P_{x,a} \defeq \exp\left( \langle \theta_{x,a},
\mathrm{T}(r,x') \rangle \right)/Z(\theta_{x,a})$ for parameters
$\theta_{x,a} \in \Theta\defeq \{\theta: \nbr{\theta} \leq 1\}\subset
\RR^m$ with partition function $Z(\theta_{x,a})$ and sufficient
statistics $\mathrm{T}:\Rcal\times\Xcal\to \RR^{m}$. With $\Vcal =
\{\Xcal\times\Acal \to \Theta\}$, we design $\Fcal = \{
(x,a,r,x')\mapsto \langle v(x,a), \mathrm{T}(r,x') \rangle: v\in\Vcal
\}$. In this setting, witnessed model misfit is
\begin{align*}
\Wcal(M, M', h; \Fcal) = \EE_{x_h\sim \pi_M, a_h\sim \pi_{M'}}\bigg[   \Big\|  \EE_{(r,x')\sim M'_{h}}[\mathrm{T}(r,x')]  - \EE_{(r,x')\sim M^\star_{h}}[\mathrm{T}(r,x')]   \Big\|_{\star}        \bigg],
\end{align*}
with  $\nbr{x}_{\star} \defeq \sup\{ \langle x, \theta \rangle \vert \nbr{\theta} \leq 1\}$.
Here,
we measure distance, in the dual norm, between the expected sufficient statistics of
$(r,x')$ sampled from $M'$ and the true model $M^\star$. See~\pref{app:exponential_family}.
\end{ex*}
\vspace{-0.1cm}
\vspace{-0.6cm}
\begin{ex*}[MMD]
When $\Fcal$ is a unit ball in an RKHS, we obtain MMD~\citep{gretton2012kernel}.
\end{ex*}
\vspace{-0.2cm}

Witnessed model misfit is also closely related to the \emph{average
  Bellman error}, introduced by~\cite{jiang2017contextual}.  Given
$Q$ functions, $Q$ and $Q'$, the average Bellman error at time step $h$ is:
\begin{align}
&\mathcal{E}_{B}(Q, {Q}', h) \defeq \EE \left[ Q'(x_h, a_h) - r_h - Q'(x_{h+1}, a_{h+1}) \,\big\vert\, x_h\sim \pi_{Q}, a_{h:h+1}\sim \pi_{Q'} \right] \label{eq:bellman_error}
\end{align}
where $\pi_Q$ is the greedy policy associated with $Q$, i.e.,
$\pi_Q(a|x) \defeq \one\{a = \argmax_{a'}Q(x,a')\}$, and the random trajectories (w.r.t.~which we take expectation) are generated in the true environment $M^\star$.



When the $Q$ functions are derived from a model class, meaning
that $\Qcal = \texttt{OP}(\mathcal{M})$, we can extend the
definition to any pair of models $M, M' \in \Mcal$, using $Q_M$ and
$Q_{M'}$.
In such cases, the average Bellman error for $M,M'$ is just the model misfit witnessed by the function $f_{M'}(x,a,r,x') = r + V_{M'}(x')$.
We conclude this section with an assumption about the class $\Fcal$.
\begin{assum}[Bellman domination using $\Fcal$]
\label{ass:test_function_realizable}
$\Fcal$ is symmetric, finite in size,\footnote{As before, our results apply whenever $\Fcal$ has bounded statistical complexity. We describe a more complicated algorithm with no dependence on the complexity of $\Fcal$ in the appendix.}
$\forall f \in \Fcal:\,\|f\|_\infty \leq 2$, and the witnessed model misfit~\pref{eq:model_error} satisfies $\forall M,M'\in \Mcal:\,\,\Wcal(M, M',h; \Fcal) \geq \mathcal{E}_B(Q_M, Q_{M'},h)$.
\end{assum}
As discussed above, one easy way to satisfy this assumption is to
ensure that the special functions $r + V_M(x')$ are contained in
$\Fcal$ for all $M \in \Mcal$, but this is not the only way as we will see
in~\pref{sec:factored_MDP}.\footnote{We allow the $\ell_\infty$ bound
  of 2 to accommodate these functions whose range can be 2 under our
  assumptions.} The Bellman domination condition
in~\pref{ass:test_function_realizable} plays an important role in the
algorithm we present next, as it allows us to detect the
suboptimality of a model in terms of the value attained by its optimal
policy in the actual MDP.

\section{A Model-based Algorithm}
\label{sec:algorithm}

In this section, we present our main algorithm and sample complexity
results.  We start by describing the algorithm. Then, working towards a
statistical analysis, we introduce the \emph{witness rank}, a new
structural complexity measure. We end this section with the main
sample complexity bounds.

\subsection{Algorithm}
Since we do not have access to $M^\star$, we must estimate the
witnessed model misfit from samples. Since $\Fcal$ will always be clear from the context, we drop it from the arguments to the model misfit for succinctness.
Given a dataset $\Dcal \defeq \{(x_h^{(n)}, a_h^{(n)}, r^{(n)}_h, x^{(n)}_{h+1})\}_{n=1}^N$ with
\begin{align*}
x_h^{(n)}\sim \pi_{M},  ~a_h^{(n)} \sim U(\Acal),~ (r_h^{(n)},x_{h+1}^{(n)})\sim M^\star_h, 
\end{align*}
denote the importance weight $\rho^{(n)} \triangleq K
\pi_{M'}(a_h^{(n)}|x_{h}^{(n)})$. We simply use the empirical model
misfit:
\begin{align}
\tWcal(M, M', h) \triangleq \max_{f\in\Fcal} \sum_{n=1}^N  \frac{\rho^{(n)}}{N}  \EE_{(r,x')\sim M'_{h}} \sbr{f(x_h^{(n)}, a_h^{(n)},  r, x') -    f(x_h^{(n)}, a_h^{(n)}, r_h^{(n)}, x_{h+1}^{(n)})} .
\label{eq:model_err_est_2}
\end{align}
Here the importance weight $\rho^{(n)}$ accounts for distribution
mismatch, since we are sampling from $U(\Acal)$ instead of
$\pi_{M'}$. Via standard uniform convergence arguments
(in~\pref{app:proof_of_sample_complexity}) we show that
$\tWcal(M,M',h)$ provides a high-quality estimate of $\Wcal(M,M',h)$
under~\pref{ass:test_function_realizable}.


We also require an estimator for the average Bellman error
$\Ecal_B(M,M,h)$.
Given a data set $\{(x^{(h)}_n, a^{(n)}_h, r_h^{(i)},
x^{(n)}_{h+1})\}_{n=1}^N$ where
$x^{(n)}_h\sim \pi_M, a^{(n)}_h\sim \pi_M$, and $(r_h^{(n)},x^{(n)}_{h+1})\sim M^{\star}_h$,  
we
form an {unbiased}  estimate of $\Ecal_{B}(M, M, h)$ as
\begin{align}
{\hEcal}_{B}(M, M, h) \triangleq \frac{1}{N}\sum_{n=1}^N \sbr{
  Q_M(x_h^{(n)},a_h^{(n)}) - \sbr{r_h^{(n)}+V_{M}(x^{(n)}_{h+1})}}.
\label{eq:empirical_bellman_error}
\end{align}

\begin{algorithm}[t]
\begin{algorithmic}[1]
\STATE $\Mcal_0 = \Mcal$
\FOR {$t = 1, 2, ...$}
	\STATE Choose model optimistically: $M^t = \argmax_{M\in\Mcal_{t-1}} v_{{M}}$,  set $\pi^t = \pi_{M^t}$ \label{line:opt}
	\STATE Execute $\pi^{t}$ to collect $n_e$ trajectories $\{(x_h^{i}, a_h^{i}, r_h^{i})_{h=1}^H\}_{i=1}^{n_e}$ and set $\hat{v}^{\pi^{t}} = \tfrac{1}{n_e}\sum_{i=1}^{n_e} \big(\sum_{h=1}^H r_h^{i}\big)$
\vspace{-0.4cm}
    \STATE \textbf{if} $\lvert \hat{v}^{\pi^{t}} - v_{M^t} \rvert \leq \epsilon/2$ \textbf{then} Terminate and output $\pi^t$ \textbf{end if}
\smallskip
	\STATE Find $h_t$ such that ${\hEcal}_{B}(M^t, M^t, h_t) \geq \frac{\epsilon}{4H}$ (See~\pref{eq:empirical_bellman_error}) \label{line:select_h_t}

	\STATE Collect trajectories $\{(x^{(i)}_h,a^{(i)}_h, r_h^{(i)})_{h=1}^{H}\}_{i=1}^n$ where $a_{h}^{(i)} \sim \pi^t$ for $h\neq h_t$ and $a_{h_t}^{(i)}\sim U(\mathcal{A})$ \label{line:gen_data_Wcal}
\smallskip
	\STATE \textbf{for} $M' \in \Mcal_{t-1}$ \textbf{do}\,\,
	 Compute ${\tWcal}(M^t, M', h_t)$  
(See~\pref{eq:model_err_est_2}) \label{line:estimate_model_error}\,\,
	\textbf{end for}
	\STATE Set $\Mcal_{t} = \{M\in \Mcal_{t-1}:  {\tWcal}(M^t, M, h_t) \leq \phi  \}$  \label{line:eliminate}
\ENDFOR
\end{algorithmic}
\caption{Inputs: ($\Mcal, \Fcal, n, n_e, \epsilon, \delta, \phi$)}
\label{alg:main_alg}
\end{algorithm}


The pseudocode is displayed in~\pref{alg:main_alg}. The algorithm is
round-based, maintaining a version space of models and eliminating a
model from the version space when the discrepancy between
the model and the ground truth $M^\star$ is witnessed. The witness distributions
are selected using a form of \emph{optimism}: at each round, we
select, from all surviving models, the one with the highest predicted
value, and use the associated policy for data collection. If the
policy achieves a high value in the environment, we simply return
it. Otherwise we estimate the witnessed model misfit on the context
distributions induced by the policy, and we shrink the version space
by eliminating all incorrect models. Then we proceed to the next
iteration.

Intuitively, using a simulation lemma analogous to Lemma 1 of~\citet{jiang2017contextual}, if $M^t$ is the optimistic model at round $t$ and we do
not terminate, then there must exist a time step $h_t$
(\pref{line:select_h_t}) where the average Bellman error is
large. Using~\pref{ass:test_function_realizable}, this also implies
that the witness model misfit for $M^t$ witnessed by $M^t$ itself must
be large. Thus, if $t$ is a non-terminal round, we ensure that $M^t$
and potentially many other models are eliminated.



The algorithm is similar to \olive, which
uses average Bellman error instead of witnessed model misfit to shrink
the version space. However, by appealing
to~\pref{ass:test_function_realizable}, witness model misfit provides
a more aggressive elimination criterion, since a large average Bellman error on a distribution immediately implies a large witnessed model misfit on the same distribution, but the converse does not necessarily hold. Since the algorithm
uses an aggressive elimination rule, it often requires
fewer iterations than \olive, as discussed below.

\paragraph{Computational considerations.}
In this work, we focus on the sample complexity of model-based RL,
and~\pref{alg:main_alg}, as stated, admits no obvious efficient
implementation. The main bottleneck, for efficiency, is the optimistic
computation of the next model in~\pref{line:opt} where we perform a
constrained optimization, restricted to the class of all models not
eliminated so far. The objective in this problem encapsulates a
planning oracle to map models from our class to their values, and the
constraints involve enforcing small values of witness model misfit on
the prior context distributions. While the witness model misfit is
linear in the transition dynamics, finding an optimistic value
function induces bilinear, non-convex constraints even in a tabular
setting. This resembles known computational difficulties with \olive,
but we note that the recent hardness result
of~\citet{dann2018polynomial} for \olive does not apply
to~\pref{alg:main_alg}, leaving the possibility of an efficient
implementation open.


\subsection{A structural complexity measure}

So far, we have imposed realizability and expressivity
assumptions
(\pref{ass:realizable} and~\pref{ass:test_function_realizable})  on $\Mcal$ and $\Fcal$. 
Unfortunately, these alone do not enable tractable reinforcement learning with
polynomial sample complexity, as verified by the following simple
lower bound.
\begin{proposition}
\label{prop:lower_bound_realizable}
Fix $H, K\in\mathbb{N}^+$ with $K\geq 2$ and $\epsilon \in (0,
\sqrt{1/8})$. There exists a family of MDPs, classes $\Mcal,\Fcal$ satisfying~\pref{ass:realizable} and~\pref{ass:test_function_realizable} for all MDPs in the family with $|\Mcal| = |\Fcal| = K^{H-1}$,
and a constant $c>0$, such that the following holds:
For any algorithm that takes $\Mcal$, $\Fcal$ as inputs and uses $T \leq cK^{H-1}/\epsilon^2$ episodes, 
the algorithm
outputs a policy $\hat{\pi}$ with $v^{\hat{\pi}} < v^\star - \epsilon$
with probability at least $1/3$ for some MDP in the family.
\end{proposition}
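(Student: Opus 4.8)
The plan is to exhibit a needle-in-a-haystack family that reduces to best-arm identification over $K^{H-1}$ arms. I would take a complete $K$-ary tree of depth $H-1$, so that $\Xcal_h$ consists of the $K^{h-1}$ nodes at depth $h$ and $\Xcal_H$ of the $K^{H-1}$ leaves; transitions are deterministic and identical across the family (action $a$ at a node moves to its $a$-th child), and all intermediate rewards are $0$. Each of the $N \defeq K^{H-1}$ leaves indexes one model $M_j$: at leaf $j$ the terminal reward is $\mathrm{Bernoulli}(\tfrac12+\epsilon)$ and at every other leaf it is $\mathrm{Bernoulli}(\tfrac12-\epsilon)$ (the hypothesis $\epsilon<\sqrt{1/8}$ keeps these means in a range where the per-sample $\KL$ is $O(\epsilon^2)$). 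Then $v^\star_{M_j}=\tfrac12+\epsilon$ is attained only by the policy that walks to leaf $j$, while any policy reaching a different leaf has value $\tfrac12-\epsilon = v^\star-2\epsilon$, so $v^{\hat\pi}\ge v^\star-\epsilon$ forces $\hat\pi$ to reach leaf $j$ with probability at least $\tfrac12$. I set $\Mcal=\{M_j\}_j$, which has cardinality $K^{H-1}$ and contains the true model, so \pref{ass:realizable} is immediate.

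For the test-function class I would check that a single reward-reading function essentially suffices, then pad up to the stated size. Take $f_0(x,a,r,x')\defeq r$. Since the transitions are deterministic, known, and shared across models, the only level at which two models disagree is $h=H$, and there the Bellman error unwinds to $\mathcal{E}_B(Q_M,Q_{M'},H)=\EE_{x_H\sim\pi_M}[\EE_{M'}[r\mid x_H]-\EE_{M^\star}[r\mid x_H]]$, which is exactly the $\Wcal$-term produced by $f_0$. At every earlier level the greedy next state preserves the value of $M'$, so the Bellman error telescopes to $0$, and symmetry of $\Fcal$ makes $\Wcal\ge 0$ automatically. Hence $\{\pm f_0\}$ already dominates the Bellman error at all levels with $\|f\|_\infty\le 1$; I then adjoin $K^{H-1}-2$ further bounded symmetric functions (e.g.\ scaled leaf indicators) purely to meet $|\Fcal|=K^{H-1}$, noting that enlarging $\Fcal$ can only increase $\Wcal$ and thus preserves \pref{ass:test_function_realizable}, while revealing nothing about the identity of the planted leaf.

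The core of the argument is the information-theoretic lower bound, which I would run against the uniform prior over $j\in[N]$ using a fixed reference model $M_0$ (all leaves $\mathrm{Bernoulli}(\tfrac12-\epsilon)$). Each episode visits exactly one leaf, so with $N_j$ the number of visits to leaf $j$ we have $\sum_j N_j = T$ for every realization; by the divergence-decomposition (chain rule for $\KL$) together with $\KL(\mathrm{Ber}(\tfrac12-\epsilon)\,\|\,\mathrm{Ber}(\tfrac12+\epsilon))\le C\epsilon^2$, the interaction laws satisfy $\KL(\mathbb{P}_{M_0}\,\|\,\mathbb{P}_{M_j})\le C\epsilon^2\,\EE_{M_0}[N_j]$. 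Averaging over the prior gives $\tfrac1N\sum_j\KL(\mathbb{P}_{M_0}\,\|\,\mathbb{P}_{M_j})\le C\epsilon^2\,T/N\le Cc$, a small constant once $c$ is chosen small. A Le Cam/Pinsker averaging argument then shows that for a constant fraction of indices $j$ the algorithm's output law under $M_j$ is within total variation $O(\sqrt{Cc})$ of its law under $M_0$; but under the symmetric $M_0$ there is no good leaf, so $\tfrac1N\sum_j\mathbb{P}_{M_0}[\hat\pi\text{ reaches }j]\le\tfrac1N$, and hence for such $M_j$ the output reaches $j$ with probability $<\tfrac12$, i.e.\ $v^{\hat\pi}<v^\star-\epsilon$, with probability at least $1/3$ for some member of the family.

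The main obstacle is the third step: handling the \emph{adaptivity} of the algorithm, which chooses which leaf to probe next based on past stochastic rewards, so that the visit counts $\EE_{M_0}[N_j]$ can still be summed to $T$ and fed into the chain-rule $\KL$ bound, and then converting the small averaged $\KL$ into a failure bound on the algorithm's \emph{output policy} rather than merely its exploration. The $\Fcal$ construction and the suboptimality bookkeeping are routine by comparison; the only mild subtlety there is confirming that the planted gap of $2\epsilon$ yields the required strict $\epsilon$-suboptimality while keeping the per-sample $\KL$ at $O(\epsilon^2)$, which is exactly what $\epsilon\in(0,\sqrt{1/8})$ buys.
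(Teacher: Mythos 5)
Your proposal is correct and follows essentially the same route as the paper: a deterministic $K$-ary action tree whose $K^{H-1}$ leaves index the models, with a single planted leaf carrying a Bernoulli reward advantage, reducing the problem to best-arm identification among $K^{H-1}$ arms. The only differences are that the paper black-boxes the information-theoretic core by citing the bandit lower bound of \citet{krishnamurthy2016pac} (Theorem 2) where you re-derive it via divergence decomposition and Le Cam/Pinsker, and that you explicitly construct $\Fcal$ and verify \pref{ass:test_function_realizable} via the test function $f_0(x,a,r,x')=r$ --- a detail the paper's proof silently omits --- while your $\mathrm{Ber}(\tfrac12\pm\epsilon)$ gap of $2\epsilon$ also makes the strict inequality $v^{\hat\pi}<v^\star-\epsilon$ cleaner than the paper's gap of exactly $\epsilon$.
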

The proof, provided in~\pref{app:proof_of_lower_bound_realizable},
adapts a construction from~\citet{krishnamurthy2016pac} for showing that
value-based realizability is insufficient for model-free algorithms.
The result suggests that we must introduce further structure to obtain
polynomial sample complexity guarantees. We do so with a new
structural complexity measure, the \emph{witness rank}.


For any matrix $B\in\mathbb{R}^{n\times n}$, define
$\textrm{rank}(B,\beta)$ to be the smallest integer $k$ such that $B =
UV^\top$ with $U,V\in\mathbb{R}^{n\times k}$ and for every pair of
rows $u_i,v_j$, we have $\|u_i\|_2\cdot\|v_j\|_2 \leq \beta$. This
generalizes the standard definition of matrix rank, with a condition
on the row norms of the factorization.

\begin{definition}[Witness Rank]
\label{def:refined_rank}
Given a model class $\Mcal$, test functions $\Fcal$, and $\kappa \in
(0,1]$, for $h\in [H]$, define the set of matrices $\Ncal_{\kappa, h}$
  such that any matrix $A\in \Ncal_{\kappa, h}$ satisfies:
\begin{align*}
A\in\mathbb{R}^{\abr{\Mcal}\times\abr{\Mcal}}, \;\; \kappa \Ecal_{B}(M, M', h) \leq A(M, M') \leq \Wcal(M, M', h), \forall M, M' \in \Mcal,
\end{align*}
We define the witness rank
as
\begin{align*}
\wrank(\kappa, \beta, \Mcal,\Fcal, h) \defeq \min_{A\in \Ncal_{\kappa, h}} \text{rank}(A,\beta).
\end{align*}
\end{definition}

We typically suppress the dependence on $\beta$ because it appears
only logarithmically in our sample complexity bounds. Any $\beta$ that
is polynomial in other parameters ($K,H$, and the rank itself)
suffices.

To build intuition for the definition, first consider the extreme
where $A(M,M') = \Wcal(M,M',h)$. The rank of this
matrix corresponds to the number of context distributions required to
verify non-zero witnessed model misfit for all incorrect models. This
follows from the fact that there are at most $\rank(\Wcal)$ linearly
independent \emph{rows} (context distributions), so any non-zero
column (an incorrect model) must have a non-zero in at least one of
these rows. 
Algorithmically, if we can find the policies $\pi_M$
corresponding to these rows, we can eliminate \emph{all} incorrect
models to find $M^\star$ and hence $\pi^\star$.

At the other extreme, we have $A(M,M') = \kappa\Ecal_B(M,M',h)$,
the \emph{Bellman error matrix} 
. The rank of this matrix, called \emph{Bellman rank}, provides an upper bound on the witness rank by construction, and is known to be small for many natural RL settings, including tabular and low-rank MDPs, reactive POMDPs, and reactive PSRs (see Section 2 of~\citet{jiang2017contextual} for details). The minimum over all sandwiched $A$ matrices in the definition of the witness rank allows a smooth interpolation between these extremes in general. We further note that the choice of the class $\Fcal$ defining the IPM also affects the witness model misfit and hence the witness rank. Adapting this class to the problem structure yields another useful knob to control the witness rank, as we show for the running example of factored MDPs in~\pref{sec:factored_MDP}.
%
\subsection{Sample complexity results}

We now present a sample complexity guarantee for~\pref{alg:main_alg}
using the witness rank.  Denote $\wrank_{\kappa} \defeq \max_{h\in
  [H]} \wrank(\kappa, \beta, \Mcal, \Fcal, h)$. The main guarantee is
the following theorem.

\begin{theorem}
\label{thm:refined_guarantee}
Under~\pref{ass:realizable} and~\pref{ass:test_function_realizable}, for any $\epsilon,\delta,\kappa \in (0,1]$, set $\phi = \frac{\kappa\epsilon}{48 H\sqrt{\wrank_\kappa}}$, and denote $T = H \wrank_{\kappa}\log(\beta/2\phi)/\log(5/3)$. Run~\pref{alg:main_alg} with inputs $(\Mcal, \Fcal,n_e, n, \epsilon ,\delta, \phi)$, where $n_e = \Theta\big(H^2 \log(HT/\delta)/\epsilon^2\big)$ and $n = \Theta\big( H^2 K \wrank_\kappa \log(T|\Mcal||\Fcal|/\delta)/(\kappa\epsilon)^2\big)$.
Then with probability at least $1-\delta$,~\pref{alg:main_alg} outputs a policy $\pi$ such that $v^{\pi}\geq v^{\star} - \epsilon$. The number of trajectories collected is at most
$\tilde{O}\left(\frac{H^3  \wrank_{\kappa}^2 K }{\kappa^2\epsilon^2} \log \left(\frac{T\abr{\Fcal}\abr{\Mcal}}{\delta}\right)\right)$.
\end{theorem}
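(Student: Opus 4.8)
The plan is to follow the analysis template of \olive\ (\citet{jiang2017contextual}), substituting witnessed model misfit for average Bellman error, and to organize everything around a single high-probability ``clean event'' plus three invariants of the version space. First I would establish uniform convergence. For the misfit estimates, importance-weighted concentration over the finite class $\Mcal\times\Fcal$ applies: the weights satisfy $\rho^{(n)}\le K$ and $\|f\|_\infty\le 2$, so each summand in \pref{eq:model_err_est_2} is bounded by $O(K)$, and the stated $n$ makes $\tWcal(M^t,M',h_t)$ estimate $\Wcal(M^t,M',h_t)$ uniformly to accuracy $\phi/2$ for all $M'\in\Mcal$. For the value/Bellman estimates, since total reward is at most $1$ the on-policy trajectories of \pref{line:opt} (collected with $\pi^t$, hence $x_h,a_h\sim\pi_{M^t}$, exactly what \pref{eq:empirical_bellman_error} needs) let me reuse the same $n_e$ samples to estimate both $v^{\pi^t}$ and each $\hEcal_B(M^t,M^t,h)$ to accuracy $O(\epsilon/H)$. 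All subsequent reasoning is deterministic conditioned on this event, which holds with probability $1-\delta$ after a union bound over the $T$ rounds.

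On the clean event I would verify three invariants. (i) \emph{$M^\star$ never leaves the version space}: since $\Wcal(M^t,M^\star,h)=0$ identically (the inner conditional coincides with the true one), $\tWcal(M^t,M^\star,h_t)\le\phi/2<\phi$, so $M^\star$ survives \pref{line:eliminate} at every round. (ii) \emph{Termination yields a near-optimal policy}: by optimism in \pref{line:opt} and $M^\star\in\Mcal_{t-1}$ we have $v_{M^t}\ge v^\star$; the test $|\hat v^{\pi^t}-v_{M^t}|\le\epsilon/2$ together with $|\hat v^{\pi^t}-v^{\pi^t}|\le\epsilon/2$ gives $v^{\pi^t}\ge v_{M^t}-\epsilon\ge v^\star-\epsilon$. (iii) \emph{Every non-terminal round eliminates $M^t$}: the telescoping simulation identity $v_{M^t}-v^{\pi^t}=\sum_{h=1}^H\Ecal_B(M^t,M^t,h)$ (valid because $\pi_{M^t}$ is greedy for $Q_{M^t}$) shows that under non-termination $\sum_h\hEcal_B(M^t,M^t,h)>\epsilon/2$, so some $h_t$ with $\hEcal_B(M^t,M^t,h_t)\ge\epsilon/(4H)$ exists and is found in \pref{line:select_h_t}. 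Bellman domination (\pref{ass:test_function_realizable}) then gives $\Wcal(M^t,M^t,h_t)\ge\Ecal_B(M^t,M^t,h_t)=\Omega(\epsilon/H)$, which exceeds $\phi$ after subtracting the estimation error since $\phi\ll\epsilon/H$, so $M^t$ is eliminated.

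The hard part is bounding the number of rounds $T$ by the witness rank through a volumetric argument on the sandwiched matrix. Fix a level $h$, let $A\in\Ncal_{\kappa,h}$ attain $\wrank(\kappa,\beta,\Mcal,\Fcal,h)=:k$; because the rank definition bounds all cross-products $\|u_i\|\|v_j\|\le\beta$, a single global rescaling of the factorization $A=UV^\top$ lets me assume $\|u_i\|,\|v_i\|\le\sqrt\beta$ for the rows $u_i,v_i\in\RR^k$ indexed by $M^{t_i}$, restricted to the subsequence $t_1<\dots<t_m$ of rounds with $h_{t_i}=h$. Invariant (iii) and the lower sandwich give self-witnessing $u_i^\top v_i=A(M^{t_i},M^{t_i})\ge\kappa\,\Ecal_B(M^{t_i},M^{t_i},h)=\Omega(\kappa\epsilon/H)=:\gamma$, while invariant (i)-style survival and the upper sandwich give $u_j^\top v_i=A(M^{t_j},M^{t_i})\le\Wcal(M^{t_j},M^{t_i},h)\le 2\phi$ for every $j<i$ (as $M^{t_i}$ survived round $t_j$). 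I would then track $\Sigma_i=\lambda I+\sum_{j\le i}u_ju_j^\top$ and lower-bound the per-round log-determinant increase: by Cauchy--Schwarz along $v_i$, $u_i^\top\Sigma_{i-1}^{-1}u_i\ge (u_i^\top v_i)^2/(v_i^\top\Sigma_{i-1}v_i)\ge\gamma^2/(\lambda\beta+4m\phi^2)$, where the denominator is controlled \emph{only because} survival caps each $u_j^\top v_i$ by $2\phi$. This is exactly where $\phi=\tfrac{\kappa\epsilon}{48H\sqrt{\wrank_\kappa}}$ is calibrated: the $\sqrt{\wrank_\kappa}$ makes $4m\phi^2=O(k\phi^2)$ negligible against $\gamma^2$, forcing $u_i^\top\Sigma_{i-1}^{-1}u_i\ge 2/3$ and hence $\det\Sigma_i\ge\tfrac53\det\Sigma_{i-1}$. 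Since $\det\Sigma_m\ge(5/3)^m\lambda^k$ while $\|u_i\|^2\le\beta$ caps $\det\Sigma_m\le(\lambda+m\beta/k)^k$, taking logs yields $m\le k\log(\beta/2\phi)/\log(5/3)$, and summing over the $H$ levels gives $T\le H\,\wrank_\kappa\log(\beta/2\phi)/\log(5/3)$.

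Finally I would tally the cost: each round draws $n_e$ trajectories (shared by the value test and the $\hEcal_B$ estimates) plus $n$ trajectories for the misfit estimates, and $n=\tilde\Theta\!\big(H^2K\wrank_\kappa/(\kappa\epsilon)^2\big)$ dominates $n_e=\tilde\Theta(H^2/\epsilon^2)$. Multiplying by $T=\tilde O(H\wrank_\kappa)$ produces the claimed $\tilde O\!\big(H^3\wrank_\kappa^2 K/(\kappa^2\epsilon^2)\cdot\log(T\abr{\Fcal}\abr{\Mcal}/\delta)\big)$ trajectories. I expect the only routine obstacles to be constant-chasing in the concentration steps and in matching the determinant lemma to the exact $\log(5/3)$ factor; the conceptual crux, and the genuinely new ingredient relative to \olive, is the rank-based round count driven by the sandwiched matrix $A$ and the Bellman-domination link between $\Wcal$ and $\Ecal_B$.
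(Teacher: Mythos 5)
Your high-level architecture --- a single clean event plus the three invariants ($M^\star$ survives elimination, termination plus optimism yields near-optimality, non-termination yields a level with large Bellman error and hence, via Bellman domination, a large self-witnessed misfit) --- matches the paper's \pref{lem:terminate_or_explore}, \pref{lem:A_terminate_or_explore} and \pref{lem:iterations_in_A} essentially step for step, and the concentration and bookkeeping are fine. The genuine gap is in the iteration bound. The paper counts rounds by maintaining, for each level $h$, the origin-centered minimum-volume enclosing ellipsoid of $\{\chi_h(M): M\in\tMcal_t\}$ and invoking \pref{lem:volume_shrink}: cutting the \emph{current} ellipsoid with the slab $\{v: |\langle \zeta_{h_t}(M^t), v\rangle| \le 2\phi\}$ shrinks its volume by $3/5$, a factor re-derived at every round from the current ellipsoid and the current witness direction alone, so it never degrades. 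Your log-determinant potential replaces this with $\Sigma_i=\lambda I+\sum_{j\le i}u_ju_j^\top$ and the bound $u_i^\top\Sigma_{i-1}^{-1}u_i\ge\gamma^2/(\lambda\beta+4m\phi^2)$, and this is circular: the denominator accumulates one $(2\phi)^2$ term per \emph{past} round, so it scales with the very round count $m$ you are bounding. With the theorem's $\phi=\kappa\epsilon/(48H\sqrt{\wrank_\kappa})$ and $\gamma=\Theta(\kappa\epsilon/H)$ one gets $4m\phi^2=\Theta(m\gamma^2/\wrank_\kappa)$, which is negligible against $\gamma^2$ only while $m=O(\wrank_\kappa)$ --- but the target is $m\le \wrank_\kappa\log(\beta/2\phi)/\log(5/3)$, which exceeds that whenever the logarithm is large. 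So the step ``$4m\phi^2=O(k\phi^2)$'' presupposes the conclusion, and the claimed per-round $5/3$ determinant growth fails beyond the first $O(\wrank_\kappa)$ rounds.

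This is repairable, but not with the stated parameters: either shrink $\phi$ by an extra $\sqrt{\log(\beta/\phi)}$ factor so that $m\phi^2\lesssim\gamma^2$ holds at the target $m$ (which inflates $n$ by a log factor --- absorbed by $\tilde{O}$ but inconsistent with the explicit settings in the theorem), or accept the degraded growth rate and a bound of order $\wrank_\kappa\log^2(\beta/\phi)$, or switch to the paper's MVEE lemma, which delivers the stated bound with the stated $\phi$. One further point your sum-of-squares denominator needs: survival of $M^{t_i}$ at round $t_j$ only gives the one-sided bound $u_j^\top v_i = A(M^{t_j},M^{t_i})\le\Wcal(M^{t_j},M^{t_i},h)\le 2\phi$, while from below the sandwich only gives $A\ge\kappa\Ecal_B$, which can be negative and is not controlled by the elimination rule; so $(u_j^\top v_i)^2\le 4\phi^2$ requires an additional argument rather than following directly from survival.
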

The proof is included in~\pref{app:proof_of_sample_complexity}. Since,
as we have discussed, many popular RL models admit low Bellman rank
and hence low witness rank,~\pref{thm:refined_guarantee} verifies
that~\pref{alg:main_alg} has polynomial sample complexity in all of
these settings. A noteworthy case that does not have small Bellman
rank but does have small witness rank is the factored MDP, which we
discuss further in~\pref{sec:factored_MDP}.

\paragraph{Comparison with \olive.}
The minimum sample complexity is achieved at
$\inf_{\kappa} \wrank_\kappa/\kappa$, which is never larger than  the
Bellman rank.
In fact
when $\kappa=1$, the sample complexity bounds match in all terms
except (a) we replace Bellman rank with witness rank, and (b) we have
a dependence on model and test-function complexity $\log
(|\Mcal||\Fcal|)$ instead of $Q$-function complexity $\log
|\OP(\Mcal)|$. The witness rank is never larger than the Bellman rank
and it can be substantially smaller, which is favorable for
~\pref{alg:main_alg}. However, we always have $\log |\Mcal| \geq \log
|\OP(\Mcal)|$ and since we require realizability, the model class can
be much larger than the induced $Q$-function class. Thus the two results
are in general incomparable, but for problems where modeling the
environment is not much harder than modeling the optimal $Q$-function
(in other words $\log (|\Mcal| |\Fcal|) \approx \log |\OP(\Mcal)|$),
~\pref{alg:main_alg} can be substantially more sample-efficient than
\olive. 

\paragraph{Adapting to unknown witness rank.} In~\pref{thm:refined_guarantee}, the algorithm needs to know the
value of $\kappa$ and $\wrank_{\kappa}$, 
as they are used to determine $\phi$ and
$n$. In~\pref{app:uknown_rank}, we show that a standard doubling trick
can adapt to unknown $\kappa$ and $\wrank_{\kappa}$. The sample
complexity for this adaptation is given by\\ $\tilde{O}(H^3 \wrank_{\kappa^\star}^2 K /
((\kappa^\star\epsilon)^2)\log(|\Mcal||\Fcal|/\delta))$, where
$\kappa^\star \defeq \arg\min_{\kappa\in (0,1]} \wrank_{\kappa}/ \kappa$
  minimizes the bound in~\pref{thm:refined_guarantee}.
  A similar technique was used to adapt \olive
  to handle unknown Bellman rank.



\paragraph{Extension to infinite $\Mcal$.}
\pref{thm:refined_guarantee} as stated assumes that $\Mcal$ and
$\Fcal$ are finite classes. It is desirable to allow rich classes
$\Mcal$ to have a better chance of satisfying realizability of $M^\star$ in~\pref{ass:realizable}.
Indeed, it is possible to use
standard covering arguments to handle the case of infinite $\Mcal$, 
and we demonstrate this in the context of factored MDPs in \pref{thm:factored_MDP}. 

\paragraph{Extension to infinite $\Fcal$.} While our result also extends to
infinite $\Fcal$ with bounded statistical complexity, 
it is desirable to handle even richer classes, 
for example, 
$\Fcal = \{f: \nbr{f}_{\infty} \leq 1\}$ for the total variation distance, which does not admit uniform convergence. 
To handle such rich
classes, we borrow ideas from the Scheff\'e tournament
of \citet{devroye2012combinatorial},\footnote{The classical Scheff\'e
  tournament targets the following problem: given a set of
  distributions $\{P_i\}_{i=1}^K$ over $\Xcal$, and a set of i.i.d
  samples $\cbr{x_i}_{i=1}^N$ from $P^\star \in \Delta(\Xcal)$,
  approximate the minimizer $\argmin_{i\in [K]} \variation{P_i -
    P^\star}$. } and extend the method to handle conditional
distributions and IPMs induced by an arbitrary class.  The analysis
here covers the total-variation based witnessed model misfit defined
in~\pref{eq:tv_model_error} as a special case.

\begin{theorem}
\label{thm:tv_sample_complexity}
Under~\pref{ass:realizable} and~\pref{ass:test_function_realizable},
but with no restriction on size of $\Fcal$,\footnote{In fact,~\pref{ass:test_function_realizable} holds automatically if we choose $\Fcal = \{f: \|f\|_\infty\le 2\}$.} there exists an algorithm such that:
For any $\epsilon,\delta \in (0,1]$,
with probability at least $1-\delta$ the algorithm outputs a policy
$\pi$ such that $v^\pi \geq v^\star - \epsilon$ with at most
$\tilde{O}\left(\frac{H^3  \wrank_{\kappa}^2 K }{\kappa^2\epsilon^2} \log \left(\frac{T\abr{\Mcal}}{\delta}\right)\right)$ trajectories collected,
where $T = H \wrank_{\kappa}\log(\beta/2\phi)/\log(5/3)$.
\end{theorem}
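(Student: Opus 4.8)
The plan is to modify \pref{alg:main_alg} so that, in place of the intractable supremum over the rich class $\Fcal$ in the estimator~\eqref{eq:model_err_est_2}, it maximizes over a small, data-independent set of \emph{Scheff\'e-type witness functions} extracted from the model class itself. Concretely, for each triple $(M_1, M_2, M_3) \in \Mcal^3$ I define $f_{M_1, M_2, M_3} \in \Fcal$ to be an (approximate) maximizer over $f \in \Fcal$ of the IPM functional $\E_{x \sim \pi_{M_1}, a \sim \pi_{M_2}}[\E_{(r,x')\sim (M_2)_h}[f] - \E_{(r,x')\sim (M_3)_h}[f]]$, and set $\tFcal \defeq \{f_{M_1,M_2,M_3} : M_i \in \Mcal\}$, so that $\tFcal \subseteq \Fcal$ and $|\tFcal| \le |\Mcal|^3$. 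The key observation, generalizing the classical Scheff\'e set of~\citet{devroye2012combinatorial}, is that the optimal witness distinguishing two \emph{known} conditional laws (here $(M_2)_h$ from $(M_3)_h$) depends only on those laws and on the context measure $\pi_{M_1}$, all of which are indexed by models in $\Mcal$, and in particular does not require knowledge of the unknown $M^\star$. For the total-variation case $\Fcal = \{f : \|f\|_\infty \le 2\}$ this witness is the pointwise sign $2\cdot\mathrm{sign}((M_2)_{x,a} - (M_3)_{x,a})$, which is independent of the context measure, so a pair index suffices and $|\tFcal| \le |\Mcal|^2$. I would then run \pref{alg:main_alg} verbatim, but with the estimator~\eqref{eq:model_err_est_2} taking its maximum over $\tFcal$ rather than $\Fcal$; this estimator remains computable, since each $f_{M_1,M_2,M_3}$ and each conditional expectation $\E_{(M')_h}[f]$ involve only known candidate models.

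Next I would establish the two properties that let $\tWcal$, now defined through $\tFcal$, stand in for $\Wcal$ inside the original analysis. For \emph{soundness}, a Bernstein/Hoeffding union bound over the finite set $\tFcal$, using $\|f\|_\infty \le 2$ and the bounded importance weight $\rho^{(n)} \le K$, shows that with high probability $\tWcal(M^t, M', h) \le \Wcal(M^t, M', h) + \phi/2$ uniformly in $M'$, because every $f \in \tFcal \subseteq \Fcal$ yields a signed IPM no larger than the supremum $\Wcal(M^t, M', h)$. Taking $M' = M^\star$ and using $\Wcal(M^t, M^\star, h) = 0$ guarantees the true model survives every elimination step. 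For \emph{completeness}, the maximizer $f_{M^t, M^t, M^\star}$ of the functional defining $\Wcal(M^t, M^t, h)$ lies in $\tFcal$ by construction, so the same concentration bound applied to this single function yields $\tWcal(M^t, M^t, h) \ge \Wcal(M^t, M^t, h) - \phi/2$, and \pref{ass:test_function_realizable} then gives $\tWcal(M^t, M^t, h_t) \ge \Ecal_B(M^t, M^t, h_t) - \phi/2$, which is large on any non-terminal round. Thus the optimistic model is eliminated while $M^\star$ is retained, which are exactly the invariants driving \pref{thm:refined_guarantee}.

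With these two estimates in hand, the remainder of the argument is inherited from the proof of \pref{thm:refined_guarantee} with no change: optimism gives $v_{M^t} \ge v^\star$ on every round; the termination test certifies near-optimality of the returned policy; and on non-terminal rounds the simulation lemma locates a level $h_t$ of large Bellman error, so the elimination rule shrinks the version space. The witness-rank potential argument bounds the number of rounds by $T = H\wrank_\kappa \log(\beta/2\phi)/\log(5/3)$ as before. The only change in the final count is that the per-round uniform-convergence term now depends on $\log|\tFcal| = O(\log|\Mcal|)$ instead of $\log(|\Mcal||\Fcal|)$, which removes the $|\Fcal|$ factor and yields the stated $\tilde O\big(H^3 \wrank_\kappa^2 K \kappa^{-2}\epsilon^{-2}\log(T|\Mcal|/\delta)\big)$ bound.

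I expect the main obstacle to be the completeness step for a general class $\Fcal$: one must argue that the supremum over the \emph{infinite} class $\Fcal$ defining $\Wcal$ is attained, or nearly attained, by a witness that is a function only of the finitely many models specifying the two conditionals and the context measure, so that it provably lands in the finite set $\tFcal$. For total variation this is the transparent Scheff\'e-set fact, but for an arbitrary symmetric $\Fcal$ it requires care to verify that the maximizer is determined by this finite data, handling non-attained suprema via approximate maximizers and absorbing the slack into $\phi$, together with checking that the model-dependent test functions do not inflate the variance of the importance-weighted estimator beyond the $O(K)$ already controlled in \pref{thm:refined_guarantee}.
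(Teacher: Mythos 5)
Your proposal is essentially the paper's own proof: the paper defines a Scheff\'e-type class $\tFcal \defeq \{\pm f_{\pi_{M_3},M_1,M_2,h} : M_1,M_2,M_3\in\Mcal, h\in[H]\}$ of size at most $2|\Mcal|^3 H$, shows (\pref{lem:scheffe}) that $\Wcal(M,M',h;\Fcal)=\Wcal(M,M',h;\tFcal)$, and then obtains \pref{thm:tv_sample_complexity} as a corollary of \pref{thm:refined_guarantee} with $\log|\Fcal|$ replaced by $\log|\tFcal|=O(\log|\Mcal|)$ — exactly your soundness/completeness argument.

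The one point where your construction as written needs repair is the obstacle you yourself flag at the end: you define $f_{M_1,M_2,M_3}$ as the maximizer of an IPM functional whose context marginal is $x\sim\pi_{M_1}$, i.e., a roll-in in the \emph{true environment}. For general $\Fcal$ the maximizer genuinely depends on this marginal, so your $\tFcal$ is not computable from $\Mcal$ alone (and hence does not define an algorithm), even though it is data-independent. The paper's fix is to roll in \emph{inside a candidate model}: $f_{\pi,M_1,M_2,h}$ is the maximizer under $x_h\sim(\pi,M_1)$, i.e., executing $\pi$ in the model $M_1$, with $M_1$ also serving as the reference conditional. This makes $\tFcal$ a pure function of $\Mcal$, and realizability ($M^\star\in\Mcal$) guarantees that the true-environment maximizers $f_{\pi_M,M^\star,M',h}$ — precisely the witnesses attaining $\Wcal(M,M',h;\Fcal)$ — are contained in $\tFcal$, which is all that \pref{lem:scheffe} needs. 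With that substitution (and the symmetrization $\pm f$ so that $\tFcal$ remains symmetric), the rest of your argument matches the paper's.
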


The algorithm modifies~\pref{alg:main_alg} to incorporate
the Scheff\'e estimator instead of the direct empirical estimate for the witnessed model misfit~\pref{eq:model_err_est_2}. We defer
the details of the algorithm
and analysis to~\pref{app:tv_results}.  The
main improvement over~\pref{thm:refined_guarantee} is that the sample
complexity here has no dependence on $\Fcal$, so we may use test
function classes with unbounded statistical complexity.

\section{Case Study on MDPs with Factored Transitions}
\label{sec:factored_MDP}
In this section, we study the running example of factored MDPs in detail.
Recall the definition of factored transition dynamics
in~\pref{eq:factorization}. Following~\citet{kearns1999efficient}, we assume 
$R^\star$ and $\{\parent_i\}$ are known, and $\Mcal$ is the continuous space of \emph{all} models obeying the factored transition structure and with $R^\star$ as the reward function. 
For this setting, we have
the following guarantee.

\begin{theorem}
\label{thm:factored_MDP}
For MDPs with factored transitions and for any $\epsilon,\delta \in
(0,1]$, with probability at least $1-\delta$ a modification of~\pref{alg:main_alg} (\pref{alg:factored_mdp} in~\pref{app:factored_infinite})
  outputs a policy $\pi$ with $v^\pi \geq v^\star - \epsilon$ using at
  most $\tilde{O}(d^2 L^3 H K^2 \log (1/\delta)/\epsilon^2)$
  trajectories.
\end{theorem}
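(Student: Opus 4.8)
The plan is to obtain the bound as a specialization of \pref{thm:refined_guarantee}, or rather of its Scheff\'e-based refinement \pref{thm:tv_sample_complexity} so that we never pay for the statistical complexity of $\Fcal$. Three ingredients are needed: a structured test class $\Fcal$ meeting the Bellman-domination condition of \pref{ass:test_function_realizable}; a bound on the witness rank $\wrank_\kappa$; and a covering-number bound for the infinite model class $\Mcal$. For the test class I would use additively factored functions
\[
\Fcal = \Big\{ (x,a,r,x') \mapsto \tfrac1d \sum_{i=1}^d g_i(x[\parent_i], a, x'[i]) : \|g_i\|_\infty \le 1 \Big\},
\]
(or, to invoke \pref{thm:tv_sample_complexity}, the total-variation ball of such per-factor functions). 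The $1/d$ normalization is forced by the requirement $\|f\|_\infty \le 2$. Because $R^\star$ is known and shared across $\Mcal$, the only source of discrepancy in \pref{eq:model_error} is the factored transition, so for such $f$ the inner difference collapses into a sum over coordinates of terms depending only on $P'^{(i)} - P^{\star(i)}$.

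The domination $\Ecal_B \le \kappa^{-1}\Wcal$ I would establish through a hybrid (telescoping) expansion of the two product measures $P'_{x,a} = \prod_i P'^{(i)}$ and $P^\star_{x,a} = \prod_i P^{\star(i)}$:
\[
\prod_{i=1}^d P'^{(i)} - \prod_{i=1}^d P^{\star(i)} = \sum_{i=1}^d \Big(\prod_{j<i} P'^{(j)}\Big)\,\big(P'^{(i)} - P^{\star(i)}\big)\,\Big(\prod_{j>i} P^{\star(j)}\Big).
\]
Pairing this identity with $V_{M'}$ (recall $\|V_{M'}\|_\infty \le 1$) writes $\Ecal_B(Q_M, Q_{M'}, h)$ as a sum of $d$ terms, the $i$-th being the integral of $P'^{(i)} - P^{\star(i)}$ against a bounded \emph{marginalized} value function of $(x[\parent_i], x'[i])$ --- precisely the shape of the $g_i$ in $\Fcal$. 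Summing the $d$ contributions and absorbing the $1/d$ normalization yields $\Ecal_B(Q_M, Q_{M'}, h) \le d\,\Wcal(M, M', h; \Fcal)$, so \pref{ass:test_function_realizable} holds with $\kappa = \Omega(1/d)$. The point to stress is that this argument never requires $V_{M'}$ or $\pi_{M'}$ to factor: it is exactly where the non-factored optimal value/policy (the obstruction underlying the model-free lower bound of \pref{thm:separation}) is circumvented.

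For the witness rank I would exhibit a low-rank sandwiched matrix $A \in \Ncal_{\kappa,h}$ as in \pref{def:refined_rank}. Replacing the roll-in action $a_h \sim \pi_{M'}$ by a uniform action (and charging the resulting $1/K$ loss to $\kappa$) decouples the action from the state, making the misfit bilinear: its $M$-dependence enters only through the per-factor parent marginals $\mathbb{P}_{x_h \sim \pi_M}[x_h[\parent_i] = c]$, and its $M'$-dependence only through the conditional differences $P'^{(i)}[\cdot \mid c, a] - P^{\star(i)}[\cdot \mid c, a]$. The feature dimension is $\sum_i K|\Ocal|^{1+|\parent_i|} = L/H$, so $\wrank_\kappa = O(L/H)$. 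I expect this decoupling to be the main obstacle: under the original $\pi_{M'}$-action the roll-in distribution and the model difference do not separate, and the naive rank bound is exponential in $d$; the uniform-action modification (the ``modification of \pref{alg:main_alg}'' alluded to in the statement) is what restores the bilinear structure, and one must re-verify that the elimination analysis of \pref{thm:refined_guarantee} still goes through with this modified estimator.

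Finally, to handle the infinite $\Mcal$ I would take an $\alpha$-cover in the per-factor conditional distributions; since the process has $L$ parameters, $\log|\Mcal_\alpha| = \tilde O(L)$, and a standard argument shows that a near-optimal factored model survives elimination (with $\alpha$ chosen polynomially small in the target accuracy). The infinite/total-variation $\Fcal$ is dispatched by the Scheff\'e tournament of \pref{thm:tv_sample_complexity}, removing any $\log|\Fcal|$ term. Plugging $\wrank_\kappa = O(L/H)$, $\kappa^{-1} = \tilde O(d)$ (up to the $K$-factor from the uniform-action reduction) and $\log|\Mcal_\alpha| = \tilde O(L)$ into \pref{thm:tv_sample_complexity} then yields the claimed $\tilde O(d^2 L^3 H K^2 \log(1/\delta)/\epsilon^2)$, with the precise $K$-dependence (importance weighting plus the action reduction) tracked in the appendix.
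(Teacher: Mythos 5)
Your proposal follows the same skeleton as the paper's proof in \pref{app:factored_infinite}: an additively-factored test class whose induced IPM is the sum of per-factor TV distances, a switch to uniformly random actions at step $h_t$ to make the misfit bilinear in (parent marginals of the roll-in) $\times$ (per-factor conditional TV differences) with rank $O(L/H)$, Bellman domination via a hybrid/telescoping expansion of the product measures paired with the non-factored $V_{M'}$, and an $\alpha$-cover of the CPTs with $\log|\Mcal_\alpha|=\tilde O(L)$. The differences are bookkeeping rather than substance. First, the paper does not normalize $\Fcal$ by $1/d$: it takes $\{-1,1\}$-valued per-factor functions, accepts $\|f\|_\infty\le d$, and keeps $\kappa=1/K$, so the factor $d^2$ enters through the range in Hoeffding rather than through $\kappa^{-2}$ as in your version; the two placements give the same final bound. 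Second, because its per-factor functions are boolean, the paper's $\Fcal$ is \emph{finite} with $\log|\Fcal|=L$, so it uses direct uniform convergence and never needs the Scheff\'e tournament that your real-valued $\Gcal_i$ would require (this works because the TV maximizers are sign functions, so nothing is lost by restricting to boolean $g_i$). Third, the paper packages your telescoping step as H\"older plus a separate tensorization lemma ($\variation{\prod_i P'^{(i)}-\prod_i P^{\star(i)}}\le\sum_i\variation{P'^{(i)}-P^{\star(i)}}$), itself proved by the same hybrid argument. The one place you must be careful is the final accounting: plugging $\kappa=1/(Kd)$ literally into \pref{thm:tv_sample_complexity} gives $K^3$, not $K^2$, because that theorem's sample size carries a variance factor of $K$ from importance weighting; since the uniform-action estimator drops the importance weight, that factor disappears, which is exactly how the paper (and, if you track it as you promise, your argument) lands on $K^2$.
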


This result should be contrasted with the $\Omega(2^H)$ lower bound
from~\pref{thm:separation} that actually applies precisely to  this
setting, where the lower bound construction has description length $L$ polynomial in $H$ (see \pref{app:proof_of_separation} for details). Combining the two results we have demonstrated exponential
separation between model-based and model-free algorithms for MDPs with
factored transitions.


Comparing with~\pref{thm:refined_guarantee}, the main improvement is
that we are working with an infinite model class of all possible
factored transition operators. The linear scaling with $H$, which seems to be an improvement, is purely cosmetic as we have $L =
\Omega(H)$ here.
\pref{thm:factored_MDP} involves a slight modification
to~\pref{alg:main_alg}, in that it uses a slightly different notion of witnessed
model misfit,
\begin{align}\label{eq:model_error_factored_mdp}
\Wcal_{F}(M, M', h) = \max_{f\in\Fcal} \,\, \mathop{\E}_{\overset{\scriptstyle x_h\sim \pi_M}{ a_h\sim U(\Acal)}} \sbr{  \mathop{\EE}_{(r,x')\sim M'_h}[f(x_h,a_h,r, x')] - \mathop{\EE}_{(r,x')\sim M^\star_{h}}[f(x_h, a_h, r, x')]}.
\end{align}
together with an $\Fcal$ specially designed for factored MDPs (subscript of $\Wcal_F$ indicates adaptation to factored MDPs).
The main difference with~\pref{eq:tv_model_error} is that $a_h$ is
sampled from $U(\Acal)$ rather than $\pi_{M'}$. This modification is
crucial to obtain a low witness rank, since $\pi_{M'}$ is in general
not guaranteed to be factored (recall the representation hardness discussed at the end of~\pref{sec:separation}). Thanks to uniformly random actions and our choice of $\Fcal$, $\Wcal_F$ essentially computes the sum of the TV-distances across all
factors, and the corresponding matrix naturally factorizes and yields low witness rank. On the other hand, the choice of $\pi_{M'}$ for the general case allows a direct comparison with Bellman rank and leads to better guarantees in general, so we do not use the definition~\eqref{eq:model_error_factored_mdp} more generally. We defer the details of the algorithm and its analysis to \pref{app:factored_infinite}. 
\section{Related Work}
\label{sec:related}


For tabular MDPs, a number of sample-efficient RL approaches
exist, mostly model-based~\citep{kearns2002near, jaksch2010near, dann2015sample,
  szita2010model, azar2017minimax}, but some are model-free~\citep{strehl2006pac, jin2018q}. 
In contrast, our work focuses on more realistic rich-observation
settings.\footnote{In fact, our information-theoretic definition of
  model-free methods (\pref{def:model_free}) is uninteresting in the
  tabular setting.} For factored MDPs, all prior sample-efficient
algorithms are
model-based~\citep{kearns1999efficient,osband2014near}. 
With rich observations, many prior works either focus on structured control settings like LQRs~\citep{abbasi2011regret,dean2018regret} or Lipschitz-continuous MDPs~\citep{kakade2003exploration, ortner2012online, pazis2013pac,
  lakshmanan2015improved}. In LQRs,~\citet{tu2018gap} show a gap between model-based and a particular model-free algorithm, but not an algorithm agnostic lower bound, as we show here for factored MDPs. We expect that our algorithm or natural variants are sample-efficient in many of these specific
settings.

In more abstract settings, most sample-efficient algorithms are
model-free~\citep{wen2013efficient,krishnamurthy2016pac,jiang2017contextual,dann2018polynomial}.
Our work can be seen as a model-based analog to
\citet{jiang2017contextual}, which among the above references, studies
the most general class of environments. 



On the model-based side,~\citet{lattimore2013sample}
and~\citet{osband2014model} obtain sample complexity guarantees; the
former makes no assumptions but the guarantee scales linearly with the
model class size, and the latter makes continuity assumptions, so both
results have more limited scope than ours. \citet{ok2018exploration} propose a complexity measure for structured RL problems, but their results are for asymptotic regret in tabular or Lipschitz MDPs. 

On the empirical side, models are often used to speed up learning~\citep[see e.g.,][for classical references in robotics]{aboaf1989task,deisenroth2011learning}.
Such results provide empirical evidence that models can be
statistically valuable, which complement our theoretical results.

Finally, two recent papers share
some technical similarities to our work.
\citet{farahmand2017value} also use IPMs to detect model error
but their analysis is restricted to test functions that form a ball in
an RKHS, and they do not address exploration issues. \citet{xu2018algorithmic} devise a model-based algorithm with function approximation, 
but their algorithm performs local policy
improvement and cannot find a globally optimal policy in a
sample-efficient manner. 

\section{Discussion}
\label{sec:discussion}
We study model-based RL in general
contextual decision processes. We derive an algorithm for general CDPs
and prove that it has sample complexity upper-bounded by a new
structural notion called the witness rank, which is small in many
settings of interest. Comparing model-based and model-free methods, we
show that the former can be exponentially more sample
efficient in some settings, but they also require stronger
function-approximation capabilities, which can result in worse sample
complexity in other cases. Comparing the guarantees here with those
derived by~\citet{jiang2017contextual} precisely quantifies these
tradeoffs, which we hope guides future design of RL algorithms.


\bibliography{refs}

\appendix

\setlength{\abovedisplayskip}{10pt}
\setlength{\belowdisplayskip}{10pt}

\section{Proof of~\pref{thm:refined_guarantee}}
\label{app:proof_of_sample_complexity}

We first present several lemmas that are useful for proving~\pref{thm:refined_guarantee}.

\begin{fact}
\label{fact:bellman_error_model}
For any two models $ M, {M}'$, the corresponding average Bellman error can be written as
\begin{align}
&\mathcal{E}_{B}({M},{M}',  h) \triangleq  \mathcal{E}_{B}(Q_M, Q_{M'}, h) \nonumber\\
 =& \mathbb{E}_{x_h\sim \pi_M,a_h\sim\pi_{M'}}\sbr{ \mathbb{E}_{(r,x')\sim M'_{h}} \sbr{r+V_{M'}(x')}  - \mathbb{E}_{(r,x')\sim M^\star_{h}}\sbr{r+V_{M'}(x')}  }.
\label{eq:Bellman_error_model_version}
\end{align}
\end{fact}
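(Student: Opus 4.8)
The plan is to start directly from the definition of the average Bellman error in~\pref{eq:bellman_error}, specialized to $Q = Q_M$ and $Q' = Q_{M'}$. Since $\pi_M = \pi_{Q_M}$ and $\pi_{M'} = \pi_{Q_{M'}}$, and the rollout is in the true environment $M^\star$, the definition reads
\begin{align*}
\mathcal{E}_B(Q_M, Q_{M'}, h) = \EE\sbr{ Q_{M'}(x_h, a_h) - r_h - Q_{M'}(x_{h+1}, a_{h+1}) \mid x_h \sim \pi_M,\ a_{h:h+1} \sim \pi_{M'} }.
\end{align*}
The whole argument is then to simplify the first term and the last term separately, so that the reward/value expectations cleanly split into one under the candidate dynamics $M'$ and one under the true dynamics $M^\star$.

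First I would peel off the last term using the tower rule: conditioning on $x_{h+1}$ and taking the expectation over $a_{h+1} \sim \pi_{M'}(\cdot \mid x_{h+1})$ first. Because $\pi_{M'}$ is the greedy policy of $Q_{M'}$, we have $\EE_{a \sim \pi_{M'}}[Q_{M'}(x_{h+1}, a)] = \max_{a'} Q_{M'}(x_{h+1}, a') = V_{M'}(x_{h+1})$, so in expectation $-r_h - Q_{M'}(x_{h+1}, a_{h+1})$ collapses to $-\sbr{r_h + V_{M'}(x_{h+1})}$. Since $(r_h, x_{h+1})$ are generated by the true environment at $(x_h, a_h)$, after conditioning on $(x_h,a_h)$ this contributes exactly $-\EE_{(r,x') \sim M^\star_h}\sbr{r + V_{M'}(x')}$. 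For the first term I would invoke the Bellman consistency equation for the optimal $Q$-function \emph{inside model $M'$}: by definition of $Q_{M'}$ and $V_{M'}$ as optimal quantities of $M'$, $Q_{M'}(x_h, a_h) = \EE_{r \sim R'_{x_h,a_h}}[r] + \EE_{x' \sim P'_{x_h,a_h}}[V_{M'}(x')] = \EE_{(r,x') \sim M'_h}\sbr{r + V_{M'}(x')}$. Combining the two rewrites and taking the outer expectation over $x_h \sim \pi_M,\ a_h \sim \pi_{M'}$ yields~\pref{eq:Bellman_error_model_version}.

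There is no genuine obstacle here beyond careful bookkeeping with the tower rule; the one point I would state precisely is that the two inner expectations are taken under \emph{different} dynamics. The term $\EE_{(r,x') \sim M'_h}$ uses the transition and reward of the candidate model $M'$ — this is exactly where the Bellman equation of $M'$ (rather than of $M^\star$) is used — whereas $\EE_{(r,x') \sim M^\star_h}$ uses the true environment, which is what the rollout actually samples. The only substantive identity is the greedy-policy collapse $\EE_{a\sim\pi_{M'}}[Q_{M'}(x,a)] = V_{M'}(x)$, which is where $\pi_{M'}(x) = \argmax_a Q_{M'}(x,a)$ enters.
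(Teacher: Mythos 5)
Your proposal is correct, and it is exactly the computation the paper implicitly relies on: the paper states this as a \emph{Fact} without proof, and your two steps --- the Bellman optimality equation for $Q_{M'}$ inside model $M'$ to rewrite $Q_{M'}(x_h,a_h)$ as $\EE_{(r,x')\sim M'_h}[r+V_{M'}(x')]$, and the greedy-policy collapse $\EE_{a\sim\pi_{M'}}[Q_{M'}(x,a)]=V_{M'}(x)$ applied under the true dynamics to handle $-r_h-Q_{M'}(x_{h+1},a_{h+1})$ --- are precisely the intended derivation. Your emphasis on which expectation is taken under which dynamics is the right point to be careful about, and the argument is complete.
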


\begin{lemma}
[Lemma 11 of \cite{jiang2017contextual}]
\label{lem:volume_shrink}
Consider a closed and bounded set $V\subset \mathbb{R}^d$ and a vector $p\in\mathbb{R}^d$. Let $B$ be any origin-centered enclosing ellipsoid of $V$. Suppose there exists $v\in V$ such that $p^{\top} v \geq \kappa$ and define $B_{+}$ as the minimum volume enclosing ellipsoid of $\{v\in B: |p^{\top} v | \leq \frac{\kappa}{3\sqrt{d}}\}$. With $\text{vol}(\cdot)$ denoting the (Lebesgue) volume, we have:
\begin{align}
\frac{\text{vol}(B_+)}{\text{vol}(B)} \leq \frac{3}{5}. \nonumber
\end{align}
\end{lemma}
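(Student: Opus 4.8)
The plan is to reduce the arbitrary origin-centered ellipsoid $B$ to the Euclidean unit ball and then carry out an explicit volumetric computation, in the spirit of the classical cutting-plane/ellipsoid argument. First I would exploit affine invariance: since $B$ is an origin-centered ellipsoid, write $B = \{x : x^\top M x \le 1\}$ with $M \succ 0$ and apply the linear map $y = M^{1/2}x$. This sends $B$ to the unit ball $B' = \{y : \|y\|_2 \le 1\}$, sends $V$ into $B'$, turns the linear functional $p^\top x$ into $q^\top y$ with $q = M^{-1/2}p$, and --- crucially --- preserves all volume ratios, since $\mathrm{vol}(B)$ and $\mathrm{vol}(B_+)$ both scale by the same factor $|\det M^{-1/2}|$. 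The point $v\in V$ with $p^\top v \ge \kappa$ maps to $y_0 = M^{1/2}v \in B'$ with $q^\top y_0 \ge \kappa$; since $\|y_0\|_2 \le 1$ this forces $\|q\|_2 \ge \kappa$. Writing $\hat q = q/\|q\|_2$, the transformed slab becomes $\{y \in B' : |\hat q^\top y| \le h\}$ with half-width $h = \frac{\kappa}{3\sqrt d\,\|q\|_2} \le \frac{1}{3\sqrt d}$.

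Next I would determine the minimum-volume enclosing (Löwner--John) ellipsoid of the symmetric convex body $S = \{y : \|y\|_2 \le 1,\ |\hat q^\top y| \le h\}$. Because this ellipsoid is unique and $S$ is invariant under reflection through $\{\hat q^\top y = 0\}$ and under all rotations fixing the $\hat q$-axis, the enclosing ellipsoid inherits these symmetries and must be origin-centered with one semi-axis $a$ along $\hat q$ and a common semi-axis $b$ in the orthogonal complement. Decomposing $y = t\hat q + w$ with $w \perp \hat q$, containment of $S$ in $E = \{t^2/a^2 + \|w\|_2^2/b^2 \le 1\}$ reduces (for candidate ellipsoids with $a \le b$, which include the optimum) to the single constraint at the extreme points $|t| = h$, $\|w\|_2^2 = 1 - h^2$, namely
\begin{align*}
\frac{h^2}{a^2} + \frac{1-h^2}{b^2} \le 1.
\end{align*}

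It then remains to minimize the volume, proportional to $a\,b^{d-1}$, subject to this constraint active at the optimum. A Lagrange-multiplier computation gives $a^2 = d h^2$ and $b^2 = \frac{d(1-h^2)}{d-1}$, and one checks $a \le 1/3 \le b$ so that the extreme-point reduction was valid. Hence
\begin{align*}
\frac{\mathrm{vol}(B_+)}{\mathrm{vol}(B)} = a\,b^{d-1} = h\sqrt d\,\Big(\tfrac{d(1-h^2)}{d-1}\Big)^{\frac{d-1}{2}} \le \tfrac{1}{3}\Big(1 + \tfrac{1}{d-1}\Big)^{\frac{d-1}{2}} \le \tfrac{\sqrt e}{3} < \tfrac{3}{5},
\end{align*}
using $h\sqrt d \le 1/3$, $1 - h^2 \le 1$, and the elementary bound $(1 + \tfrac{1}{d-1})^{(d-1)/2} \le \sqrt e$.

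I expect the only genuinely delicate point to be justifying that the minimum-volume enclosing ellipsoid has this two-parameter symmetric form, which rests on uniqueness of the Löwner--John ellipsoid together with the symmetry group of $S$; everything else is routine constrained optimization. I would also treat the boundary case $d = 1$ separately, where $b^{d-1} = 1$ and the ratio is simply $h \le 1/3$, and I would keep track throughout of the inequality $a \le b$ so that the binding containment constraint is indeed attained at $|t| = h$.
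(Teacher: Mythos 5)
Your proof is correct; the paper itself does not prove this lemma but imports it verbatim as Lemma 11 of \cite{jiang2017contextual}, and your argument --- reduction to the unit ball via the linear map $M^{1/2}$, followed by an explicit L\"owner--John computation for the symmetric slab giving semi-axes $a^2 = dh^2$, $b^2 = d(1-h^2)/(d-1)$ and the bound $\sqrt{e}/3 < 3/5$ --- is essentially the standard ellipsoid-method volume argument used in that source. The only implicit assumptions worth recording are $\kappa > 0$ and nondegeneracy of $B$ (so that $M \succ 0$ and the volume ratio is well defined), both of which hold wherever the paper invokes the lemma.
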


Recall that $V_M, \pi_M$ denote the optimal value function and policy
derived from model $M$, and that $v_M$ denotes $\pi_M$'s value in
$M$. For any policy $\pi$, $v^{\pi}$ denotes the policy $\pi$'s value
in the true environment.
\begin{lemma}
[Simulation Lemma]
\label{lem:simulation_lemma}
Fix a model $M$. Under~\pref{ass:test_function_realizable}, we have
\begin{align*}
v_{M} - v^{\pi_{M}} =  \sum_{h=1}^H \mathcal{E}_{B}( M,  M, h), \qquad \textrm{ and } \qquad
 v_{M} - v^{\pi_{M}} \leq  \sum_{h=1}^H \mathcal{W}( M, M, h).
\end{align*}
\end{lemma}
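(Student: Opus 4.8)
The plan is to prove the equality by a telescoping (performance-difference) argument along a trajectory rolled out by $\pi_M$ in the true environment, and then to obtain the inequality as an immediate consequence of the Bellman domination condition in~\pref{ass:test_function_realizable}.

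First I would roll out $\pi_M$ in $M^\star$ to generate $x_1, a_1, r_1, \ldots, x_H, a_H, r_H, x_{H+1}$ with $x_1 \sim P_0$, $a_h = \pi_M(x_h)$, and $(r_h, x_{h+1}) \sim M^\star_h$. Since all models share the initial distribution $P_0$ and $V_M$ vanishes on the terminal layer $\Xcal_{H+1}$, I can write $v_M = \EE[V_M(x_1)]$ and telescope $V_M(x_1) = \sum_{h=1}^H \left( V_M(x_h) - V_M(x_{h+1}) \right)$. Combining this with $v^{\pi_M} = \EE\left[\sum_{h=1}^H r_h\right]$ gives
\begin{align*}
v_M - v^{\pi_M} = \sum_{h=1}^H \EE\sbr{V_M(x_h) - r_h - V_M(x_{h+1})}.
\end{align*}

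Next I would use that $\pi_M$ is greedy with respect to $Q_M$, so that $V_M(x_h) = Q_M(x_h, a_h)$ and $V_M(x_{h+1}) = Q_M(x_{h+1}, a_{h+1})$ along this trajectory. Each summand then becomes $\EE[Q_M(x_h, a_h) - r_h - Q_M(x_{h+1}, a_{h+1})]$ with $x_h \sim \pi_M$ and $a_{h:h+1} \sim \pi_M$, which is exactly $\mathcal{E}_{B}(M, M, h)$ by the definition in~\pref{eq:bellman_error} (the roll-in policy and the action policy coincide here precisely because $Q = Q' = Q_M$). This establishes the equality. For the inequality, by~\pref{ass:test_function_realizable} we have $\mathcal{W}(M, M, h) \geq \mathcal{E}_{B}(M, M, h)$ for every $h$, so summing over $h$ yields $v_M - v^{\pi_M} = \sum_{h=1}^H \mathcal{E}_{B}(M, M, h) \leq \sum_{h=1}^H \mathcal{W}(M, M, h)$.

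The argument is essentially routine, and I do not anticipate a genuine obstacle. The only points requiring care are (i) that the on-policy state–action distribution produced by the roll-out matches the distribution appearing in the definition of $\mathcal{E}_{B}(M, M, h)$, which holds because both the roll-in and action policies equal $\pi_M$; and (ii) the boundary conventions $V_M \equiv 0$ on $\Xcal_{H+1}$ together with the shared initial distribution, which make the telescoping clean. Overall this is a direct adaptation of the simulation lemma of~\citet{jiang2017contextual}, with the inequality half supplied entrywise by the Bellman domination assumption.
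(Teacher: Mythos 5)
Your proof is correct and follows essentially the same route as the paper: a performance-difference decomposition of $v_M - v^{\pi_M}$ into per-step Bellman errors along the trajectory of $\pi_M$ in $M^\star$, followed by the entrywise Bellman domination from~\pref{ass:test_function_realizable}. The only cosmetic difference is that you telescope $V_M$ directly and use the $Q$-form of $\mathcal{E}_B$ from~\pref{eq:bellman_error}, whereas the paper unrolls $V_M - V^{\pi_M}$ recursively via Bellman equations and the model-form identity of~\pref{fact:bellman_error_model}; the two computations are identical.
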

\begin{proof}
Start at time step $h=1$,
\begin{align*}
&\mathbb{E}_{x_1\sim P_0} [V_{M}(x_1) - V^{\pi_{M}}(x_1)] \\
& =    \mathbb{E}_{x_1\sim P_0, a_1\sim \pi_{M}} \left[\mathbb{E}_{(r, x_2) \sim M_{x_1,a_1}} \left[r +  V_{M}(x_2)\right] - \mathbb{E}_{(r,x_2)\sim M^\star_{ x_1,a_1}}\left[r + V^{\pi_{M}}(x_2) \right]\right]  \\
& =\mathbb{E}_{x_1\sim P_0, a_1\sim \pi_{M}} \left[\mathbb{E}_{(r,x_2)\sim M_{x_1,a_1}} \left[ r + V_{ M}(x_2)\right] - \mathbb{E}_{(r,x_2)\sim M^\star_{x_1,a_1}}\left[ r + V_{M}(x_2)\right] \right] \\
&\;\;\;\;\;\;\; +  \mathbb{E}_{x_1\sim P_0, a_1\sim \pi_{M}}\left[\mathbb{E}_{(r,x_2)\sim M^\star_{x_1,a_1}}\left[ V_{M}(x_2)\right] -\mathbb{E}_{(r,x_2)\sim M^\star_{x_1,a_1}}\left[V^{\pi_{M}}(x_2) \right]\right],
\end{align*}
where the first equality is based on applying Bellman's equation to
$V_M$ in $M$ and $V^{\pi_M}$ in $M^\star$. 
Now,
by~\pref{fact:bellman_error_model}, the first term above is exactly
$\Ecal_B(M,M,1)$.
The second term can be expressed as,
\begin{align*}
\mathbb{E} \left[V_{M}(x_2) - V^{\pi_{M}}(x_2) \vert x_2 \sim \pi_M \right],
\end{align*}
which we can further expand by applying the same argument recursively to obtain the identity involving the average Bellman errors.
For the bound involving the witness model misfit, since $V_M \in \Fcal$, we simply observe that $\Ecal_B(M,M,h) \leq \Wcal(M,M,h)$.
\end{proof}

Next, we present several concentration results.
\begin{lemma}
\label{lem:deviation_hat_v}
Fix a policy $\pi$, and fix $\epsilon,\delta \in (0,1)$. Sample $n_e = \frac{\log(2/\delta)}{(2\epsilon)^2}$ trajectories $\{(x_h^{(i)}, a_h^{(i)}, r_h^{(i)})_{h=1}^H\}_{i=1}^{n_e}$ by executing $\pi$ and set $\hat{v}^{\pi} = \frac{1}{n_e } \sum_{i=1}^{n_e}\sum_{h=1}^H r_h^{(i)}$. With probability at least $1-\delta$, we have $\left\lvert \hat{v}^{\pi} - v^{\pi} \right\rvert \leq \epsilon$. 
\end{lemma}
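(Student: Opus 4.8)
The plan is to recognize $\hat{v}^\pi$ as the empirical mean of $n_e$ i.i.d.\ bounded random variables and then invoke Hoeffding's inequality. First I would define, for each sampled trajectory $i \in [n_e]$, the total return $Y_i \defeq \sum_{h=1}^H r_h^{(i)}$. Since every trajectory is generated by executing the \emph{same} fixed policy $\pi$ in the true environment $M^\star$, the variables $Y_1,\dots,Y_{n_e}$ are independent and identically distributed. By the regularity assumption that almost surely $\sum_{h=1}^H r_h \le 1$ (together with $\Rcal \subseteq [0,1]$, so the per-step rewards are nonnegative), each $Y_i$ takes values in $[0,1]$. This boundedness, with range exactly $1$, is the key structural input: it is what lets us apply a clean concentration bound whose sample size is free of any dependence on the horizon $H$.

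Next I would identify the common mean of the $Y_i$ with the quantity being estimated. By the definition of the value of a policy, $\EE[Y_i] = \EE_{x_1 \sim P_0}\sbr{V^\pi(x_1)} = v^\pi$, and $\hat{v}^\pi = \tfrac{1}{n_e}\sum_{i=1}^{n_e} Y_i$ is precisely the sample average of these i.i.d.\ returns. Applying the two-sided Hoeffding inequality for i.i.d.\ $[0,1]$-valued random variables then gives
\[
\Pr\sbr{\,\abr{\hat{v}^\pi - v^\pi} \ge \epsilon\,} \;\le\; 2\exp\left(-2\, n_e\, \epsilon^2\right).
\]
Substituting the prescribed sample size $n_e$ (of order $\log(1/\delta)/\epsilon^2$) and simplifying makes the right-hand side at most $\delta$, which is exactly the claimed high-probability bound.

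There is no substantive obstacle here: the statement is a direct corollary of Hoeffding's inequality. The only point requiring care is verifying that the per-trajectory return lies in $[0,1]$ so that the range entering Hoeffding is $1$ rather than $H$ — this is where the normalization $\sum_h r_h \le 1$ is essential, and it is what keeps the sample complexity independent of the horizon. The remaining step, matching the multiplicative constant in the stated choice of $n_e$, is routine algebra.
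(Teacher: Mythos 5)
Your proposal is correct and is exactly the paper's argument: the paper's proof is the one-line observation that Hoeffding's inequality applies to the i.i.d.\ per-trajectory returns $\sum_{h=1}^H r_h^{(i)} \in [0,1]$, which is precisely what you spell out. The only quibble is the final "routine algebra": with $n_e = \log(2/\delta)/(4\epsilon^2)$ the two-sided Hoeffding bound $2\exp(-2n_e\epsilon^2)$ evaluates to $\sqrt{2\delta}$ rather than $\delta$ (one needs $n_e = \log(2/\delta)/(2\epsilon^2)$), but this constant-factor slip is in the paper's stated $n_e$ itself and is immaterial since the algorithm sets $n_e$ up to a $\Theta(\cdot)$.
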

The proof is a direct application of Hoeffding's inequality on the random variables $\sum_{h=1}^H r_{h}^{(i)}$. 

Recall the definitions of ${\tWcal}$ and ${\hEcal}_B$
from~\pref{eq:model_err_est_2} and~\pref{eq:empirical_bellman_error},
and the shorthand notation $(r,x') \sim M_h$, which stands for $r \sim R_{x_h, a_h}$ and $x' \sim P_{x_h, a_h}$ (with $(R, P) = M$) whenever the identities of $x_h$ and $a_h$ are clear from context.
\begin{lemma}[Deviation Bound for $\hEcal_{M}$] Fix $h$ and model $M\in\Mcal$. Sample a
 \label{lem:deviation_bound}
dataset\\ $\Dcal = \cbr{(x_h^{(i)},a_h^{(i)},r_h^{(i)}, x_{h+1}^{(i)})}_{i=1}^N$
with $x_h^{(i)}\sim\pi_M,a_h^{(i)}\sim U(\Acal), (r_h^{(i)}, x_{h+1}^{(i)})\sim
M^\star_{h}$ of size $N$. Then with probability at
least $1-\delta$, we have for all $M' \in \Mcal$:
\begin{align*}
\abr{{\tWcal}(M, M', h) - \Wcal(M, M', h)} \leq    \sqrt{\frac{2K\log(2|\Mcal||\Fcal|/\delta)}{N}} + \frac{2K\log(2|\Mcal||\Fcal|/\delta)}{3N}.
\end{align*}
\end{lemma}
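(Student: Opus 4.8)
\emph{The plan} is to reduce the statement to a uniform Bernstein inequality over the finite index set $\Mcal \times \Fcal$, combined with the fact that, for each fixed test function, $\tWcal$ is an unbiased plug-in estimate of the corresponding argument of $\Wcal$. First I would fix a single pair $(M',f)$ and define the i.i.d.\ random variables $Z^{(n)}_{M',f} \defeq \rho^{(n)}\big(\EE_{(r,x')\sim M'_h}[f(x_h^{(n)},a_h^{(n)},r,x')] - f(x_h^{(n)},a_h^{(n)},r_h^{(n)},x_{h+1}^{(n)})\big)$, so that the pre-maximum quantity inside $\tWcal(M,M',h)$ is exactly $\frac1N\sum_{n=1}^N Z^{(n)}_{M',f}$. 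The first substantive step is to check that $\EE[Z^{(n)}_{M',f}]$ equals the argument of the supremum defining $\Wcal(M,M',h)$ at this $f$. This is the importance-weighting identity: because $a_h^{(n)}\sim U(\Acal)$ and $\rho^{(n)}=K\pi_{M'}(a_h^{(n)}\mid x_h^{(n)})$, for any bounded $\phi$ we have $\EE_{a\sim U(\Acal)}[K\pi_{M'}(a\mid x)\phi(a)] = \EE_{a\sim\pi_{M'}}[\phi(a)]$, so the uniform action sampling is converted into sampling from $\pi_{M'}$ and we recover the marginal $x_h\sim\pi_M,\,a_h\sim\pi_{M'}$ appearing in~\pref{eq:model_error}.

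Next I would apply Bernstein's inequality to $\frac1N\sum_n Z^{(n)}_{M',f}$ for this fixed $(M',f)$. Two bounds are needed. For the range, $\rho^{(n)}\in[0,K]$ and, by~\pref{ass:test_function_realizable}, $\|f\|_\infty\le 2$, so $|Z^{(n)}_{M',f}| = O(K)$. For the variance --- which is the crux of the argument and the source of the $\sqrt{K/N}$ (rather than $\sqrt{K^2/N}$) scaling --- I would write out $\EE[(Z^{(n)}_{M',f})^2]$ and use $\EE_{a\sim U(\Acal)}[\rho^2\,\psi(a)] = K\,\EE_x\sum_a \pi_{M'}(a\mid x)^2\psi(a) \le K\,\EE_{a\sim\pi_{M'}}[\psi(a)]$, where the inequality is simply $\pi_{M'}(a\mid x)^2\le\pi_{M'}(a\mid x)$. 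Since the bracketed difference is bounded, this yields a variance of order $K$. Bernstein then gives, for this $(M',f)$, a deviation bounded by $\sqrt{2K\log(2/\delta')/N} + 2K\log(2/\delta')/(3N)$ up to the bookkeeping of constants. Taking a union bound over all $|\Mcal|\cdot|\Fcal|$ pairs with $\delta' = \delta/(|\Mcal||\Fcal|)$ produces the claimed $\log(2|\Mcal||\Fcal|/\delta)$ factor and makes the deviation bound hold simultaneously for all $M'\in\Mcal$ and $f\in\Fcal$.

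Finally I would transfer this per-$f$ control to the maxima. Since $\Fcal$ is finite, $\tWcal(M,M',h)=\max_f \frac1N\sum_n Z^{(n)}_{M',f}$ and $\Wcal(M,M',h)=\max_f \EE[Z^{(n)}_{M',f}]$, and whenever $|A_f-B_f|\le\epsilon_0$ for every $f$ one has $|\max_f A_f - \max_f B_f|\le\epsilon_0$; applying this with the uniform bound from the previous step closes the proof for all $M'$. The main obstacle is the variance estimate: naively the squared importance weight contributes a factor $K^2$, and obtaining the advertised linear-in-$K$ bound relies precisely on the uniform action sampling together with $\pi_{M'}^2\le\pi_{M'}$; the remaining pieces (unbiasedness, range, union bound, and the $\max$-versus-$\sup$ transfer) are routine.
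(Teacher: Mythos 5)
Your proposal is correct and follows essentially the same route as the paper's proof: define the importance-weighted random variables $z_i(M',f)$, verify unbiasedness, bound the variance by $O(K)$ (the paper states $\mathrm{Var}(z_i)\le 4K$ without the derivation you spell out via $\pi_{M'}^2\le\pi_{M'}$), apply Bernstein with a union bound over $\Mcal\times\Fcal$, and transfer the uniform per-$f$ control to the maxima. No gaps.
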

\begin{proof}

Fix $M' \in \Mcal$ and $f\in\Fcal$, define the random variable $z_i(M',f)$ as:
\begin{align*}
z_i(M',f) \defeq K\pi_{M'}(a_h^{(i)}|x_{h}^{(i)})\left( \EE_{(r,x')\sim M'_{h}} f(x_h^{(i)}, a_h^{(i)}, r, x')  - f(x_{h}^{(i)}, a_{h}^{(i)}, r_h^{(i)}, x_{h+1}^{(i)}) \right).
\end{align*}
The expectation of $z_i(M',f)$ is
\begin{align*}
\EE[z_i(M',f)] = \underbrace{\EE_{x_h\sim\pi_M,a_h\sim\pi_{M'}}[\EE_{(r,x')\sim
M'_{h}}[f(x_h,a_h,r, x')] - \EE_{(r,x')\sim M^\star_h}[f(x_h,a_h,r, x')]]}_{\defeq d(M',M^\star,f)}, 
\end{align*}
and it is easy to verify that $\mathrm{Var}(z_i(M',f)) \leq 4K$. Hence, we can
apply Bernstein's inequality, so that with probability at least
$1-\delta$, we have
\begin{align*}
&\abr{\frac{1}{N}\sum_{i=1}^N z_i(M',f)   -  d(M',M^\star,f)} \leq \sqrt{\frac{2K\log(2/\delta)}{N}} + \frac{2K\log(2/\delta)}{3N}.
\end{align*}
Via a union bound over $\Mcal$ and $\Fcal$, we have that for all pairs
$M'\in\Mcal, f\in\Fcal$, with probability at least $1-\delta$:
\begin{align}
\label{eq:concentration_model}
\abr{\frac{1}{N}\sum_{i=1}^N z_i(M',f)   -  d(M',M^\star,f)} \leq \sqrt{\frac{2K\log(2|\Mcal||\Fcal|/\delta)}{N}} + \frac{2K\log(2|\Mcal||\Fcal|/\delta)}{3N}.
\end{align}
For fixed $M'$, we have shown uniform convergence over $\Fcal$, and
this implies that the empirical and the population maxima must be
similarly close, which yields the result.
\end{proof}

\begin{lemma}[Deviation Bound on $\hEcal_B$]
Fix model $M\in\Mcal$. Sample a
dataset $\Dcal = \cbr{(x_h^{(i)},a_h^{(i)},r_h^{(i)}, x_{h+1}^{(i)})}_{i=1}^N$
with $x_h^{(i)}\sim\pi_M,a_h^{(i)}\sim \pi_M, (r_h^{(i)}, x_{h+1}^{(i)})\sim
M^\star_{h}$ of size $N$. Then with probability at least $1-\delta$, for any $h\in [H]$, with probability at least $1-\delta$, we have:
\begin{align*}
\abr{\Ecal_{B}(M, M, h) - {\hEcal}_{B}(M, M, h)} \leq \sqrt{\frac{\log(2H/\delta)}{2N}}.
\end{align*}
\end{lemma}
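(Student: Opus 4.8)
The plan is to recognize $\hEcal_B(M,M,h)$ as an empirical average of $N$ i.i.d.\ bounded random variables whose expectation is exactly $\Ecal_B(M,M,h)$, and then apply a standard concentration inequality followed by a union bound over the $H$ time steps. First I would define, for each sample $i$, the random variable
\begin{align*}
y_i \defeq Q_M(x_h^{(i)}, a_h^{(i)}) - \big(r_h^{(i)} + V_M(x_{h+1}^{(i)})\big),
\end{align*}
so that $\hEcal_B(M,M,h) = \frac{1}{N}\sum_{i=1}^N y_i$. Because the data are generated with $x_h^{(i)}\sim \pi_M$, $a_h^{(i)}\sim \pi_M$, and $(r_h^{(i)}, x_{h+1}^{(i)})\sim M^\star_h$, the samples are i.i.d.\ and, appealing to~\pref{eq:bellman_error} (and the unbiasedness already noted for~\pref{eq:empirical_bellman_error}), we have $\E[y_i] = \Ecal_B(M,M,h)$.

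Next I would establish that each $y_i$ is bounded in an interval of length at most $1$. Under the regularity assumption that $\sum_{h=1}^H r_h \le 1$ almost surely, both $Q_M(x,a)$ and $r + V_M(x')$ lie in $[0,1]$, so each $y_i$ takes values in $[-1,1]$, and in particular $y_i - \E[y_i]$ is supported on an interval of width at most $1$. Applying Hoeffding's inequality to the bounded i.i.d.\ variables $y_i$ then gives, for a fixed $h$, that with probability at least $1-\delta'$,
\begin{align*}
\abr{\Ecal_B(M,M,h) - \hEcal_B(M,M,h)} \leq \sqrt{\frac{\log(2/\delta')}{2N}}.
\end{align*}
Setting $\delta' = \delta/H$ and taking a union bound over all $h \in [H]$ yields the stated bound $\sqrt{\log(2H/\delta)/(2N)}$ uniformly over the time steps, which is exactly the claim.

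The routine part is the concentration argument itself; the only point requiring minor care is the precise range of $y_i$, since whether one gets the width-$1$ interval (and hence the factor of $1/2$ rather than $2$ inside the square root) depends on using the almost-sure bound $\sum_h r_h \le 1$ to control both the value functions and the per-step reward simultaneously, rather than crudely bounding each term in $[0,1]$ separately. I do not anticipate a substantive obstacle here, as the statement follows directly from Hoeffding plus a union bound; the lemma is a standard concentration step supporting the selection of the level $h_t$ in~\pref{line:select_h_t} of~\pref{alg:main_alg}.
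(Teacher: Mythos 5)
Your proof matches the paper's: the paper dispatches this lemma in one line as ``a standard application of Hoeffding's inequality with a union bound over $h\in[H]$,'' which is exactly your argument (unbiasedness of the per-sample term $y_i$, boundedness, Hoeffding, union bound over the $H$ levels). One small slip in your range calculation: from $Q_M\in[0,1]$ and $r+V_M(x')\in[0,1]$ you get $y_i\in[-1,1]$, which is an interval of \emph{width} $2$ (subtracting the mean does not shrink the support), so a literal application of Hoeffding yields an extra factor of $2$ inside the bound --- a constant-level discrepancy that is already implicit in the paper's own stated constant and is immaterial to the downstream use of the lemma.
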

The result involves a standard application of Hoeffding's inequality with a union bound over $h \in [H]$, which can also be found in \cite{jiang2017contextual}.

\begin{lemma}[Terminate or Explore]
\label{lem:terminate_or_explore}
Suppose that for any round $t$, $\hat{v}^{\pi^t}$ satisfies $\abr{v^{\pi^t} - \hat{v}^{\pi^t}} \leq \epsilon/8$ and $M^{\star}$ is never eliminated. Then in any round $t$, one of the following two statements must hold:
\begin{enumerate}
\item The algorithm does not terminate and there exists a $h\in [H]$ such that $\Ecal_{B}(M^t, M^t, h) \geq \frac{3\epsilon}{8H}$;
\item The algorithm terminates and outputs a policy $\pi^t$ which satisfies $v^{\pi^t} \geq v^{\star} - \epsilon$.
\end{enumerate}
\end{lemma}
\begin{proof}
Let us first consider the situation where the algorithm does not terminate, i.e., $|\hat{v}^{\pi^t} - v_{M^t}| \geq \epsilon/2$. Via~\pref{lem:simulation_lemma}, we must have
\begin{align*}
\sum_{h=1}^H \Ecal_{B}(M^t, M^t, h) \geq \abr{ v^{\pi^t} - v_{M^t} }= \abr{v^{\pi^t} - \hat{v}^{\pi^t}  + \hat{v}^{\pi^t} - v_{M^t}}
 \geq  \abr{\hat{v}^{\pi^t} - v_{M^t} }  - \abr{v^{\pi^t} - \hat{v}^{\pi^t}} \geq 3\epsilon/8.
\end{align*}
By the pigeonhole principle, there must exist $h\in[H]$, such that
\begin{align*}
\Ecal_{B}(M^t, M^t, h) \geq \frac{3\epsilon}{8H},
\end{align*}
so we obtain the first claim.  For the second claim, if the algorithm
terminates at round $t$, we must have $|\hat{v}^{\pi^t} -
v_{M^t}| \leq \epsilon/2$.  Based on the assumption that $M^{\star}$
is never eliminated, and $M^t$ is the optimistic model, we may deduce
\begin{align}
v^{\pi^t} \geq \hat{v}^{\pi^t} - \frac{\epsilon}{8} \geq v_{M^t} - \frac{5\epsilon}{8} \geq v^{\star} - \frac{5\epsilon}{8} \geq v^\star - \epsilon. 
\end{align}
\end{proof}

Recall the definition of the \emph{witness rank} (\pref{def:refined_rank}):
\begin{align*}
\wrank(\kappa,\beta, \Mcal, \Fcal, h) = \inf \crl*{ \rank(A) : \kappa\Ecal_B\left(M, M',h \right) \leq A(M,M') \leq \Wcal(M,M', h), \forall M,M'\in \Mcal}.
\end{align*}
Let us denote $A^{\star}_{\kappa,h}$ as the matrix that achieves the
witness rank $\wrank(\kappa, \beta,\Mcal, \Fcal, h)$ at time step
$h$. Denote the factorization by $A^\star_{\kappa,h}(M,M')
= \inner{\zeta_{h}(M)}{\chi_h(M')}$ with
$\zeta_h,\chi_h \in \RR^{\wrank(\kappa,\beta,\Mcal,\Fcal,h)}$. Finally,
recall that $\beta \geq \max_{M,M',
h} \|\zeta_{h}(M)\|_2\| \chi_{h}(M') \|_2$.

\begin{lemma}
\label{lem:A_terminate_or_explore}
Fix round $t$ and assume that $\abr{{\hEcal}_{B}(M^t, M^t, h) - \Ecal_{B}(M^t, M^t, h)} \leq \frac{\epsilon}{8H}$ for all $h \in [H]$ and $\abr{v^{\pi^t} - \hat{v}^{\pi^t}} \leq \epsilon/8$ hold. If~\pref{alg:main_alg} does not terminate, then we must have $A^{\star}_{\kappa,h_t}(M^t, M^t) \geq \frac{\kappa\epsilon}{8H}$.
\end{lemma}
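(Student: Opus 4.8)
The plan is to reduce the claim to a single lower bound on the population Bellman error $\Ecal_B(M^t,M^t,h_t)$ and then read the conclusion off the sandwich inequality that defines $A^\star_{\kappa,h_t}$. Recall that $A^\star_{\kappa,h_t}$ is, by construction, a matrix achieving the witness rank at level $h_t$, so it satisfies $\kappa\,\Ecal_B(M,M',h_t)\le A^\star_{\kappa,h_t}(M,M')\le\Wcal(M,M',h_t)$ for \emph{every} pair $M,M'\in\Mcal$. Since $M^t\in\Mcal_{t-1}\subseteq\Mcal$, instantiating the left-hand inequality at $M=M'=M^t$ already gives $A^\star_{\kappa,h_t}(M^t,M^t)\ge\kappa\,\Ecal_B(M^t,M^t,h_t)$, so it suffices to show $\Ecal_B(M^t,M^t,h_t)\ge\epsilon/(8H)$.

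To do so I would argue in two short steps. First, because the algorithm does not terminate at round $t$, it reaches~\pref{line:select_h_t} and selects $h_t$ with empirical Bellman error $\hEcal_B(M^t,M^t,h_t)\ge\epsilon/(4H)$. Second, the assumed deviation bound $\abr{\hEcal_B(M^t,M^t,h_t)-\Ecal_B(M^t,M^t,h_t)}\le\epsilon/(8H)$ transfers this to the population quantity, yielding $\Ecal_B(M^t,M^t,h_t)\ge\epsilon/(4H)-\epsilon/(8H)=\epsilon/(8H)$. Combining with the previous paragraph gives $A^\star_{\kappa,h_t}(M^t,M^t)\ge\kappa\epsilon/(8H)$, which is the claim.

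The proof is thus a three-line chain, and the only point needing care is the well-definedness of $h_t$, i.e.\ that~\pref{line:select_h_t} actually finds a level whose empirical Bellman error exceeds $\epsilon/(4H)$. This is exactly the non-termination branch of~\pref{lem:terminate_or_explore}: non-termination together with $\abr{v^{\pi^t}-\hat v^{\pi^t}}\le\epsilon/8$ and the simulation lemma~\pref{lem:simulation_lemma} force $v_{M^t}-v^{\pi^t}=\sum_{h}\Ecal_B(M^t,M^t,h)>3\epsilon/8$, so by pigeonhole some level has population Bellman error above $3\epsilon/(8H)$, which the deviation bound converts into an empirical value above $\epsilon/(4H)$. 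I would note that this step implicitly uses that the large Bellman error is \emph{positive}, since~\pref{line:select_h_t} tests the one-sided condition $\hEcal_B\ge\epsilon/(4H)$; positivity is guaranteed by optimism through $v_{M^t}\ge v^\star\ge v^{\pi^t}$, which in turn relies on $M^\star$ never being eliminated, a high-probability event maintained in the surrounding analysis. Granting that, there is no genuine obstacle and the remaining arithmetic is immediate.
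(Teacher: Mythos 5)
Your proof is correct and follows essentially the same route as the paper's: verify that $h_t$ exists via the non-termination branch of~\pref{lem:terminate_or_explore}, transfer the empirical threshold $\hEcal_B(M^t,M^t,h_t)\ge\epsilon/(4H)$ to the population bound $\Ecal_B(M^t,M^t,h_t)\ge\epsilon/(8H)$ using the assumed deviation bound, and conclude from the sandwich constraint $A^\star_{\kappa,h_t}\ge\kappa\,\Ecal_B$. Your explicit remark that the argument implicitly relies on $M^\star$ surviving elimination (so that optimism gives a positive Bellman-error sum) is a hypothesis the paper's lemma statement omits but its proof also uses via~\pref{lem:terminate_or_explore}, so flagging it is a welcome clarification rather than a deviation.
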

\begin{proof}
We first verify the existence of $h_t$ in the selection rule~\pref{line:select_h_t} in~\pref{alg:main_alg}.
From~\pref{lem:terminate_or_explore}, we know that there exists $h\in[H]$ such that $\Ecal_{B}(M^t, M^t, h) \geq \frac{3\epsilon}{8H}$, and for this $h$, we have
\begin{align}
{\hEcal}_{B}(M^t, M^t, h) \geq \frac{3\epsilon}{8H} - \frac{\epsilon}{8H} = \frac{\epsilon}{4H}.
\end{align}
While this $h$ may not be the one selected in~\pref{line:select_h_t},
it verifies that $h_t$ exists, and further we do know that for $h_t$
\begin{align*}
\Ecal_{B}(M^t, M^t, h_t) \geq \frac{2\epsilon}{8H} - \frac{\epsilon}{8H} = \frac{\epsilon}{8H},
\end{align*}
Now the constraints defining $A^\star_{\kappa, h_t}$ give $A^\star_{\kappa, h_t}(M^t,M^t) \geq \kappa \Ecal_{B}(M^t, M^t, h_t)$, which proves the lemma.
\end{proof}

Recall the model elimination criteria at round $t$: $\Mcal_t = \{M\in\Mcal_{t-1}: {\tWcal}(M^t, M, h_t)\leq \phi\}$.
\begin{lemma}
\label{lem:iterations_in_A}
Suppose that $ \abr{ {\tWcal}(M^t, M, h_t) - \Wcal(M^t, M, h_t)} \leq \phi$ holds for all $t, h_t$, and $M \in \Mcal$. Then
\begin{enumerate}
\item $M^\star \in \Mcal_{t}$, for all $t$.
\item  Denote $\tMcal_t = \{M\in \tMcal_{t-1}: A^{\star}_{\kappa, h_t}(M^t, M) \leq 2\phi\}$ with $\tMcal_0 = \Mcal$. We have $\Mcal_t \subseteq \tMcal_t$ for all $t$.
\end{enumerate}
\end{lemma}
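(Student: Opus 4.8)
The plan is to prove both claims simultaneously by induction on the round index $t$, leveraging the concentration hypothesis together with two structural facts: that $\Wcal(M, M^\star, h) = 0$ for every model $M$ and level $h$, and that $A^\star_{\kappa,h} \leq \Wcal$ entrywise by the upper sandwiching constraint in~\pref{def:refined_rank}.

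For the first claim, I would begin by observing that $\Wcal(M^t, M^\star, h_t) = 0$. This is immediate from~\pref{def:witnessed_model_misfit}: taking $M' = M^\star$, the two conditional expectations inside the IPM are both taken with respect to $M^\star_{h}$, so their difference vanishes for every $f \in \Fcal$ and the supremum is zero. Combining this with the assumed bound $\abr{\tWcal(M^t, M^\star, h_t) - \Wcal(M^t, M^\star, h_t)} \leq \phi$ gives $\tWcal(M^t, M^\star, h_t) \leq \phi$, so $M^\star$ always satisfies the elimination criterion on~\pref{line:eliminate}. With base case $M^\star \in \Mcal_0 = \Mcal$ supplied by~\pref{ass:realizable}, the induction then shows $M^\star$ is never removed, and hence $M^\star \in \Mcal_t$ for all $t$.

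For the second claim, I would use the base case $\Mcal_0 \subseteq \tMcal_0$ (both equal $\Mcal$) and, for the inductive step, assume $\Mcal_{t-1} \subseteq \tMcal_{t-1}$ and take any $M \in \Mcal_t$. Then $M \in \Mcal_{t-1} \subseteq \tMcal_{t-1}$, so only the new constraint defining $\tMcal_t$ needs checking. Since $M$ survived elimination, $\tWcal(M^t, M, h_t) \leq \phi$, and the concentration hypothesis then yields $\Wcal(M^t, M, h_t) \leq \tWcal(M^t, M, h_t) + \phi \leq 2\phi$. Applying the upper sandwiching bound $A^\star_{\kappa,h_t}(M^t, M) \leq \Wcal(M^t, M, h_t)$ gives $A^\star_{\kappa,h_t}(M^t, M) \leq 2\phi$, so $M \in \tMcal_t$ as required.

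Neither claim presents a genuine obstacle; the only thing demanding care is the bookkeeping of the nested induction, making sure membership in the earlier version spaces $\Mcal_{t-1}$ and $\tMcal_{t-1}$ is carried forward before the new constraint is imposed. It is worth noting that the threshold $2\phi$ appearing in the definition of $\tMcal_t$ is chosen precisely to absorb the empirical elimination threshold $\phi$ plus the one-sided concentration slack $\phi$, which is exactly what makes the second inclusion close.
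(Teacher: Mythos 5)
Your proof is correct and follows essentially the same route as the paper's: both claims rest on $\Wcal(M^t,M^\star,h_t)=0$ together with realizability for the first part, and on the chain $A^\star_{\kappa,h_t}(M^t,M)\leq \Wcal(M^t,M,h_t)\leq \tWcal(M^t,M,h_t)+\phi\leq 2\phi$ for the second. The only difference is cosmetic — the paper phrases the inductive step of the second claim as a proof by contradiction, whereas you argue the contrapositive directly.
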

Observe ${\tMcal}_t$
is defined via the matrix $A^\star_{\kappa,h}$.
\begin{proof}
Recall that we have $\Wcal(M^t, M^\star, h_t) = 0$. Assuming
$M^\star\in\Mcal_{t-1}$ and via the assumption in the statement, for
every $t$, we have
\begin{align*}
{\tWcal}(M^t, M^\star, h_t) \leq {\Wcal}(M^t, M^\star, h_t) + \phi = \phi,
\end{align*}
so $M^\star$ will not be eliminated at round $t$.

For the second result, we know that $\tMcal_{0} = \Mcal$. Assume
inductively that, we have $\Mcal_{t-1}\subset \tMcal_{t-1}$, and let
us prove that $\Mcal_t \subset \tMcal_t$. Towards a contradiction,
let us assume that there exists $M \in \Mcal_t$ such that
$M \notin \tMcal_t$. Since
$M \in \Mcal_t \subset \Mcal_{t-1} \subset \tMcal_{t-1}$, the update
rule for $\tMcal_t$ implies that
\begin{align*}
A_{\kappa,h_t}^\star(M^t,M) > 2\phi.
\end{align*}
But, using the deviation bound and the definition of $A_{\kappa^\star,h}$, we get
\begin{align*}
{\tWcal}(M^t,M,h_t) \geq \Wcal(M^t,M,h_t) - \phi \geq A_{\kappa,h_t}^\star(M^t,M) - \phi > \phi,
\end{align*}
which contradicts the fact that $M \in \Mcal_t$. Thus, by induction we obtain the result.
\end{proof}

With our choice of $\phi
= \frac{\kappa\epsilon}{48H\sqrt{\wrank_\kappa}}$, we may now quantify
the number of rounds of~\pref{alg:main_alg} using $\tMcal_t$.
\begin{lemma} [Iteration complexity]
 \label{lem:number_of_rounds_to_terminate}
Suppose that
 \begin{align*}
 &  \abr{{\tWcal}(M^t, M, h_t) - \Wcal(M^t, M, h_t) } \leq \phi, \qquad \abr{  {\hEcal}_{B}(M^t, M^t, h) - \Ecal_{B}(M^t, M^t, h)   } \leq \frac{\epsilon}{8H},
 \end{align*}
 hold for all $t$, $h_t$, $h \in [H]$, and $M\in \Mcal$, then the number of rounds of~\pref{alg:main_alg} is at most $H\wrank_{\kappa}\log(\frac{\beta}{2\phi})/\log(5/3)$.
\end{lemma}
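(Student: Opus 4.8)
The plan is to bound the number of rounds by a volume-shrinkage argument applied separately at each level $h\in[H]$, exactly in the style of the \olive analysis. Fix a level $h$ and consider only those rounds $t$ for which \pref{line:select_h_t} sets $h_t=h$. By \pref{def:refined_rank} the rank-minimizing matrix admits a factorization $A^{\star}_{\kappa,h}(M,M')=\inner{\zeta_h(M)}{\chi_h(M')}$ with $\zeta_h(M),\chi_h(M')\in\RR^{d}$, where $d=\wrank(\kappa,\beta,\Mcal,\Fcal,h)\le\wrank_\kappa$ and $\|\zeta_h(M)\|_2\|\chi_h(M')\|_2\le\beta$. I would rescale the factorization so that $\max_M\|\zeta_h(M)\|_2\le 1$, whence every column feature satisfies $\|\chi_h(M)\|_2\le\beta$ and all of them lie in an origin-centered ball of radius $\beta$.

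The two facts feeding the volume argument are already in hand. First, \pref{lem:A_terminate_or_explore} shows that on any non-terminal round $t$ the optimistic model obeys $A^{\star}_{\kappa,h_t}(M^t,M^t)=\inner{\zeta_{h_t}(M^t)}{\chi_{h_t}(M^t)}\ge \gamma$, where $\gamma\defeq\kappa\epsilon/(8H)$; so with $p=\zeta_{h_t}(M^t)$ there is a column feature, namely $\chi_{h_t}(M^t)$, on which $p$ has inner product at least $\gamma$. Second, \pref{lem:iterations_in_A} gives $\Mcal_t\subseteq\tMcal_t$, and every surviving model $M\in\tMcal_t$ satisfies $A^{\star}_{\kappa,h_t}(M^t,M)\le 2\phi$; combined with $\Wcal\ge|\Ecal_B|$ (which follows from symmetry of $\Fcal$) and $\kappa\le 1$ this also yields $A^{\star}_{\kappa,h_t}(M^t,M)\ge-2\phi$, so that $|\inner{p}{\chi_{h_t}(M)}|\le 2\phi$ for all surviving $M$. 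With $\phi=\kappa\epsilon/(48H\sqrt{\wrank_\kappa})$ we get exactly $2\phi=\gamma/(3\sqrt{\wrank_\kappa})$, which matches the slab half-width in \pref{lem:volume_shrink} taken with dimension $d=\wrank_\kappa$.

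I would then track, for level $h$, the minimum-volume origin-centered ellipsoid $B$ enclosing the column features of the currently surviving models. On each round selecting $h$, \pref{lem:volume_shrink} applies with $V$ the surviving features, the vector $p=\zeta_h(M^t)$, and the cut half-width $\gamma/(3\sqrt{\wrank_\kappa})=2\phi$: the witness $\chi_h(M^t)$ certifies the hypothesis $p^\top v\ge\gamma$, while every feature surviving the elimination step in \pref{line:eliminate} lies in the slab and hence in the shrunken ellipsoid $B_+$. Thus each level-$h$ round multiplies the enclosing volume by at most $3/5$. Starting from a ball of radius $\beta$ and using that the enclosing volume cannot fall below the $(2\phi)$-scale at which a cut can still be certified, the number of level-$h$ rounds is at most $\wrank_\kappa\log(\beta/2\phi)/\log(5/3)$. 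Summing over the $H$ levels yields the claimed bound $H\wrank_\kappa\log(\beta/2\phi)/\log(5/3)$.

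The main obstacle is the volume bookkeeping rather than any single inequality. One must (i) verify that the surviving features genuinely fall inside the two-sided slab that \pref{lem:volume_shrink} removes — this is where the sign of the witnessed misfit enters and where I invoke $\Wcal\ge|\Ecal_B|$ to convert the one-sided survival condition into $|A^{\star}_{\kappa,h_t}(M^t,M)|\le 2\phi$; (ii) align the cut width $2\phi$ precisely with $\gamma/(3\sqrt{\wrank_\kappa})$ so the $3/5$ contraction is licensed; and (iii) pin down the initial and terminal volumes to extract the $\log(\beta/2\phi)$ factor with the correct $\wrank_\kappa$ exponent. The per-level decomposition and the conditioning on the concentration events assumed in the hypotheses are otherwise routine.
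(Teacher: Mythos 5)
Your proof follows the paper's argument essentially verbatim: per-level origin-centered MVEEs over the column features $\{\chi_h(M): M\in\tMcal_t\}$, a $3/5$ volume contraction per non-terminal round via \pref{lem:volume_shrink} with cut half-width $2\phi = \gamma/(3\sqrt{\wrank_\kappa})$ certified by \pref{lem:A_terminate_or_explore} and \pref{lem:iterations_in_A}, and initial/terminal volumes of order $\beta^{\wrank_\kappa}$ and $(2\phi)^{\wrank_\kappa}$ yielding $\wrank_\kappa\log(\beta/2\phi)/\log(5/3)$ rounds per level. The one point where you are more careful than the paper is in converting the one-sided survival condition $A^\star_{\kappa,h_t}(M^t,M)\le 2\phi$ into the two-sided slab $|\inner{\zeta_{h_t}(M^t)}{\chi_{h_t}(M)}|\le 2\phi$ that \pref{lem:volume_shrink} formally requires; your route via $\Wcal\ge|\Ecal_B|$ is sound whenever $r+V_{M'}\in\Fcal$ and $\Fcal$ is symmetric, though it is a mild strengthening of \pref{ass:test_function_realizable} as literally stated.
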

\begin{proof}
From ~\pref{lem:A_terminate_or_explore}, if the algorithm does not terminate at round $t$, we find $M^t$ and $h_t$ such that
\begin{align*}
A^{\star}_{\kappa,h_t}(M^t, M^t) = \inner{\zeta_{h_t}(M^t)}{\chi_{h_t}(M^t)} \geq \frac{\kappa\epsilon}{8 H} = 6\sqrt{{\wrank_\kappa}} \phi,
\end{align*}
which uses the value of $\phi = \frac{\kappa\epsilon}{48H\sqrt{\wrank_\kappa}}$.

Recall the recursive definition of $\tMcal_{t} = \{M\in \tMcal_{t-1}:
A^{\star}_{\kappa,h_t}(M^t, M) \leq 2\phi\}$
from~\pref{lem:iterations_in_A}. For the analysis, we maintain and
update $H$ origin-centered ellipsoids where the $h^{\textrm{th}}$
ellipsoid contains the set $\{\chi_h(M): M\in \tMcal_t\}$. Denote
$O^h_t$ as the origin-centered minimum volume enclosing ellipsoid
(MVEE) of $\{\chi_{h}(M): M\in \tMcal_t\}$.  At round $t$, for
$\zeta_{h_t}(M^t)$, we just proved that there exists a vector
$\chi_{h_t}(M^t)\in O^{h_t}_{t-1}$ such that
$\inner{\zeta_{h_t}(M^t)}{\chi_{h_t}(M^t)} \geq
6\sqrt{\wrank_{\kappa}}\phi$.  Denote $O^{h_t}_{t-1, +}$ as the origin-centered MVEE
of $\{v\in
O^{h_t}_{t-1}: \inner{\zeta_{h_t}(M^t)}{v} \leq 2\phi \}$. Based
on~\pref{lem:volume_shrink}, and the fact that $O_t^{h_t} \subset
O_{t-1,+}^{h_t}$, by the definition of $\tMcal_t$, we have:
\begin{align*}
\frac{\text{vol}(O_{t}^{h_t})}{\text{vol}(O_{t-1}^{h_t})} \leq \frac{\text{vol}(O_{t-1,+}^{h_t})}{\text{vol}(O_{t-1}^{h_t})} \leq 3/5,
\end{align*}
which shows that if the algorithm does not terminate, then we shrink the volume of $O_{t}^{h_t}$ by a constant factor.

Denote $\Phi \defeq \sup_{M\in \Mcal, h} \|\zeta_h(M)\|_2$ and
$\Psi \defeq \sup_{M\in\Mcal, h}\|\chi_{h}(M)\|_2$. For $O^h_{0}$, we
have $\text{vol}(O^h_0) \leq c_{\wrank_\kappa} \Psi^{\wrank_{\kappa}}$ where $c_{\wrank_{\kappa}}$ is the volume of the unit Euclidean ball in $\wrank_{\kappa}$-dimensions. For any $t$, we
have
\begin{align*}
O^h_{t} &\supseteq \{q \in \mathbb{R}^{\wrank_{\kappa}}: \max_{p: \|p\|_2 \leq \Phi}  \inner{q}{p} \leq 2\phi  \} = \{q\in \mathbb{R}^{\wrank_{\kappa}}:  \|q\|_2 \leq 2\phi/\Phi\}
\end{align*}
Hence, we must have that at termination, $\text{vol}(O^h_{T}) \geq
c_{\wrank_{\kappa}}(2\phi/\Phi)^{\wrank_{\kappa}}$.  Using the volume
of $O^h_{0}$ and the lower bound of the volume of $O^h_{T}$ and the
fact that every round we shrink the volume of $O^{h_t}_t$ by a
constant factor, we must have that for any $h \in [H]$, the number
of rounds for which $h_t = h$ is at most:
\begin{align}
\wrank_{\kappa}\log(\frac{\Phi\Psi}{2\phi})/\log(5/3).
\end{align}
Using the definition $\beta \geq \Phi\Psi$, this gives an iteration complexity of $H\wrank_{\kappa}\log\rbr{\frac{\beta}{2\phi}}/\log(5/3)$.
\end{proof}

We are now ready to prove~\pref{thm:refined_guarantee}. Note that we are using $A^\star_\kappa$, rather than relying on $\Ecal_B$ or $\Wcal$.

\begin{proof} [Proof of ~\pref{thm:refined_guarantee}]

Below we condition on three events: (1) $\abr{ {\tWcal}(M^t, M, h_t)
- \Wcal(M^t, M, h_t) } \leq \phi$ for all $t$ and $M\in\Mcal$, (2)
$\abr{{\hEcal}_{B}(M^t, M^t, h) - \Ecal_{B}(M^t, M^t, h)
} \leq \frac{\epsilon}{8H}$ for all $t$ and $h\in[H]$, and (3) $\abr{
v^{\pi^t} - \hat{v}^{\pi^t} } \leq \epsilon/8$ for all $t$.

Under the first and second condition, from the lemma above, we know
that the algorithm must terminate in at most $T =
{\wrank_{\kappa}}H \log(\beta/(2\phi))/\log(5/3)$ rounds.  Once the
algorithm terminates, based on~\pref{lem:terminate_or_explore}, we
know that we must have found a policy that is
$\epsilon$-optimal.

Now, we show that with our choices for $n, n_e$, and $\phi$, the above
conditions hold with probability at least $1-\delta$.  Based on value
of $n_e = 32\frac{H^2 \log(6HT/\delta)}{\epsilon^2}$,
and~\pref{lem:deviation_hat_v}, we can verify that the third condition
$|v^{\pi^t} - \hat{v}^{\pi^t}| \leq \epsilon/8$ for all $t\in [T]$
with probability $1-\delta/3$, and the condition
$\abr{\hat{\Ecal}_{B}(M^t, M^t, h) - \Ecal_{B}(M^t, M^t, h)
} \leq \epsilon/(8H)$ holds for all $t\in [T]$ and $h\in [H]$ with
probability at least $1-\delta/3$.  Based on the value of $n = 18432
H^2 K \wrank_{\kappa} \log(12T|\Mcal||\Fcal|/\delta)/(\kappa\epsilon)^2$,
the value of $\phi$, and the deviation bound
from~\pref{lem:deviation_bound}, we can verify that the
condition $\left\lvert {\tWcal}(M^t, M, h_t) - \Wcal(M^t,
M, h_t) \right\rvert \leq \phi$ holds for all $t\in [T]$, $M\in\Mcal$
with probability at least $1-\delta/3$.  Together these ensure the
algorithm terminates in $T$ iterations.
The number trajectories is at most $(n_e + n) \cdot T$, and the
result follows by substitute the value of $n_e$, $n$, and $T$.
\end{proof}

\section{Proof of~\pref{thm:tv_sample_complexity}}
 \label{app:tv_results}

\begin{algorithm}
\begin{algorithmic}[1]
\STATE Compute $\tilde{\Fcal}$ from $\Fcal$ and $\Mcal$ via~\pref{eq:tilde_fcal}
\STATE Set $\phi = \kappa\epsilon/(48H\sqrt{\wrank_{\kappa}})$ and $T = H\wrank_{\kappa}\log(\beta/2\phi)/\log(5/3)$
\STATE Set $n_e = \Theta(H^2\log(6HT/\delta)/\epsilon^2)$ and $n = \Theta(H^2K\wrank_{\kappa}\log(12T|\Mcal||\tilde{\Fcal}|/\delta)/(\kappa^2\epsilon^2))$
\STATE Run~\pref{alg:main_alg} with inputs $(\Mcal,\tilde{\Fcal}, n_e, n, \epsilon, \delta, \phi)$ and return the found policy.
\end{algorithmic}
\caption{Extension to $\Fcal$ with Unbounded Complexity. Arguments: $(\Mcal, \Fcal, \epsilon, \delta, \epsilon)$}
\label{alg:unbounded_Fcal}
\end{algorithm}

We are interested in generalizing~\pref{thm:refined_guarantee} to
accommodate a broader class of test functions $\Fcal$, for example
$\{f:\|f\|_{\infty}\leq 1\}$ that induces the total-variation
distance. This class is not a Glivenko-Cantelli class, so it does not
enable uniform convergence, and we cannot simply use empirical mean
estimator as in~\pref{eq:model_err_est_2}.



The key is to define a much smaller function class $\tFcal \subset
\Fcal$ that does enjoy uniform convergence, and at the same time is
expressive enough such that the witnessed model misfit w.r.t.~$\tFcal$ is the same as that w.r.t.~$\Fcal$. To
define $\tFcal$, we need one new definition. For a model $M$ and a
policy $\pi$, we use $x_h \sim (\pi,M)$ to denote that $x_h$ is
sampled by executing $\pi$ \emph{in the model} $M$, instead of the
true environment, for $h$ steps. With this notation, define $f_{\pi, M_1, M_2, h}$ as:
\begin{align*}
\argmax_{f \in \Fcal} \EE\sbr{\EE_{(r,x_{h+1})\sim M_2}\sbr{f(x_h,a_h,r, x_{h+1})} - \EE_{(r,x_{h+1})\sim M_1}\sbr{f(x_h,a_h, r, x_{h+1})} \mid x_h\sim(\pi,M_1), a_h \sim \pi_{M_2}}.
\end{align*}
Note that the maximum over $\Fcal$ is always attained due to the boundedness assumption on $f \in \Fcal$, and hence this definition is without loss of generality. Now we define
\begin{align} \label{eq:tilde_fcal}
\tFcal \defeq \cbr{\pm f_{\pi_{M_3},M_1,M_2,h} : M_1,M_2,M_3 \in \Mcal, h \in [H] }.
\end{align}

This construction is based on the Scheff\'e estimator, which was
originally developed for density estimation in total
variation~\citep{devroye2012combinatorial}. As we have done here, the
idea is to define a smaller function class containing just the
potential maximizers. Importantly, this
smaller function class is computed \emph{independently} of the data,
so there is no risk of overfitting. The main innovation here is that
we extend the Scheff\'{e} estimator to conditional distributions, and
also to handle arbitrary classes $\Fcal$.

\begin{lemma}
\label{lem:scheffe}
For any true model $M^\star \in \Mcal$, policy $\pi_M$,
$h \in [H]$, and target model $M'$, we have
\begin{align*}
\Wcal(M,M',h;\Fcal) = \Wcal(M,M',h;\tFcal).
\end{align*}
Moreover $|\tFcal| \leq 2|\Mcal|^3 H$.
\end{lemma}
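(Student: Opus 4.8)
The plan is to establish the two inequalities $\Wcal(M,M',h;\tFcal) \le \Wcal(M,M',h;\Fcal)$ and $\Wcal(M,M',h;\Fcal) \le \Wcal(M,M',h;\tFcal)$ separately, after which the cardinality bound is immediate from the definition of $\tFcal$. The first inequality is essentially free: every element of $\tFcal$ has the form $\pm f_{\pi_{M_3},M_1,M_2,h}$, where $f_{\pi_{M_3},M_1,M_2,h}$ is an $\argmax$ over $\Fcal$, and since $\Fcal$ is symmetric (\pref{ass:test_function_realizable}) its negation also lies in $\Fcal$. Hence $\tFcal \subseteq \Fcal$, and passing to a supremum over the smaller set can only decrease the value.

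For the reverse inequality, the key idea is to exhibit a single member of $\tFcal$ that already attains $\Wcal(M,M',h;\Fcal)$. First I would make the marginal over $(x_h,a_h)$ in the objective defining $\Wcal(M,M',h;\Fcal)$ explicit: $x_h$ is drawn by rolling in $\pi_M$ in the true environment $M^\star$ for $h-1$ steps, that is $x_h \sim (\pi_M, M^\star)$, and $a_h \sim \pi_{M'}$. Next I would invoke realizability (\pref{ass:realizable}), which guarantees $M^\star \in \Mcal$. This is the crucial step, because it permits the model we roll in, namely $M^\star$, to appear as one of the enumerated models $M_1$ in the construction~\pref{eq:tilde_fcal}. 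Choosing $M_1 = M^\star$, $M_2 = M'$, and $M_3 = M$, the definition of $f_{\pi_{M},M^\star,M',h}$ matches the objective of $\Wcal(M,M',h;\Fcal)$ term by term: the rolling-in distribution is $x_h \sim (\pi_M, M^\star)$, the action distribution is $a_h \sim \pi_{M'}$, and the integrand is $\EE_{(r,x')\sim M'_h}[f] - \EE_{(r,x')\sim M^\star_h}[f]$. Consequently $f_{\pi_{M},M^\star,M',h}$ is exactly the $\Fcal$-maximizer defining $\Wcal(M,M',h;\Fcal)$, and since it belongs to $\tFcal$ we conclude $\Wcal(M,M',h;\tFcal) \ge \Wcal(M,M',h;\Fcal)$. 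Combining the two inequalities gives equality.

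The cardinality claim follows directly from the enumeration in~\pref{eq:tilde_fcal}: there are $|\Mcal|^3$ choices for the triple $(M_1,M_2,M_3)$, $H$ choices for the level $h$, and a factor of $2$ for the sign, so $|\tFcal| \le 2|\Mcal|^3 H$.

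I expect the only genuinely delicate point to be the bookkeeping in the reverse direction: one must verify that the marginal over $(x_h,a_h)$ used in $\Wcal(M,M',h;\Fcal)$, which rolls in $\pi_M$ in the \emph{true} environment, coincides with the $x_h \sim (\pi_M, M^\star)$ appearing in the construction of $f_{\pi_M,M^\star,M',h}$. This coincidence is precisely what realizability buys us; were $M^\star \notin \Mcal$, the required rolling-in model would not be among the enumerated $M_1$'s and the argument would break.
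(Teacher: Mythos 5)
Your proof is correct and follows essentially the same route as the paper's: realizability puts $M^\star$ among the enumerated roll-in models, so $f_{\pi_M,M^\star,M',h}\in\tFcal$ is exactly the $\Fcal$-maximizer defining $\Wcal(M,M',h;\Fcal)$, while symmetry of $\Fcal$ gives $\tFcal\subseteq\Fcal$ for the other direction. Your write-up is in fact more explicit than the paper's (which leaves the inclusion $\tFcal\subseteq\Fcal$ implicit), and the emphasis on why realizability is the load-bearing assumption is exactly right.
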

\begin{proof}
The bound on $|\tFcal|$ is immediate. For the other claim, by the
realizability assumption for $\Mcal$, $\tFcal$ contains the functions
$f_{\pi_M,M^\star, M',h}$ for each $(M,M',h)$ pair. These are
precisely the test functions that maximize the witness model misfit
for $\Fcal$, and so the IPM induced by $\tFcal$ achieves exactly the
same values.
\end{proof}

Replacing ${\tWcal}(M,M',h)$ in~\pref{eq:model_err_est_2}, which uses
$\Fcal$, to instead use $\tFcal$, we obtain~\pref{alg:unbounded_Fcal}
and~\pref{thm:tv_sample_complexity} as a corollary
to~\pref{thm:refined_guarantee}. The key is that we have eliminated
the dependence on $|\Fcal|$ in the bound.

\section{Lower Bounds and the Separation Result}

\subsection{Proof of~\pref{prop:lower_bound_realizable}}
\label{app:proof_of_lower_bound_realizable}
To prove~\pref{prop:lower_bound_realizable}, we need the following lower bound for best-arm identification in stochastic multi-armed bandits.
\begin{lemma}
[Theorem 2 from \cite{krishnamurthy2016pac}]
For $K\geq 2$, $\epsilon < \sqrt{1/8}$, and any best-arm identification algorithm, there exists a multi-armed bandit problem for which the best arm $i^{\star}$ is $\epsilon$ better than all others, but for which the estimate $\hat{i}$ of the best arm must have $\mathbb{P}[\hat{i}\neq i^{\star}] \geq 1/3$ unless the number of samples collected is at least $K/(72\epsilon^2)$.
\label{lem:lower_bound_MAB}
\end{lemma}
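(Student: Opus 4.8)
The plan is to recognize this as a classical best-arm identification lower bound and prove it by reduction to binary hypothesis testing over an explicit family of Bernoulli bandits, combined with an information-theoretic change-of-measure argument. First I would fix the base means at $1/2$ and construct $K+1$ instances: a reference instance $\nu^0$ in which every arm is $\mathrm{Ber}(1/2)$, and for each $i \in [K]$ an alternative $\nu^i$ in which arm $i$ is $\mathrm{Ber}(1/2+\epsilon)$ while all other arms remain $\mathrm{Ber}(1/2)$. In $\nu^i$ the best arm is $i$ and it is exactly $\epsilon$ better than every other arm, so each $\nu^i$ is a legal instance for the claim; the role of $\nu^0$ is purely as a pivot for the change of measure. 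The hypothesis $\epsilon < \sqrt{1/8}$ guarantees $1/2 + \epsilon < 1$ so these are valid Bernoulli parameters, and more importantly it controls the per-pull information, as I explain next.

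The two technical ingredients I would establish are a per-pull KL bound and a divergence-decomposition identity. For the first, a direct computation gives
\begin{align*}
\KL\big(\mathrm{Ber}(1/2)\,\|\,\mathrm{Ber}(1/2+\epsilon)\big) = -\tfrac12\ln(1-4\epsilon^2) \leq \frac{2\epsilon^2}{1-4\epsilon^2} \leq 4\epsilon^2,
\end{align*}
where the last step uses $4\epsilon^2 \leq 1/2$, which is exactly the hypothesis $\epsilon < \sqrt{1/8}$; this is the only place the upper bound on $\epsilon$ enters. For the second, writing $\mathbb{P}^0$ and $\mathbb{P}^i$ for the laws of the entire interaction transcript (actions and observed rewards) induced by running the fixed algorithm against $\nu^0$ and $\nu^i$, and letting $T_i$ denote the number of pulls of arm $i$, the standard divergence decomposition for bandits gives $\KL(\mathbb{P}^0\|\mathbb{P}^i) = \mathbb{E}_{\nu^0}[T_i]\cdot\KL(\mathrm{Ber}(1/2)\|\mathrm{Ber}(1/2+\epsilon))$, since $\nu^0$ and $\nu^i$ differ only in arm $i$. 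Combining the two yields $\KL(\mathbb{P}^0\|\mathbb{P}^i) \leq 4\epsilon^2\,\mathbb{E}_{\nu^0}[T_i]$.

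Next I would run the counting/change-of-measure step. Suppose, toward a contradiction, that the algorithm uses at most $n < K/(72\epsilon^2)$ samples yet is correct with probability at least $2/3$ on every $\nu^i$. Under $\nu^0$ we have the budget constraint $\sum_i \mathbb{E}_{\nu^0}[T_i] \leq n$ and the output constraint $\sum_i \mathbb{P}^0[\hat i = i] \leq 1$; a pigeonhole/Markov argument then produces an arm $i^\star$ that is simultaneously lightly pulled and rarely output under $\nu^0$. For that arm, the KL bound above is small, so Pinsker's inequality $\|\mathbb{P}^0 - \mathbb{P}^{i^\star}\|_{\mathrm{TV}} \leq \sqrt{\KL(\mathbb{P}^0\|\mathbb{P}^{i^\star})/2}$ forces $\mathbb{P}^{i^\star}[\hat i = i^\star]$ to stay close to $\mathbb{P}^0[\hat i = i^\star]$, which contradicts the required success probability $2/3$ on $\nu^{i^\star}$. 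Tracking the numerical constants through the pigeonhole factor, the $4\epsilon^2$ per-pull bound, and the Pinsker step is exactly what produces the threshold $K/(72\epsilon^2)$ and the residual failure probability $1/3$.

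The main obstacle I anticipate is making the constant uniform in $K$, rather than just getting the right $\Omega(K/\epsilon^2)$ order. The averaging argument over the $K$ alternatives is lossy for small $K$ (the per-arm output budget $1/K$ is close to $1$), so the cleanest route is to handle small $K$ by a direct two-instance comparison: for $K=2$, comparing $\nu^1$ against $\nu^2$ directly (they differ in both arms, contributing $\KL \leq 4n\epsilon^2$ to the transcripts) and using $\|\mathbb{P}^1-\mathbb{P}^2\|_{\mathrm{TV}} \geq \mathbb{P}^1[\hat i=1]-\mathbb{P}^2[\hat i=1] \geq 1/3$ with Pinsker already yields the bound with room to spare, and this extends to any fixed small $K$. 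The bookkeeping to certify that the single constant $72$ covers every $K \geq 2$ is the most delicate part; everything else is the routine two-point lower-bound machinery.
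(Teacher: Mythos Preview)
The paper does not prove this lemma; it is quoted verbatim as Theorem~2 of \cite{krishnamurthy2016pac} and used as a black box in the proof of \pref{prop:lower_bound_realizable}. So there is no in-paper proof to compare against.

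Your proposal follows the standard information-theoretic route for best-arm lower bounds (Bernoulli family with a null reference instance, divergence decomposition, the per-pull bound $\KL(\mathrm{Ber}(1/2)\|\mathrm{Ber}(1/2+\epsilon))\le 4\epsilon^2$ under $\epsilon<\sqrt{1/8}$, then Pinsker plus averaging), and is correct as an outline. The one genuine technical point is exactly the one you flag: the averaging step gives an arm $i^\star$ with $\mathbb{E}_{\nu^0}[T_{i^\star}]\le 2n/K$ and $\mathbb{P}^0[\hat i=i^\star]\le 2/K$, but for $K=2$ the latter bound is vacuous, so the final Pinsker comparison does not close. Your proposed patch of handling small $K$ by a direct two-instance comparison is the right fix; alternatively one can run the argument with a slightly different averaging (e.g., thresholds $3n/K$ and $3/K$, or an explicit sum over all $i$ and averaging the error probabilities) to get a constant that works uniformly. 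Either way the bookkeeping to land exactly on $72$ is fiddly but routine, and since the paper itself simply cites the result, reproducing the precise constant is not essential here.
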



\begin{proof}[Proof of~\pref{prop:lower_bound_realizable}]
Below we explicitly give the construction of $\Mcal$.  Every MDP in
this family shares the same reward function, and actually also shares
the same transition structure for all levels $h \in [H-1]$. The models
only differ in their transition at the last time step.

Fix $H$ and $K\geq 2$. Each MDP $M^{\textbf{a}^\star} \in \Mcal$
corresponds to an action sequence $\textbf{a}^\star = \{a_1^\star,
a_2^\star, \ldots, a_{H-1}^\star\}$ where $a_i^\star \in [K]$. Thus
there are $K^{H-1}$ models. The reward function, which is shared by
all models, is
\begin{align}
R(x) \defeq \one\cbr{x = x^{\star}}
\end{align}
where $x^{\star}$ is a special state that only appears at level $H$.
Let $x'$ denote another special state at level $H$.

For any model $M^{\textbf{a}^{\star}}$, at any level $h < {H-1}$, the
state $x_h$ is simply the history of actions $x_h \triangleq \{a_1,
a_2, \dots a_{h-1}\}$ applied so far, and taking $a\in\Acal$ at state
$x_h$ deterministically transitions to $x_h\circ a \triangleq \{a_1,
a_2, \dots, a_{h-1}, a\}$.
The transition at level $h=H-1$ is defined as follows:
\begin{align}
P^{\textbf{a}^{\star}}( x_{H} | x_{H-1}, a_{H-1}  ) \defeq \begin{cases}
0.5 + \epsilon\one\cbr{x_{H-1}\circ a_{H-1} = \textbf{a}^\star}, & x_{H} = x^{\star} \\
0.5 - \epsilon\one\cbr{x_{H-1}\circ a_{H-1} = \textbf{a}^\star}, & x_{H} = x{'}.
\end{cases}
\end{align}
Thus, in each model $M^{\textbf{a}^\star}$, each action sequence
$\{a_1, a_2, \dots, a_{H-1}\}$ can be regarded as an arm in MAB
problem with $K^{H-1}$ arms, where all the arms yield
$\text{Ber}(0.5)$ reward except for the optimal arm $\textbf{a}^\star$
which yields $\text{Ber}(0.5+\epsilon)$ reward.  In fact, this
construction is information-theoretically equivalent to the
construction used in the standard MAB lower bound, which appears in
the proof of~\pref{lem:lower_bound_MAB}.  That lower bound directly
applies and since we have $K^{H-1}$ arms here, the result follows.
\end{proof}

\subsection{Proof of~\pref{thm:separation}}
\label{app:proof_of_separation}

\pref{thm:separation} has two claims: (1) There exists a family of
MDPs in which \pref{alg:factored_mdp}
achieves polynomial sample complexity, and (2) \emph{Any} model-free algorithm will incur exponential
sample complexity in this family. As we have
discussed, the actual result is stronger in that the model class consists of factored MDPs under a particular structure, and our algorithm can handle any class of factored MDPs with an arbitrary (but known) structure. 

The rest of this subsection is organized as follows: \pref{app:separation_construction} describes the family of MDPs we construct. Since the MDPs obey a factored structure, we can learn this family using our \pref{alg:factored_mdp} and its guarantees in \pref{thm:factored_MDP} immediately applies, which proves the second claim. Then, the first claim is proved in~\pref{app:olive_factored_MDP}, where we leverage the definition of model-free algorithm (\pref{def:model_free}) to induce information-theoretic hardness. 

\begin{figure}[t]
    \centering
    \includegraphics[width=0.6\textwidth]{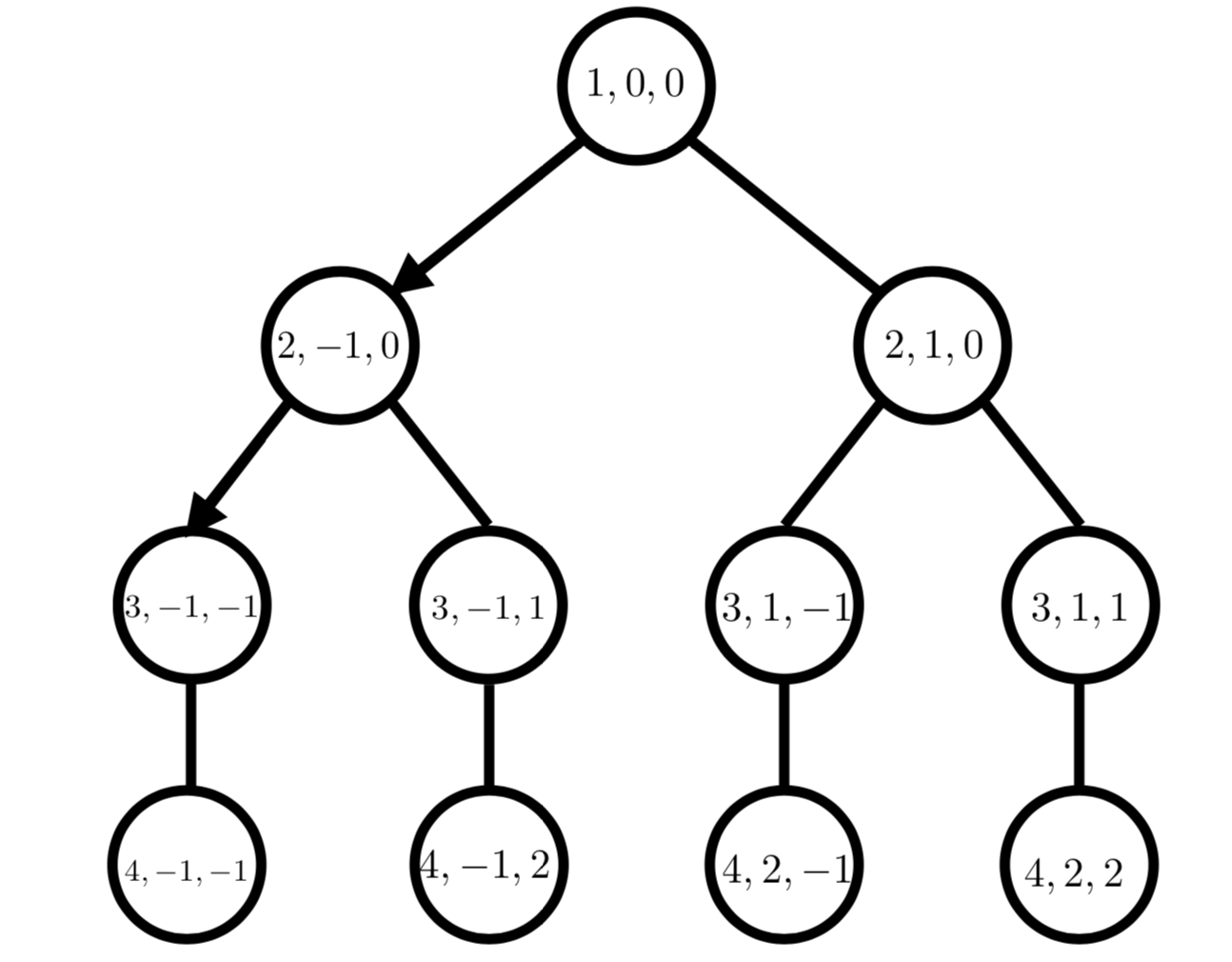}
    \caption{An example of the factored MDP construction in the proof of \pref{thm:separation}, with $d = 2$
      and $H=4$. All models are deterministic, and each model is
      uniquely indexed by a sequence of actions $\bmp$. (Here $\bmp =
      \{-1, -1\}$, as indicated by the black arrows.) The first coordinate in
      each state encodes the level $h$. Each state at level $h \le
      H-1$ encodes the sequence of actions leading to it using bits
      from the second to the last (padded with $0$'s). The last
      transition is designed such that the agent always lands in a
      state that contains ``2'' unless it follows path
      $\bmp$.}
 \label{fig:factored_MDP}
\end{figure}

\subsubsection{Model Class Construction and Sample Efficiency of \pref{alg:factored_mdp}}
\label{app:separation_construction}
\paragraph{Model Class Construction.}
We prove the claim by constructing a family of factored MDPs (recall \pref{eq:factorization}) that share the same reward function $R$ but differ in their transition operators. The set of such transition operators is denoted as $\Pcal$, and we use $P\in\Pcal$ to refer to an MDP instance. 

Fix $d > 2$ and set $H \defeq d+2$.  The state variables take values
in $\Ocal = \{-1,0,1,2\}$. The state space is $\Xcal =
[H]\times\Ocal^d$ with the natural partition across time steps and
the action space is $\Acal = \{-1,+1\}$. The initial state is fixed as
$x=1 \circ [0]^{d}$, where $[a]^d$ stands for a $d$-dimensional vector
where every coordinate is $a$ and $\circ$ denotes concatenation.  Our
model class contains $2^{d}$ models, each of which is uniquely indexed
by an action sequence (or a \emph{path}) of length $d$, $\bmp = \{p_1,
\dots, p_{d}\}$ with $p_i \in \{-1,1\}$. Fixing $\bmp$, we describe
the transition dynamics for $P^{\bmp}$ below. All models share the
same reward function, which will be described afterwards.

In $P^{\bmp}$, the parent of the $i^{\textrm{th}}$ factor is itself so that each
factor evolves independently.  Furthermore, all transitions are
deterministic, so we abuse notation and let $P_h^{\bmp, i}(\cdot,
\cdot)$ denote the deterministic value of the $i^{\textrm{th}}$ factor at time
step $h+1$, as a function of its value at step $h$ and action
$a$. That is, if at time step $h$ we are in state $(h,x_1, \ldots,
x_d)$, upon taking action $a$ we will transition deterministically to
$(h+1,P_h^{\bmp, 1}(x_1, a), \ldots, P_h^{\bmp, d}(x_d, a))$.

Levels $1$ to $H-1$ form a complete binary tree;
see~\pref{fig:factored_MDP} for an illustration. For any layer $h \le
H-2$, 
\begin{align*}
&P^{\bmp, i}_h(v, a) = v, \quad \forall v\in\Ocal, a \in \Acal, \, i\ne h;    \\
&P^{\bmp, i}_h(v, a) = a, \quad \forall v\in\Ocal, a \in \Acal, \, i=h.
\end{align*}
In words, any internal state at level $h \le H-1$ simply encodes the
sequence of actions that leads to it. These transitions \emph{do not}
depend on the planted path $\bmp$ and are identical across all
models. Note that it is not possible to have $x_i=2$ for any $i \in
[d]$, $h \leq H-1$.

Now we define the transition from level $H-1$ to $H$, where each state
only has $1$ action, say $+1$:
\begin{align*}
P^{\bmp, i}_{H-1}(p_i, +1) = p_i, \quad \forall i \in [d], \qquad \textrm{and} \qquad
P^{\bmp, i}_{H-1} (\bar{p_i}, +1) = 2, \quad  i \in [d].
\end{align*}
Here $\bar{p_i}$ is the negation of $p_i$. In words, the state at
level $H$ simply copies the state at level $H-1$, except that the
$i^{\textrm{th}}$ factor will take value $2$ if it disagrees with
$p_i$ (see~\pref{fig:factored_MDP}). Thus, the agent arrives at a
state without the symbol ``2'' at level $H$ only if it follows the action
sequence $\bmp$.

The reward function is shared across all models. Non-zero rewards are
only available at level $H$, where each state only has 1 action.  The
reward is $1$ if $x$ does not contain the symbol ``2" and the reward
is $0$ otherwise. Formally
\begin{align}
R((h,x_1,\ldots,x_d)) \defeq \one\cbr{h = H} \prod_{i=1}^d\one\cbr{x_i \ne 2}. \label{eq:factored_lower_reward}
\end{align}

\paragraph{Sample Efficiency of \pref{alg:factored_mdp}}
For this family of factored MDPs, we have $K=2$ and $d=H-2$. The remaining parameter of interest is $L$, on which we provide a coarse upper bound:  $L \leq d H
|\Acal| |\Ocal|^{2} = O(H^2)$ since $|\parent_i| = 1$ for all $i$ and
$|\Ocal|=4$. Given that our \pref{alg:factored_mdp} works for factored MDPs of any structure, the guarantees in \pref{thm:factored_MDP} immediately applies and we obtain a sample complexity that is polynomial in $H$ and $\log(1/\delta)$. This proves the first claim of \pref{thm:separation}.

\subsubsection{Sample Inefficiency of Model-free Algorithms}
\label{app:factored_MDP_olive}
\label{app:olive_factored_MDP}

We prove the second claim by showing that any model-free
algorithm---that is, any algorithm that always accesses state $x$
exclusively through $[f(x, \cdot)]_{f\in\Gcal}$---will incur
exponential sample complexity when given $\Gcal = \OP(\Pcal)$ as input. 
To show this, we construct another class of non-factored models, such
that (1) learning in this new class is intractable, and (2) the two
families are indistinguishable to any model-free algorithm.  The new
model class is obtained by transforming each $P^{\bmp} \in \Pcal$ into
$\tilde{P}^{\bmp}$. $\tilde{P}^{\bmp}$ has the same state space
and transitions as $P^{\bmp}$, except for the transition from level
$H-1$ to $H$. This last transition is:
\begin{align*}
\tilde{P}^{\bmp}_h((H-1,x_1,\ldots,x_d)) = \left\{
\begin{aligned}
& (H,x_1,\ldots,x_d) & \textrm{ if } x_i = \bmp_i\ \forall i \in [d]\\
& H\circ[2]^d & \textrm{ otherwise.}
\end{aligned}\right.
\end{align*}
The reward function is the same as in the original model class, given
in~\pref{eq:factored_lower_reward}.  This construction is equivalent
to a multi-armed bandit problem with one optimal arm among $2^{H-2}$
arms, so the sample complexity of \emph{any algorithm} (not
necessarily restricted to model-free ones) is
$\Omega(2^H)$.\footnote{Note that the reward function is known and
  non-random, so we do not have any dependence on an accuracy
  parameter $\epsilon$.} In fact this model class is almost identical
to the one used in the proof of~\pref{prop:lower_bound_realizable}.


To prove that the two model families are indistinguishable for
model-free algorithms (\pref{def:model_free}), we show that the
$\Gcal$-profiles in $P^{\bmp}$ are identical to those in
$\tilde{P}^{\bmp}$. This implies that the behavior of a model-free
algorithm is identical in $P^{\bmp}$ and $\tilde{P}^{\bmp}$, so that
the sample complexity must be identical, and hence $\Omega(2^{H})$.


Let $\Mcal = \{P^\bmp\}_{\bmp \in \{-1,1\}^d}$ and $\tMcal = \{\tilde{P}^\bmp\}_{\bmp \in \{-1,1\}^d}$.
Let $\Qcal, \Pi$ to be the $Q$ class and policy classs from $\OP(\Mcal)$,  $\tQcal$ and $\widetilde{\Pi}$ be the policy class from $\OP(\tMcal)$. Since all MDPs of interest have fully deterministic dynamics,
and non-zero rewards only occur at the last step, it suffices to show
that for any deterministic sequence of actions, $\bma$, (1) the final
reward has the same distribution for $P^{\bmp}$ and
$\tilde{P}^{\bmp}$, and (2) the $Q$-profiles $[Q(x_h, \cdot)]_{Q
  \in \Qcal}$ and $[Q(x_h, \cdot)]_{Q \in \tilde\Qcal}$ are equivalent
at all states generated by taking $\bma$ in $P^{\bmp}$ and
$\tilde{P}^{\bmp}$, respectively.\footnote{Since each $\pi\in\Pi$ is just derived from some $Q\in\Qcal$, the equivalence between two $Q$-profiles implies the equivalence between two $\Pi$-profiles, which further implies equivalence in $\Gcal$-profiles.} The reward equivalence is obvious,
so it remains to study the $Q$-profiles.

In $P^{\bmp}$ and at level $H$, since the reward function is shared,
the $Q$-profile is $[1]^{|\Qcal|}$ for the state without ``2" and
$[0]^{|\Qcal|}$ otherwise. Thus, upon taking $\bma=\bmp$ we see the
$Q$-profile $[1]^{|\Qcal|}$ and otherwise we see
$[0]^{|\Qcal|}$. Similarly, in $\tilde{P}^{\bmp}$ the $Q$-profile
is $[0]^{|\tilde{\Qcal}|}$ if the state is $H\circ[2]^d$ and it is
$[1]^{|\tilde{\Qcal}|}$ otherwise. The equivalence here is obvious as
$|\Qcal|=|\tQcal| = 2^d$.

For level $H-1$, no matter the true model path $\bmp$, the $Q^{\bmp'}$
associated with path $\bmp'$ has value $Q^{\bmp'}(\bma, +1) =
\one\cbr{\bma = \bmp'}$ at state $\bma$.  Hence the $Q$-profile at
$\bma$ can be represented as $[\one\cbr{\bma = \bmp'}]_{\bmp' \in
  \{-1, 1\}^d}$, for both $P^\bmp$ and $\tilde{P}^{\bmp}$. Note that
the $Q$-profile does not depend on the true model $\bmp$ because
all models agree on the dynamics before the last step. Similarly, for
$h< H-1$ where each state has two actions $\{-1,1\}$, we have:
\begin{align*}
Q^{\bmp'}(\bma_{1:h-1}, -1) = \one\cbr{\bma_{1:h-1}\circ \text{-1} = \bmp'_{1:h}}, Q^{\bmp'}(\bma_{1:h-1}, 1) = \one\cbr{\bma_{1:h-1}\circ \text{1} = \bmp'_{1:h}}.
\end{align*}
Hence, the
$Q$-profile can be represented as:
\[
[(\one\cbr{\bma_{1:h-1} \circ \text{-1} = \bmp'_{1:h}}, \one\cbr{\bma_{1:h-1} \circ \text{1} = \bmp'_{1:h}})]_{\bmp' \in \{-1, 1\}^d},
\]
again with no difference between $P^\bmp$ and $\tilde{P}^{\bmp}$.
Thus, the model $P^{\bmp}$ and $\tilde{P}^{\bmp}$ induce exactly the
same $Q$-profile for all paths, implying that any model-free
algorithm (in the sense of~\pref{def:model_free}), must behave
identically on both. 
Since the family $\tMcal = \{\tilde{P}^{\bmp}\}_{\bmp}$
admits an information-theoretic sample complexity lower bound of
$\Omega(2^H)$, this same lower bound applies to $\Mcal = \{P^{\bmp}\}_{\bmp}$
for model-free algorithms.

\subsection{Circumventing the Lower Bound via Overparameterization}
\label{app:overparametrization}
In~\pref{thm:separation}, the $\Gcal$-profile resulting from the class
$\Mcal$ (\pref{app:separation_construction}) obfuscates the true
context, a property critical for the separation result.  In
this section, we show that by increasing the expressiveness of the
model class, the induced $\Gcal$-profile could reveal the context and
circument the lower bound. More directly, the lower bound is sensitive
to the choice of $\Gcal$.

This sensitivity raises the question: what is the right choice of
$\Gcal$ for comparing model-based and model-free methods? Since our
construction considers only a small subset of all possible factored
MDPs in correspondence with $\Mcal$, the class $\Gcal = \OP(\Mcal)$ is
the smallest class that guarantees $(Q^\star,\pi^\star) \in
\Gcal$. Therefore, this choice amounts to \emph{proper learning},
which we argue is a natural choice for the purpose of proving lower
bounds. On the other hand, in this section we show that \emph{improper
  learning} or ``overparametrization'' can circumvent the lower bound.

Recall that in~\pref{app:separation_construction}, every model
$M\in\Mcal$ uses the same \emph{true} reward function
from~\pref{eq:factored_lower_reward}.  Here we create a larger
model class $\Mcal'$ and take $\Gcal' = \OP(\Mcal')$. First define a
set of new reward functions:
\begin{align}
R_i^{(-1)}(H, \bmx) \defeq \one\{x_i \neq -1\}, R_i^{(1)}(H, \bmx) \defeq \one\{x_i \neq 1\}, R_i^{(2)}(H, \bmx) \defeq \one\{x_i \neq 2\}~~\forall i \in [d]
\label{eq:new_reward}
\end{align}
We set $\Mcal' = \Mcal \cup \{ (P,R_i^{(j)}): P \in \Pcal, i \in [d],
j \in \{-1,1,2\}\}$.  Namely for every transition structure $P\in
\Pcal$, we pair it with each new reward function.  Note that
$\abr{\Mcal'} = (3d+1)\abr{\Mcal}$.

To circumvent the lower bound, we simply show that with $\Gcal' =
\OP(\Mcal')$, the $\Gcal'$-profile actually reveals the context
$\bmx$. Let us focus on a single coordinate $i \in [d]$, and pick any
transition operator $P \in \Pcal$. Observe that at level $H$ the $Q$
function corresponding to transition operator $P$ is just the
associated reward, and so the $\Gcal$-profile reveals $R_i^{(-1)},
R_i^{(1)}, R_i^{(2)}$. Since we know that $x_i = \{-1,1,2\}$ exactly
one of these will evaluate to zero, allowing us to recover the
$i^{\textrm{th}}$ bit. In particular this allows us to immediately
identify the correct action for time step $h = i$.  Since we can do
this for every $i \in [d]$, using $\Gcal' = \OP(\Mcal')$, we can
easily obtain an algorithm with $O(1)$ sample complexity.

\section{$\Gcal$-profiles in tabular settings}
\label{app:def_value_based}
Here we show that the $\Gcal$-profile yields no information loss in
tabular environments. Thus from the perspective
of~\pref{def:model_free}, model-based and model-free algorithms are
information-theoretically equivalent. 

In tabular settings, the state space $\Xcal$ and action space $\Acal$
are both finite and discrete. It is also standard to use a fully
expressive $Q$-function class, that is $\Qcal = \cbr{Q: \Xcal \times
  \Acal \to [0,1]}$, where the range here arises due to the bounded
reward. We simply set $\Gcal = \Qcal$ here. For each state $x \in \Xcal$ define the function $Q^x$ such
that for all $a \in \Acal$, $Q^x(x',a) = \one\cbr{x = x'}$. Observe
that since $\Qcal$ is fully expressive, we are ensured that  $Q^x \in \Qcal$, $\forall x
\in \Xcal$.

At any state $x' \in \Xcal$, from the $Q$-profile
$\Phi_{\Qcal}(x')$ we can always extract the values $[Q^x(x',a)]_{x
  \in \Xcal}$ for some fixed action $a$. By construction of the $Q^x$
functions, exactly one of these values will be one, while all others
will be zero, and thus we can recover the state $x'$ simply by
examining a few values in $\Phi_{\Qcal}(x')$. In other words, the
mapping $x \mapsto \Phi_{\Qcal}(x)$ is invertible in the tabular case,
and so there is no information lost through the projection. Hence in
tabular setting, one can run classic model-free algorithms such as
${Q}$-learning \citep{watkins1992q} under our definition.

Our definition can also be applied to parameterized $Q$-function class $\Qcal \defeq \{Q(\cdot,\cdot|\theta): \theta\in\Theta \subset \RR^{d}\}$. To perform gradient-based update on the parameter $\theta$, we can use $Q$-profile as follows. Given any state-action pair $(x,a)$, we can approximate $\nabla_{\theta_i} Q(x,a \vert \theta), \forall i \in [d]$, to an arbitrary accuracy, using finite differencing: 
\begin{align*}
\nabla_{\theta_i} Q(x,a \vert \theta) = \lim_{\delta\to 0} \frac{Q(x,a  \vert \theta + \delta e_i) - Q(x,a \vert \theta-\delta e_i)}{2\delta},
\end{align*} where $e_i$ is the vector with zero everywhere except one in the i-th entry, and  $Q(x,a \vert \theta+\delta e_i)$ and $Q(x,a \vert \theta - \delta e_i)$ can be extracted from the $Q$-profile $\Phi_{\Qcal}(x)$. 

The $Q$-profile can also be used to estimate policy gradient on policies induced from the parameterized $Q$ functions. Denote $\Pi_{\Qcal}$ as the policy class induced from $\Qcal$, e.g., $\pi(a|x;\theta) \propto \exp(Q(x,a | \theta))$. Policy gradient method often involves computing the gradient of the log likelihood of the policy (e.g., REINFORCE~\citep{williams1992simple}): $\nabla_{\theta_i} \log (\pi(x|a;\theta)),\forall i\in [d]$, which via chain rule, is determined by $\nabla_{\theta} Q(x,a|\theta)$. Hence, with the finite differencing technique we introduced above for computing $\nabla_{\theta}Q(x,a\vert \theta)$, we can use $Q$-profile to compute $\nabla_{\theta}\log \pi(x|a;\theta)$.

\section{Proof of \pref{thm:factored_MDP}}
\label{app:factored_infinite}

\begin{algorithm}[t]
\begin{algorithmic}[1]
\STATE Run~\pref{alg:main_alg} with $\Fcal$ in \pref{eq:factored_Fcal}, except in~\pref{line:estimate_model_error}, estimate $\tWcal_F(M^t,M',h_t)$ via \pref{eq:What_Wcal_F}.
\end{algorithmic}
\caption{Variant of~\pref{alg:main_alg} for factored MDPs. Arguments: ($\Mcal, n, n_e, \epsilon, \delta, \phi$)}\label{alg:factored_mdp}
\end{algorithm}

Here we prove \pref{thm:factored_MDP}, which states that \pref{alg:factored_mdp} can handle factored MDPs, where $\Mcal$ is the infinite class of all possible factored MDPs under the given structure (i.e., $\{\parent_i\}$ are known). Since the only difference
between two models is their transitions, we use $\Pcal = \{P:
(R^\star, P)\in \Mcal \}$ to represent the model class, and use $P$ and $M$ interchangeably sometimes.

As an input to the algorithm, we supply an $\Fcal$ tailored for factored MDPs that always guarantees Bellman domination (up to a multiplicative constant; see \pref{lem:factored_bellman_domination}). 
In particular,
\begin{align} \label{eq:factored_Fcal}
\Fcal = \{g_1 + \ldots g_d: g_i \in \Gcal_i\},
\end{align}
where each $\Gcal_i = (
\Ocal^{|\parent_i|}\times\Acal \times [H] \times \Ocal \to \{-1, 1\})$. Note that functions in $\Fcal$ operate on $(x,a,r,x')$ and here we are using a slightly incorrect but intuitive notation: $g_i \in \Gcal_i$ takes $(x,a,r,x')$ as input, and only looks at $(x[\parent_i], h, a, x'[i])$ to determine a binary output value, and $\Gcal_i$ is the set of all functions of this form. The IPM induced by $\Fcal$ is the sum of total variation for each factor, and
$$
|\Fcal| = \prod_{i=1}^d 2^{HK|\Ocal|^{1+|\parent_i|} } = 2^L,
$$
so its logarithmic size is polynomial in $L$ and allows uniform convergence. One slightly unusual property of $\Fcal$, compared to how it is used in other results in the main text, is that functions in $\Fcal$ has $\ell_\infty$ norm bounded by $d$ instead of a constant, and this magnitude will be manifested in the sample complexity through concentration bounds.

Besides the specific choice of $\Fcal$, we also need an important change in how we estimate the model misfit $\Wcal_F$ defined in \pref{eq:model_error_factored_mdp}. Since $\Wcal_F$ is defined w.r.t.~uniformly random actions, we change our estimate accordingly by simply dropping the importance weight in \pref{line:estimate_model_error}: Given dataset $\{ (x^{(i)}_h,
a^{(i)}_h,r_h^{(i)}, x^{(i)}_{h+1})\}_{i=1}^n$ generated in \pref{line:gen_data_Wcal} of \pref{alg:main_alg} using roll-in policy $\pi_M$, the new estimator is
\begin{align}
\label{eq:What_Wcal_F}
\widehat{\Wcal}_F(M, M', h) \triangleq \max_{f\in\Fcal} \frac{1}{n}\sum_{i=1}^n \left(  \EE_{(r,x')\sim M'_{h}}[f(x_h^{(i)}, a_h^{(i)}, r, x')] - f(x_h^{(i)},a_h^{(i)}, r_h^{(i)}, x_{h+1}^{(i)})   \right) .
\end{align}

We now state a formal version of \pref{thm:factored_MDP}, which includes the specification of input parameters to \pref{alg:factored_mdp}, and prove it in the remainder of this section.

\begin{theorem}[Formal version of \pref{thm:factored_MDP}]
\label{thm:factored_mdp_full}
Let $M^\star$ be a factored MDP with known structure \pref{eq:factorization}. For any $\epsilon,\delta \in (0,1]$, set $\beta = O(L/K)$, $\kappa = 1/K$, $\wrank_{\kappa,F} = L_h/|\Ocal|$,\footnote{Here we treat $\wrank_{\kappa,F}$ as an algorithm parameter, and its value is an upper bound on the actual witness rank (\ref{corr:factored_refined_rank}).} $\phi = \frac{\kappa\epsilon}{48
H\sqrt{\wrank_{\kappa,F}}}$, and $T =
H \wrank_{\kappa,F}\log(\beta/2\phi)/\log(5/3)$. Run~\pref{alg:factored_mdp} with inputs $(\Mcal, n_e,
n, \epsilon, \delta,\phi)$, where $\Mcal$ is the infinite class of all possible factored MDPs with the given structure, and
\begin{align*}
n_e = \Theta\left( \frac{H^2 \log(HT/\delta)}{\epsilon^2} \right), n
= \Theta\rbr{\frac{d^2 (L\log(\tfrac{dKL}{\epsilon}) + \log(3T/\delta))}{|\Ocal| \epsilon^2}LHK^2},
\end{align*}
then with probability at least $1-\delta$, the algorithm outputs a policy $\pi$ such that $v^{\pi}\geq v^{\star} - \epsilon$, using at most the following number of sample trajectories:
\begin{align*}
\tilde{O}\rbr{\frac{d^2 L^3 H K^2\log(1/\delta)}{\epsilon^2}}.
\end{align*}
\end{theorem}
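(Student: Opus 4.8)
The plan is to reduce the claim to the generic analysis behind \pref{thm:refined_guarantee}, carried out with the three factored substitutions already fixed by the algorithm: the misfit $\Wcal_F$ of \pref{eq:model_error_factored_mdp} in place of $\Wcal$, the tailored test class $\Fcal$ of \pref{eq:factored_Fcal}, and the importance-weight-free estimator $\widehat{\Wcal}_F$ of \pref{eq:What_Wcal_F}. The termination, exploration, and ellipsoid lemmas of \pref{app:proof_of_sample_complexity} (\pref{lem:terminate_or_explore} through \pref{lem:number_of_rounds_to_terminate}) are driven only by the exact identity $v_M - v^{\pi_M} = \sum_h \Ecal_B(M,M,h)$ (\pref{lem:simulation_lemma}) together with a sandwiched matrix $A$ obeying $\kappa\Ecal_B \le A \le \Wcal_F$; so it suffices to establish (i) Bellman domination $\tfrac{1}{K}\Ecal_B \le \Wcal_F$, (ii) a factorization of $\Wcal_F$ of rank $L_h/|\Ocal|$, and (iii) a deviation bound $|\widehat{\Wcal}_F - \Wcal_F| \le \phi$ valid uniformly over the infinite class $\Pcal$. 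With these in hand I take $A = \Wcal_F$, $\kappa = 1/K$, and quote the existing iteration-count and correctness arguments essentially verbatim.

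For step (i) (\pref{lem:factored_bellman_domination}) I start from the model form of the Bellman error in \pref{fact:bellman_error_model}; since the reward is shared and known it cancels, leaving $\Ecal_B(M,M',h) = \EE_{x_h\sim\pi_M, a_h\sim\pi_{M'}}[\EE_{x'\sim P'}V_{M'}(x') - \EE_{x'\sim P^\star}V_{M'}(x')]$. Hölder's inequality with $\nbr{V_{M'}}_\infty \le 1$, the crude bound $\pi_{M'}(a\mid x)\le 1$, and a change of the action law from $\pi_{M'}$ to $U(\Acal)$ (costing a factor $K$) bound this by $K$ times the uniform-action \emph{joint} total-variation misfit. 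A tensorization inequality for product measures (joint TV $\le$ sum of per-coordinate TVs) then upgrades the joint misfit to the factored one, i.e.\ to $\Wcal_F$, giving $\tfrac{1}{K}\Ecal_B\le\Wcal_F$ and hence $\kappa = 1/K$.

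For step (ii) (\pref{corr:factored_refined_rank}), because $\Fcal$ consists of sums $g_1+\dots+g_d$ of per-factor $\{-1,1\}$-valued functions and the dynamics factorize, the supremum in $\Wcal_F$ decouples across coordinates and equals $\sum_{i}\EE_{x_h\sim\pi_M,\,a\sim U(\Acal)}\variation{P'^{(i)}(\cdot\mid x_h[\parent_i],a) - P^{\star,(i)}(\cdot\mid x_h[\parent_i],a)}$. Conditioning the roll-in marginal on $x_h[\parent_i]$ exposes an inner product $\Wcal_F(M,M',h)=\inner{\zeta_h(M)}{\chi_h(M')}$, where $\zeta_h(M)$ collects, divided by $K$, the roll-in probabilities of the events $\{x_h[\parent_i]=u\}$ under $\pi_M$, and $\chi_h(M')$ collects the per-factor TV distances to the truth, both indexed by $(i,a,u)$ with $u\in\Ocal^{|\parent_i|}$. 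The inner dimension is $\sum_i K|\Ocal|^{|\parent_i|} = L_h/|\Ocal|$ and the product of row norms is $O(L/K)=\beta$; since $\Wcal_F$ is itself a legal choice of $A$ by step (i), \pref{def:refined_rank} yields $\wrank(1/K,\beta,\Pcal,\Fcal,h)\le L_h/|\Ocal|$.

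Step (iii) is the main obstacle, since the infinite class and the heavier test functions must be controlled at once. Here $\widehat{\Wcal}_F$ carries no importance weight (the actions are already uniform), $\log|\Fcal| = L$, and every $f\in\Fcal$ has $\nbr{f}_\infty \le d$, so each per-sample summand has range $O(d)$ and a Hoeffding bound gives deviation $O(d\sqrt{\log(\cdot)/n})$. To pass from finite to continuous $\Pcal$ I cover $\Pcal$ in $\ell_\infty$; the shared factorization makes its log-covering number $O(L\log(1/\epsilon))$, and because every functional used is an expectation (linear in the transition) the cover preserves all estimates up to $\epsilon$-error. Union-bounding over the cover and the $T$ rounds, then equating the right-hand side to $\phi = \tfrac{\kappa\epsilon}{48H\sqrt{\wrank_{\kappa,F}}}$, produces the stated $n$, with the $d^2$ arising from the range-$d$ functions and the $K^2$ from $\kappa=1/K$ inside $\phi$. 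Finally, combining this event with the standard value and Bellman-error concentration (via $n_e$ and the $\hEcal_B$ bound) and invoking the Appendix~A lemmas with $A=\Wcal_F$ shows the algorithm halts within $T = H\wrank_{\kappa,F}\log(\beta/2\phi)/\log(5/3)$ rounds at an $\epsilon$-optimal policy, while $M^\star$ (which lies in $\Pcal$ by \pref{ass:realizable}) is never eliminated since $\Wcal_F(M^t,M^\star,h)=0$. The total trajectory count is $(n+n_e)T$, which after substituting $\kappa=1/K$, $\wrank_{\kappa,F}=L/|\Ocal|$, and $L=\sum_i HK|\Ocal|^{1+|\parent_i|}$ collapses to $\tilde O(d^2L^3HK^2\log(1/\delta)/\epsilon^2)$.
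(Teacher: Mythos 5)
Your proposal is correct and follows essentially the same route as the paper: Bellman domination with $\kappa=1/K$ via H\"older, the uniform-action switch, and TV tensorization; the rank-$L_h/|\Ocal|$ factorization indexed by $(i,a,u)$ with $\beta=O(L/K)$; and uniform convergence over the infinite class via an $\ell_\infty$ cover of the CPTs of log-size $O(L\log(1/\alpha))$ combined with a Hoeffding bound for range-$d$ test functions and a union bound over the $2^L$-sized $\Fcal$. The only cosmetic difference is that the paper fixes the cover resolution explicitly at $\alpha=\phi/(4d|\Ocal|)$ to make the perturbation term at most $\phi/2$, whereas you leave it implicit.
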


\subsection{Concentration Result}
\subsubsection{Cover construction}
We prove uniform convergence by discretizing the CPTs in facotred MDPs and constructing a cover of $\Pcal$. Let $\alpha \in (0, 1)$ be the discretization resolution, whose precise value will be set later. For convenience we also assume that $2/\alpha$ is an odd integer. Recall that a factored MDP is fully specified by the CPTs:
$$
\{ P^{(i)}[o \,|\, x[\parent_i], a, h]: o\in\Ocal, x[\parent_i] \in \Ocal^{|\parent_i|}, a\in\Acal, h\in[H] \}_{i=1}^d.
$$
Since each of these probabilities takes value in $[0, 1]$, we start with an improper cover of $\Pcal$ by discretizing this range and considering cover centers $\{\alpha/2, 3\alpha/2, 5\alpha/2, \ldots, 1 - \alpha/2\}$ for each $o\in\Ocal, x[\parent_i] \in \Ocal^{|\parent_i|}, a\in\Acal, h\in[H], i\in[d]$. Note that any number in $[0, 1]$ will be $(\alpha/2)$-close to one of these $(1/\alpha)$ values.
Altogether the discretization yields
$$
\prod_{i=1}^d (1/\alpha)^{HK|\Ocal|^{1+|\parent_i|} } = (1/\alpha)^L
$$
(possibly unnormalized) CPTs.
For the purpose of cover construction, the distance between two CPTs $P$ and $P'$ is defined as
\begin{align} \label{eq:cover_distance}
\max_{o\in\Ocal, x[\parent_i] \in \Ocal^{|\parent_i|}, a\in\Acal, h\in[H]} |P^{(i)}[o \,|\, x[\parent_i], a, h] - P'^{(i)}[o \,|\, x[\parent_i], a, h]|.
\end{align}
Under this distance, any MDP in $\Pcal$ will be $(\alpha/2)$-close to one of the discretized CPTs, hence we say the discretization yields a $(\alpha/2)$-cover of $\Pcal$ with size $(1/\alpha)^L$.

Note that the above cover is improper because many cover centers violate the normalization constraints. We convert this improper cover to a proper one by (1) discarding all cover centers whose $\alpha/2$ radius ball contains no valid models, and (2) replacing every remaining invalid cover center with a valid model in its $\alpha/2$ radius ball. This yields an $\alpha$-cover with size $(1/\alpha)^L$ whose cover centers are all valid models. We denote the set of cover centers as $\Pcal_c$.

\subsubsection{Uniform Convergence of $\tWcal_F$}
\label{sec:W_F_convergence}

Recall the definition of $\tWcal_F$ from \pref{eq:What_Wcal_F}.
Our main concentration result is the following lemma.
\begin{lemma}[Concentration of $\tWcal_F$ in factored MDPs] \label{lem:factored_concentration}
Fix $h$ and model $P\in\Pcal$. Sample a
dataset $\Dcal = \cbr{(x_h^{(i)},a_h^{(i)},r_h^{(i)}, x_{h+1}^{(i)})}_{i=1}^n$
with $x_h^{(i)}\sim\pi_M$, $a_h^{(i)}\sim U(\Acal)$, $(r_h^{(i)}, x_{h+1}^{(i)})\sim
M^\star_{h}$ of size $n$. Fix any $\phi$ and $\delta >0$. With probability at
least $1-\delta$, we have for all $P' \in \Pcal$: \\
$
\abr{{\tWcal}_F(M, M', h) - \Wcal_F(M, M', h)} \leq    \phi,
$
as long as
$$
n \ge \frac{8d^2(L \log(\tfrac{8d|\Ocal|}{\phi}) + \log(2/\delta))}{\phi^2}.
$$
\end{lemma}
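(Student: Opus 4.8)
The plan is to reduce the infinite model class $\Pcal$ to the finite proper cover $\Pcal_c$ of size $(1/\alpha)^L$ and then apply Hoeffding's inequality uniformly over $\Pcal_c$ and the finite class $\Fcal$. The one structural fact that makes this work is that both $\Wcal_F$ and its estimate $\tWcal_F$ depend on the \emph{target} model $M'$ only through the exact, sampling-free expectation $\EE_{(r,x')\sim M'_h}[f]$, and that for $f=g_1+\dots+g_d\in\Fcal$ this expectation factorizes across the $d$ state variables. Concretely, since $M'=(R^\star,P')$ obeys~\pref{eq:factorization} and each $g_i$ reads only $(x[\parent_i],a,h,x'[i])$,
\begin{align*}
\EE_{x'\sim P'_{x,a}}[f] = \sum_{i=1}^d \sum_{o\in\Ocal} P'^{(i)}(o\mid x[\parent_i],a,h)\, g_i(x[\parent_i],a,h,o),
\end{align*}
which is \emph{linear} in the individual CPT entries.

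First I would bound the covering error. Fix $M'\in\Pcal$ and let $M'_c\in\Pcal_c$ be a nearest center under the distance~\pref{eq:cover_distance}, so every CPT entry matches to within $\alpha$. Using the display above and $|g_i|\le 1$, for every $(x,a)$ the per-$f$ integrand changes by at most $\sum_{i=1}^d\sum_{o\in\Ocal}|P'^{(i)}(o\mid\cdot)-P'^{(i)}_c(o\mid\cdot)|\le d|\Ocal|\alpha$ when $M'$ is swapped for $M'_c$. Since averaging over the roll-in distribution (empirical or population) and the outer map $\max_f$ are both $1$-Lipschitz in their per-$f$ arguments, this gives the \emph{deterministic} bounds $\abr{\tWcal_F(M,M',h)-\tWcal_F(M,M'_c,h)}\le d|\Ocal|\alpha$ and the identical bound for $\Wcal_F$. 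The key point is that $M'$ enters only through a model expectation, not through the sampled data, so the cover error carries no probabilistic cost.

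Next I would establish concentration at the cover centers. For fixed $M'_c\in\Pcal_c$ and $f\in\Fcal$, set $Z_i\defeq \EE_{(r,x')\sim M'_{c,h}}[f(x_h^{(i)},a_h^{(i)},r,x')]-f(x_h^{(i)},a_h^{(i)},r_h^{(i)},x_{h+1}^{(i)})$. Because $(r_h^{(i)},x_{h+1}^{(i)})\sim M^\star_h$ and $a_h^{(i)}\sim U(\Acal)$, the $Z_i$ are i.i.d.\ with $\EE[Z_i]$ equal to the inner quantity defining $\Wcal_F(M,M'_c,h)$ at that $f$, and $|f|\le d$ gives $|Z_i|\le 2d$. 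Unlike the general~\pref{lem:deviation_bound}, there is no importance weight here since the actions are already uniform, so Hoeffding rather than Bernstein suffices; a union bound over the $|\Fcal|\,|\Pcal_c|=2^L(1/\alpha)^L$ pairs, together with the $1$-Lipschitzness of $\max_f$, controls $\abr{\tWcal_F(M,M'_c,h)-\Wcal_F(M,M'_c,h)}$ simultaneously over all centers.

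Finally, the triangle inequality gives $\abr{\tWcal_F(M,M',h)-\Wcal_F(M,M',h)}\le 2d|\Ocal|\alpha + (\text{Hoeffding deviation})$. Splitting the budget $\phi$ between the two terms, taking $\alpha=\phi/(4d|\Ocal|)$ so that $\log|\Pcal_c|=L\log(8d|\Ocal|/\phi)$ and asking the Hoeffding deviation to be $O(\phi)$, and substituting the range $2d$, $\log|\Fcal|=L\log 2$, and $\log|\Pcal_c|$ into the exponent yields the stated sample size up to the displayed absolute constants. I expect the main obstacle to be precisely the first step: one must exploit the factored structure of $\Pcal$ together with the additive per-factor form of $\Fcal$ so that a cover in the \emph{small} CPT metric~\pref{eq:cover_distance} is sufficient. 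A naive cover of the full transition kernel over the $|\Ocal|^d$ next-states would have log-size scaling with $|\Ocal|^d$ rather than with $L$, destroying the polynomial bound; the factorization is exactly what lets $\log|\Pcal_c|$ scale with $L$.
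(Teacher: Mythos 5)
Your proposal is correct and follows essentially the same route as the paper's proof: a proper cover $\Pcal_c$ of the CPTs at resolution $\alpha$, a deterministic perturbation bound of $d|\Ocal|\alpha$ (the paper's Lemma~\ref{lem:cover_perturb}, proved exactly via the per-factor linearity/TV argument you give), Hoeffding plus a union bound over $\Pcal_c\times\Fcal$ at the centers, a three-term triangle inequality, and the choice $\alpha=\phi/(4d|\Ocal|)$. The only differences are cosmetic (you make the linearity in CPT entries and the $1$-Lipschitzness of $\max_f$ more explicit), and your closing remark about why a cover in the CPT metric rather than of the full $|\Ocal|^d$-dimensional kernel is the crux matches the paper's intent.
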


We first prove a helper lemma, 
which quantifies the error introduced by approximating $\Pcal$ with $\Pcal_c$.
\begin{lemma} \label{lem:cover_perturb}
For any $P' \in \Pcal$, let $P_c'$ be its closest model in $\Pcal_c$. For any $f \in \Fcal$ and any $x,a$,
\begin{align} \label{eq:cover_perturb}
|\EE_{(r,x')\sim M'_{(x,a)}}[f(x,a,r,x')] - \EE_{(r,x')\sim (M_c')_{(x,a)}}[f(x,a,r,x')]| \le  d |\Ocal|\alpha,
\end{align}
where $(r,x')\sim M'_{(x,a)}$ is the shorthand for $r \sim R'(x,a), x' \sim P'_{(x,a)}$.
\end{lemma}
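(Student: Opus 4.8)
The plan is to exploit two structural features of the factored setting: the test functions in $\Fcal$ ignore the reward and decompose additively across the $d$ factors, while the cover $\Pcal_c$ controls every conditional-probability-table entry to within $\alpha$. First I would note that since all models share the reward $R^\star$ and each $g_i$ depends on its input only through $(x[\parent_i], h, a, x'[i])$, the reward is irrelevant: for any $f = \sum_{i=1}^d g_i \in \Fcal$,
\begin{align*}
\EE_{(r,x')\sim M'_{(x,a)}}[f(x,a,r,x')] = \sum_{i=1}^d \EE_{x'\sim P'_{(x,a)}}[g_i(x,a,\cdot,x')],
\end{align*}
and likewise for $M_c'$. Hence the left-hand side of \pref{eq:cover_perturb} reduces to a sum over the $d$ factors of differences that involve only the transition operators $P'$ and $P_c'$.

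The key step is the per-factor reduction to a single conditional table. Because $g_i$ reads only the $i^{\textrm{th}}$ coordinate $x'[i]$ of the next state, and the product form \pref{eq:factorization} marginalizes to $\Pr_{x'\sim P'_{(x,a)}}[x'[i]=o] = (P')^{(i)}[o \mid x[\parent_i], a, h]$, I would rewrite each term as a finite sum over $o\in\Ocal$:
\begin{align*}
\EE_{x'\sim P'_{(x,a)}}[g_i] - \EE_{x'\sim (P_c')_{(x,a)}}[g_i] = \sum_{o\in\Ocal}\Big((P')^{(i)}[o\mid x[\parent_i],a,h] - (P_c')^{(i)}[o\mid x[\parent_i],a,h]\Big)\,g_i(x,a,\cdot,o).
\end{align*}

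Finally I would bound each factor. Since $g_i$ takes values in $\{-1,1\}$, and by construction the closest center $P_c'$ satisfies the entrywise bound $|(P')^{(i)}[o\mid x[\parent_i],a,h] - (P_c')^{(i)}[o\mid x[\parent_i],a,h]| \le \alpha$ (the distance \pref{eq:cover_distance} to the nearest valid center is at most the cover radius $\alpha$), each per-factor difference is at most $|\Ocal|\alpha$. Summing over $i = 1,\dots,d$ and applying the triangle inequality yields the claimed bound $d|\Ocal|\alpha$. The only point requiring care — rather than any genuine difficulty — is justifying the marginalization identity for $x'[i]$, which follows immediately from \pref{eq:factorization} after summing out all coordinates other than $i$.
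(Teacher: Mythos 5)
Your proposal is correct and follows essentially the same route as the paper's proof: decompose $f=\sum_i g_i$, apply the triangle inequality, reduce each factor to the total-variation distance between the $i^{\textrm{th}}$ conditional tables (your explicit marginalization over $o\in\Ocal$ is just an unpacked form of the paper's H\"older step), and bound each by $|\Ocal|\alpha$ using the entrywise cover guarantee. Nothing is missing.
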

\begin{proof}
Recall the definition of $f\in\Fcal$ tailored for factored MDPs: $f = g_1 + \dots + g_d$, with each $\|g_i\|_\infty \le 1$. By triangle inequality, we have:
\begin{align*}
\textrm{LHS} \leq &~  \sum_{i=1}^{d} \abr{\EE_{r,x'\sim M'_{(x,a)}}[g_i(x,a,r,x')] - \EE_{r,x'\sim (M'_{c})_{(x,a)}}[g_i(x,a,r,x')]} \\
\leq &~ \sum_{i=1}^d \variation{(P')^{(i)}_{x,a} - (P'_c)^{(i)}_{x,a}}, \tag{H\"older} \\
= &~ \sum_{i=1}^d \sum_{o\in\Ocal} \left|(P')^{(i)}[o|x[\parent_i],a,h] - (P_c')^{(i)}[o|x[\parent_i],a,h] \right| \\
\le &~ d|\Ocal| \alpha. \tag{$\Pcal_c$ yields $\alpha$-cover under distance defined in \pref{eq:cover_distance}}
\end{align*}
\end{proof}

Now we are ready to prove the main concentration result for factored MDPs.
\begin{proof}[Proof of \pref{lem:factored_concentration}]
To argue uniform convergence for $\Pcal$, we first apply Hoeffding's inequality and union bound to $\Pcal_c$. For any fixed $f$, $\tWcal_F$ is the average of i.i.d.~random variables with range $[-\|f\|_\infty, \|f\|_\infty]$. For the $\Fcal$ that we use for factored MDPs, $\|f\|_\infty \le d$, so with probability at least $1-\delta$, $\forall P_c' \in \Pcal_c$,
\begin{align*}
\abr{\widehat{\Wcal}_F(M,M_c',h) - \Wcal_F(M,M_c',h)} \leq 2d \sqrt{\frac{\log\rbr{2|\Pcal_c| |\Fcal|/\delta}}{2n}}.
\end{align*}
We then follow a standard argument to decompose the estimation error for any $P' \in \Pcal$ into three terms:
\begin{align*}
&~ \abr{\widehat{\Wcal}_F(M,M',h) - \Wcal_F(M,M',h)} \le \abr{\widehat{\Wcal}_F(M,M_c',h) - \Wcal_F(M,M_c',h)} \\
&~ + \abr{\widehat{\Wcal}_F(M,M',h) - \widehat{\Wcal}_F(M,M_c',h)} +
\abr{\Wcal_F(M,M',h) - \Wcal_F(M,M_c',h)}.
\end{align*}
We have an upper bound on the first term , so it suffices to upper-bound the other two terms. For the second term,
\begin{align*}
&~ \abr{\widehat{\Wcal}_F(M,M',h) - \widehat{\Wcal}_F(M,M_c',h)}  \\
\leq &~\frac{1}{n} \max_{f\in\Fcal}  \abr{\sum_{i=1}^n \EE_{(r,x')\sim M'_{h}}[f(x_h^{(i)}, a_h^{(i)}, r, x')]  - \sum_{i=1}^n \EE_{(r,x')\sim (M_c')_{h}}[f(x_h^{(i)}, a_h^{(i)}, r, x')]}.
\end{align*} where we use the fact that for any functionals $\mu_1,\mu_2$, we have $\abr{\max_{f}\mu_1(f) - \max_{f}\mu_2(f)} \leq \max_{f}\abr{\mu_1(f) - \mu_2(f)}$.
Now using~\pref{lem:cover_perturb}, we can show that:
\begin{align*}
\abr{\widehat{\Wcal}_F(M,M',h) - \widehat{\Wcal}_F(M,M_c',h)}  \leq \frac{1}{n} (n d\abr{\Ocal}\alpha) = d\abr{\Ocal}\alpha.
\end{align*}

$\abr{\Wcal_F(M,M',h) - \Wcal_F(M,M_c',h)}$ has the same upper bound using exactly the same argument.
So finally we conclude that for all $P' \in \Pcal$,
\begin{align*}
\abr{\widehat{\Wcal}_F(M,M',h) - \Wcal_F(M,M',h)} \leq 2d \sqrt{\frac{\log\rbr{2|\Pcal_c| |\Fcal|/\delta}}{2n}} + 2d |\Ocal| \alpha.
\end{align*}
To guarantee that the deviation is no more than $\phi$, we back up the necessary sample size $n$ from the above expression. Let $\alpha = \tfrac{\phi}{4d|\Ocal|}$, so $2d |\Ocal| \alpha \le \phi/2$. We then want
$$
2d\sqrt{\frac{\log\rbr{2(8d|\Ocal|/\phi)^L/\delta}}{2n}} \le \phi / 2.
$$
It is easy to verify that the sample size given in the lemma statement satisfies this inequality.
\end{proof}

\subsection{Low Witness Rank and Bellman Domination}
In this subsection we establish several important properties of $\Wcal_F$ which will be directly useful in proving \pref{thm:factored_MDP}. To start, we provide a form of $\Wcal_F$ that is equivalent to the definition provided in~\pref{eq:model_error_factored_mdp}. 
The proof is elementary and omitted. 
\begin{lemma}[Alternative definition of $\Wcal_F$] \label{lem:factored_Fcal_induce_sum_TV}
\begin{align*}
\Wcal_F(M, M', h) = \mathbb{E}\left[  \sum_{i=1}^d \variation{{P'}^{(i)}(\cdot|x_h[\parent_{i}],a_h) - P^{\star(i)}(\cdot|x_h[\parent_{i}], a_h)  }   \vert x_h\sim \pi_P, a_h\sim U(\Acal) \right]
\end{align*}
\end{lemma}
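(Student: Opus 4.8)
The plan is to unfold the definition \eqref{eq:model_error_factored_mdp} of $\Wcal_F$ and exploit two structural features: the additive form of every test function in $\Fcal$ and the factorization \eqref{eq:factorization} of the transition kernels. First I would write an arbitrary $f\in\Fcal$ as $f=\sum_{i=1}^d g_i$ with $g_i\in\Gcal_i$, and use linearity of expectation to split the IPM objective into a sum of $d$ terms, one per factor. Because each $g_i$ depends on the next context only through the single coordinate $x'[i]$ (and on the current context only through $x[\parent_i]$, $a$, and $h$, and never on $r$), the inner expectations $\EE_{(r,x')\sim M'_h}[g_i]$ and $\EE_{(r,x')\sim M^\star_h}[g_i]$ collapse: by the factorization, only the marginal of $x'[i]$ survives, so the $i$-th term reduces to $\EE_{x_h\sim\pi_M,a_h\sim U(\Acal)}\big[\sum_{o\in\Ocal} g_i(x_h[\parent_i],a_h,h,o)\,\Delta^{(i)}(o)\big]$, where $\Delta^{(i)}(o)\defeq P'^{(i)}(o\mid x_h[\parent_i],a_h,h)-P^{\star(i)}(o\mid x_h[\parent_i],a_h,h)$.

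Next I would argue that the outer maximization over $\Fcal$ decouples across the $d$ factors. Since a member of $\Fcal$ is an unconstrained sum $g_1+\dots+g_d$ with the $g_i$ chosen independently from the respective $\Gcal_i$, and since the objective is now itself a sum over $i$ whose $i$-th summand depends only on $g_i$, the product structure of $\Fcal$ gives $\max_{f\in\Fcal}\sum_i(\cdots)=\sum_i\max_{g_i\in\Gcal_i}(\cdots)$. This is the only step that needs a word of justification, and it is immediate from separability.

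Finally I would evaluate each per-factor maximum. Fixing $i$, the quantity to maximize is $\EE_{x_h\sim\pi_M,a_h\sim U(\Acal)}\big[\sum_{o}g_i(x_h[\parent_i],a_h,h,o)\,\Delta^{(i)}(o)\big]$. Because $\Gcal_i$ consists of \emph{all} maps into $\{-1,+1\}$, the maximizing $g_i$ can be taken pointwise equal to $\mathrm{sign}(\Delta^{(i)}(o))$ for every tuple $(x[\parent_i],a,h,o)$ simultaneously — the freedom of $g_i$ to depend on $(x[\parent_i],a,h)$ is exactly what lets the sign choice adapt to the roll-in distribution at no cost — which turns the inner sum into $\sum_o|\Delta^{(i)}(o)|=\variation{P'^{(i)}(\cdot\mid x_h[\parent_i],a_h)-P^{\star(i)}(\cdot\mid x_h[\parent_i],a_h)}$ under the paper's $\ell_1$ convention for total variation. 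Summing the $d$ per-factor maxima and pulling the finite sum back inside the single expectation over $(x_h,a_h)$ yields the claimed identity. There is no genuine obstacle here, consistent with the lemma being elementary; the only point to state carefully is the pointwise sign-matching optimality, which is precisely where the $\{-1,+1\}$-valued (rather than merely bounded) nature of $\Gcal_i$ is used.
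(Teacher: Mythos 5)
Your proof is correct and is exactly the elementary argument the paper has in mind (the paper explicitly omits the proof as "elementary"): linearity splits the IPM objective across the $d$ additive components of $f=\sum_i g_i$, the factored transition collapses each inner expectation to the marginal of $x'[i]$, the maximization decouples over the product class $\prod_i \Gcal_i$, and pointwise sign-matching of the $\{-1,+1\}$-valued $g_i$ against $\Delta^{(i)}$ recovers the paper's $\ell_1$ total variation for each factor. Your two flagged points of care --- separability of the max and the fact that the nonnegative roll-in weights make the pointwise sign choice globally optimal --- are precisely the right ones, and nothing is missing.
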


Using this lemma, we show two important properties of $\Wcal_F$: (1) that the matrix $\Wcal_F$ has rank at most $\sum_{i=1}^d K
\abr{\Ocal}^{\abr{\parent_{(i)}}}$ (\pref{prop:factored_low_rank}), which is less than $L$, the description length of the factored MDP, and (2) that we can upper-bound $\Ecal_{B}$ using $\Wcal_F$ (\pref{lem:factored_bellman_domination}). For the remainder, it will be convenient to use the notation $L_h = \sum_{i=1}^d K|\Ocal|^{1+\abr{\parent_{(i)}}}$ to be the number of parameters needed to specify the conditional probability table at a single level $h$.
\begin{proposition}
\label{prop:factored_low_rank}
There exists
$\zeta_h:\mathcal{P}\to \mathbb{R}^{L_h/|\Ocal|}$ and $\chi_h:\Pcal\to\RR^{L_h/|\Ocal|}$,
such that for any $P, P'\in\Pcal$, and $h\in [H]$, (recall that $M=(R,P)$ and $M'=(R,P')$)
\begin{align*}
\Wcal_{F}(M,M', h) = \inner{\zeta_h(M)}{\chi_h(M')},
\end{align*}
and $\|\zeta_h(M)\|_2 \cdot \|\chi_h(M')\|_2 \leq O(L_h/K)$.
\end{proposition}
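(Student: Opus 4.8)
The plan is to start from the alternative form of $\Wcal_F$ established in~\pref{lem:factored_Fcal_induce_sum_TV} and massage it into an explicit bilinear (rank-$(L_h/|\Ocal|)$) form. Writing $d_h^{\pi_M}$ for the state distribution at level $h$ induced by rolling in $\pi_M$ (equivalently $\pi_P$) in the true environment, and using $a_h\sim U(\Acal)$, the lemma gives
\begin{align*}
\Wcal_F(M,M',h) = \frac1K \sum_{x_h} d_h^{\pi_M}(x_h) \sum_{a\in\Acal}\sum_{i=1}^d \variation{(P')^{(i)}(\cdot\mid x_h[\parent_i],a) - (P^\star)^{(i)}(\cdot\mid x_h[\parent_i],a)}.
\end{align*}

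First I would exchange the order of summation, pulling the sums over factors $i$ and actions $a$ to the outside. The crucial structural observation is that the $i$-th summand depends on the full state $x_h$ \emph{only} through the assignment $u = x_h[\parent_i]\in\Ocal^{|\parent_i|}$ of its parent variables. Hence grouping the states by the value of $x_h[\parent_i]$, the inner sum over states collapses exactly to the marginal probability $\mathbb{P}_h^{\pi_M}[x_h[\parent_i]=u] \defeq \sum_{x_h:\, x_h[\parent_i]=u} d_h^{\pi_M}(x_h)$, yielding
\begin{align*}
\Wcal_F(M,M',h) = \frac1K \sum_{i=1}^d \sum_{a\in\Acal} \sum_{u\in\Ocal^{|\parent_i|}} \mathbb{P}_h^{\pi_M}[x_h[\parent_i]=u]\, \variation{(P')^{(i)}(\cdot\mid u,a) - (P^\star)^{(i)}(\cdot\mid u,a)}.
\end{align*}

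Then I would simply read off the factorization. Indexing the coordinates of both vectors by triples $(i,a,u)$, I set $\zeta_h(i,a,u;M) \defeq \mathbb{P}_h^{\pi_M}[x_h[\parent_i]=u]/K$, which depends on $M$ only through $\pi_M$, and $\chi_h(i,a,u;M') \defeq \variation{(P')^{(i)}(\cdot\mid u,a) - (P^\star)^{(i)}(\cdot\mid u,a)}$, which depends only on $M'$. The display above is then precisely $\inner{\zeta_h(M)}{\chi_h(M')}$. The number of coordinates is $\sum_{i=1}^d K|\Ocal|^{|\parent_i|} = L_h/|\Ocal|$, matching the claimed dimension and hence bounding the rank of the matrix $\Wcal_F(\cdot,\cdot,h)$ by $L_h/|\Ocal|$.

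Finally, for the norm bound I would argue entrywise: each coordinate of $\zeta_h$ is a probability scaled by $1/K$, so at most $1/K$, giving $\nbr{\zeta_h(M)}_2 \le \sqrt{L_h/|\Ocal|}/K$; each coordinate of $\chi_h$ is a total-variation distance, at most $2$ in our convention, giving $\nbr{\chi_h(M')}_2 \le 2\sqrt{L_h/|\Ocal|}$. Multiplying yields $\nbr{\zeta_h(M)}_2\nbr{\chi_h(M')}_2 \le 2L_h/(K|\Ocal|) = O(L_h/K)$, as claimed. There is no genuine obstacle here; the only step requiring care is the marginalization, namely verifying that replacing the full state distribution $d_h^{\pi_M}$ by its marginal over $\parent_i$ is \emph{exact}, which is immediate because the $i$-th per-factor TV distance is a function of $x_h[\parent_i]$ and $a$ alone.
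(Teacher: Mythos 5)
Your proposal is correct and follows essentially the same route as the paper's proof: start from the alternative per-factor TV form of $\Wcal_F$, marginalize the roll-in distribution onto $x_h[\parent_i]$ (exact because each summand depends on $x_h$ only through its parents), read off the $(i,a,u)$-indexed factorization with $\zeta_h = \PP_h^{\pi_M}[x_h[\parent_i]=u]/K$ and $\chi_h$ the per-factor TV distances, and bound the norms entrywise. The dimension count $\sum_i K|\Ocal|^{|\parent_i|} = L_h/|\Ocal|$ and the resulting $O(L_h/K)$ bound on $\|\zeta_h\|_2\|\chi_h\|_2$ match the paper's argument exactly.
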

\begin{proof}
Given any policy $\pi$, let us denote $\sd^{\pi}_h(x)\in\Delta(\Xcal_h)$
	as the state distribution resulting from $\pi$ at time step $h$. Then we can write $\sd_h^{\pi}(x) = \sd_h^{\pi}(x[u]) \sd_h^{\pi}(x[-u]| x[u])$, where for a subset $u \subset
	[d]$, we write $x[u]$ to denote the corresponding assignment of those
	state variables in $x$, and $- u = [d] \setminus u$ is the set of remaining variables. 
We use $\sd_h^\pi$ to denote the probability
	mass function and we use $\PP_h^\pi$ to denote the distribution.
	
	
	For any $P, P'\in\Pcal$, we can factorize $\Wcal_{F}(M,M', h)$ as follows:
	\begin{align*}
	&\Wcal_{F}(M, M', h) = \mathbb{E}\left[  \sum_{i=1}^d \variation{ {P'}^{,(i)}(\cdot|x_h[\parent_i],a_h) - P^{\star,(i)}(\cdot|x_h[\parent_{i}], a_h)  }   \vert x_h\sim \pi_M, a_h\sim U(\Acal) \right] \\
	& = \frac{1}{K} \sum_{i=1}^d \sum_{x_h, a}  \sd_h^{\pi_M}(x_h) \variation{ P^{\star,(i)}(\cdot | x_h[\parent_i], a) - P'^{,(i)}(\cdot | x_h[\parent_i],a)} \\
	& = \frac{1}{K} \sum_{i=1}^d \sum_{x_h, a} \sd_h^{\pi_M}(x_h[\parent_i]) \sd_h^{\pi_M}(x_h[- \parent_i]|x_h[\parent_i]) \variation{P^{\star,(i)}(\cdot|x_h[\parent_i], a) - P'^{,(i)}(\cdot|x_h[\parent_i], a)} \\
	& = \frac{1}{K}\sum_{i=1}^d \sum_{a} \sum_{u \in \Ocal^{|\parent_i|}} \PP_h^{\pi_M}[x_h[\parent_i]=u]\variation{P^{\star,(i)}(\cdot |u, a) - P'^{,(i)}(\cdot |u, a)}\\
	& = \inner{\zeta_h(M)}{\chi_h(M')}. 
	\end{align*}
	Here $\zeta_h(M)$ is indexed by $(i, a, u) \in [d] \times	\Acal\times \Ocal^{|\parent_i|}$ with value $$\zeta_h(i,a,u; M)
	\defeq \PP_h^{\pi_M}(x_h[\parent_i] = u)/K.$$
	$\chi_h(M')$ is also
	indexed by $i, a, u$, with value
	$$\chi_h(i,a,u;M') \defeq
	\variation{P^{\star,(i)}(\cdot|u, a) - P'^{,(i)}(\cdot|u, a)}.$$
	Note
	that $\zeta_h$'s value only depends on $M$, while $\chi_h$'s values
	only depend on $M'$. 
	Moreover the
	dimensions of $\zeta_h$ and $\chi_h$ are $\sum_{i=1}^{d} K
	|\Ocal|^{|\parent_i|} = L_h/|\Ocal|$, each entry of $\zeta_h$ is
	bounded by $1/K$, and each entry of $\chi_h$ is at most $2$. Hence, we
	must have $\beta = \sup_{\zeta,\chi} \|\zeta\|_2 \cdot \|\chi\|_2 \leq O(L/K)$. Note that we omit $|\Ocal|$ from the denominator as $L_h$ has a higher exponent on $|\Ocal|$ and the quantity is being treated as a constant in the big-oh notation.
\end{proof}

We now proceed to prove Bellman domination (up to a constant), which relies on the following lemma on the tensorization property of total variation:

\begin{lemma}
	\label{lem:l1_tensor}
	Let $P_1,\ldots,P_n$ and $Q_1,\ldots,Q_n$ be distributions where $P_i
	\in \Delta(\Scal_i)$ for finite sets $\Scal_i$. Define the product
	measures $P^{(n)},Q^{(n)}$ as $P^{(n)}(s_1,\ldots,s_n) \defeq
	\prod_{i=1}^nP_i(s_i)$. Then
	\begin{align*}
	\variation{P^{(n)} - Q^{(n)}} \leq \sum_{i=1}^n \variation{P_i-Q_i}.
	\end{align*}
\end{lemma}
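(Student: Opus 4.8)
The plan is to prove the bound by a standard \emph{hybrid} (telescoping) argument: I would interpolate between $Q^{(n)}$ and $P^{(n)}$ by swapping one factor at a time from $Q_j$ to $P_j$, and control each single swap by the corresponding one-coordinate $L_1$ distance. Concretely, for $k \in \{0,1,\ldots,n\}$ I would define the interpolating product measures $W_k \in \Delta(\Scal_1 \times \cdots \times \Scal_n)$ by $W_k(s_1,\ldots,s_n) \defeq \prod_{j=1}^{k} P_j(s_j) \prod_{j=k+1}^{n} Q_j(s_j)$, so that $W_0 = Q^{(n)}$ and $W_n = P^{(n)}$. Each $W_k$ is again a valid probability distribution, being a product of distributions.

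The next step is to telescope using the triangle inequality for the (unnormalized) $L_1$ metric $\variation{\cdot} = \sum_s |\cdot(s)|$ employed throughout the paper:
\[
\variation{P^{(n)} - Q^{(n)}} = \variation{W_n - W_0} \le \sum_{k=0}^{n-1} \variation{W_{k+1} - W_k}.
\]
It then remains to evaluate each summand exactly. The measures $W_{k+1}$ and $W_k$ share every factor except the $(k+1)$-th, so their difference factors as
\[
W_{k+1}(s) - W_k(s) = \Big[\prod_{j=1}^{k} P_j(s_j) \prod_{j=k+2}^{n} Q_j(s_j)\Big]\,\big(P_{k+1}(s_{k+1}) - Q_{k+1}(s_{k+1})\big).
\]
Taking absolute values (the bracketed product is nonnegative) and summing over all coordinates, the sums over $s_j$ with $j \ne k+1$ each contribute a factor of $1$ by normalization of $P_j$ and $Q_j$, leaving precisely $\variation{W_{k+1} - W_k} = \variation{P_{k+1} - Q_{k+1}}$. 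Summing over $k$ and reindexing yields $\sum_{i=1}^{n} \variation{P_i - Q_i}$, which is the claim.

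I do not expect a genuine obstacle here, as the result is routine. The one place to exercise care is the middle display: one must verify that summing out the $n-1$ shared coordinates — each ranging over a distribution that sums to one — collapses the difference of two $n$-fold products to the single-coordinate quantity $\variation{P_{k+1}-Q_{k+1}}$ with no residual cross terms. This is exactly where the product (factorized) structure and the normalization of each $P_j, Q_j$ are used, and it is what makes the telescoping clean.
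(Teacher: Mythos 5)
Your proof is correct and follows essentially the same route as the paper's: both define the interpolating product measures $W_k$, telescope via the triangle inequality, and show each single-swap term equals $\variation{P_{k+1}-Q_{k+1}}$. You spell out the summing-out of the shared coordinates slightly more explicitly than the paper does, but the argument is the same.
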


\begin{proof}
	Define
	$W_i\in\Delta(\Scal_1\times\dots\times\Scal_n)$ with $W_i(s_{1:n}) =
	\prod_{j=1}^{i} P_j(s_j) \prod_{j=i+1}^n Q_j(s_j)$, with $i \in
	\cbr{0,\ldots, n}$. This gives $W_0 = Q^{(n)}$, and $W_n = P^{(n)}$.
	Now, by telescoping, we have
	\begin{align*}
	&\variation{ P^{(n)} - Q^{(n)}} = \variation{ W_0 - W_n } \leq \sum_{i=0}^{n-1} \variation{W_{i} - W_{i+1}}.
	\end{align*}
	For $\variation{ W_i - W_{i+1} }$, we have
	\begin{align*}
	\variation{ W_i - W_{i+1} } = \variation{  \prod_{j=1}^i P_j \prod_{j=i+1}^n Q_j  - \prod_{j=1}^{i+1} P_j \prod_{j = i+2}^n Q_j  } = \variation{   Q_{i+1} - P_{i+1}  }. 
	\end{align*}
\end{proof}

With this helper lemma, the following lemma shows the Bellman domination.

\begin{lemma} \label{lem:factored_bellman_domination}
$
\frac{1}{K}\Ecal_{B}(Q_M, Q_{M'}, h) \le \Wcal_{F}(M, M', h).
$
\end{lemma}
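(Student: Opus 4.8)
The plan is to chain together three reductions that exploit, in turn, the shared reward, the boundedness of the value function, and the factorization of the dynamics. First I would rewrite the average Bellman error using~\pref{fact:bellman_error_model}. Because both $M=(R^\star,P)$ and $M'=(R^\star,P')$ carry the \emph{same} reward function $R^\star$, the reward contributions $\EE_{r\sim R'_{x_h,a_h}}[r]$ and $\EE_{r\sim R^\star_{x_h,a_h}}[r]$ coincide and cancel, leaving only a difference of $V_{M'}$ integrated against the two transition kernels:
$$\Ecal_B(M,M',h) = \EE_{x_h\sim\pi_M,\, a_h\sim\pi_{M'}}\sbr{\EE_{x'\sim P'_{x_h,a_h}}[V_{M'}(x')] - \EE_{x'\sim P^\star_{x_h,a_h}}[V_{M'}(x')]}.$$

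Next I would pass to absolute values and invoke H\"older's inequality. The regularity assumption $\sum_{h} r_h \le 1$ guarantees $\|V_{M'}\|_\infty \le 1$, so the inner difference is bounded pointwise by $\variation{P'_{x_h,a_h} - P^\star_{x_h,a_h}}$. This already converts the Bellman error into an expected total-variation distance, but with actions still drawn from $\pi_{M'}$ rather than the uniform distribution used by $\Wcal_F$. To align the action measures I would use the crude bound $\pi_{M'}(a\mid x)\le 1 = K\cdot U(\Acal)(a)$ for every action, which, since the integrand is nonnegative, upgrades the $\pi_{M'}$-expectation into $K$ times the $U(\Acal)$-expectation, giving
$$\Ecal_B(M,M',h) \le K\,\EE_{x_h\sim\pi_M,\, a_h\sim U(\Acal)}\sbr{\variation{P'_{x_h,a_h} - P^\star_{x_h,a_h}}}.$$

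The final step is to exploit the factored structure. By~\pref{eq:factorization}, both $P'_{x_h,a_h}$ and $P^\star_{x_h,a_h}$ are product measures over the $d$ state variables, so~\pref{lem:l1_tensor} yields $\variation{P'_{x_h,a_h} - P^\star_{x_h,a_h}} \le \sum_{i=1}^d \variation{{P'}^{(i)}(\cdot\mid x_h[\parent_i],a_h) - P^{\star(i)}(\cdot\mid x_h[\parent_i],a_h)}$. Substituting this bound and recognizing the right-hand side as precisely the sum-of-TV form of $\Wcal_F$ supplied by~\pref{lem:factored_Fcal_induce_sum_TV} gives $\Ecal_B(M,M',h) \le K\,\Wcal_F(M,M',h)$, and dividing by $K$ establishes the claim.

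I do not expect a genuine obstacle, as each step is elementary; the one point that warrants care is the direction of the action-mismatch inequality. The dominating-measure bound $\pi_{M'}\le 1 = K\cdot U(\Acal)$ is exactly where the factor $1/K$ (equivalently, the choice $\kappa = 1/K$ in the witness rank) enters, and it is essential that the quantity being re-weighted is a nonnegative total-variation distance, so that replacing $\pi_{M'}$ by the $K$-times-larger uniform measure only increases it.
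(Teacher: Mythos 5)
Your proposal is correct and follows essentially the same route as the paper's proof: expand the Bellman error via the Bellman equation so the shared reward cancels, apply H\"older with $\|V_{M'}\|_\infty \le 1$ to get the total-variation distance, use $\pi_{M'}(a\mid x)\le 1$ to trade the $\pi_{M'}$-expectation for $K$ times the uniform one, and finish with the tensorization lemma together with the sum-of-TV form of $\Wcal_F$. No gaps.
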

\begin{proof}
\begin{align*}
&\Ecal_{B}(Q_M, Q_{M'}, h) = \EE \left[ Q'(x_h, a_h) - r_h - Q'(x_{h+1}, a_{h+1}) \,\big\vert\, x_h\sim \pi_{M}, a_{h:h+1}\sim \pi_{M'} \right] \\
& = \EE\sbr{  \EE_{x_{h+1}\sim P'_{x_h,a_h}}[V_{M'}(x_{h+1})] - \EE_{x_{h+1}\sim P^\star_{x_h,a_h}}[V_{M'}(x_{h+1})]  \mid x_h\sim \pi_M, a_h\sim \pi_{M'} }\\
& \leq \EE \sbr{ \sum_{a_h} \pi_{M'}(a_h|x_h) \abr{ \EE_{x_{h+1}\sim P'_{x_h,a_h}}[V_{M'}(x_{h+1})] - \EE_{x_{h+1}\sim P^\star_{x_h,a_h}}[V_{M'}(x_{h+1})]   }       \mid x_h\sim \pi_{M}  }\\
& \leq \EE \sbr{ \sum_{a_h} \pi_{M'}(a_h|x_h)  \variation{ P'_{x_h,a_h} -P^\star_{x_h,a_h}  } \mid x_h\sim \pi_M } \tag{H\"older and boundedness of $V_{M'}$}\\
& \leq K \EE \sbr{ \mathbb{E}_{a_h\sim U(\Acal)}\left[ \variation{ P'_{x_h,a_h} -P^\star_{x_h,a_h}}\right] \mid x_h\sim \pi_M }
\le K \Wcal_F(M, M', h).
\end{align*}
The first step follows as we expand the definition of $Q'(x_h, a_h)$ by Bellman equation in $M'$, and realize that the immediate reward cancels out with $r_h$ in expectation as the reward function is known.
The last step follows from \pref{lem:l1_tensor} and \pref{lem:factored_Fcal_induce_sum_TV}.
\end{proof}

Combining \pref{prop:factored_low_rank} and \pref{lem:factored_bellman_domination} we arrive at the following corollary:

\begin{corollary}
\label{corr:factored_refined_rank}
Recall the definition of witness rank
in~\pref{def:refined_rank}. Set $\kappa = \frac{1}{K}$, we have \\
$\wrank_{\kappa,F} \triangleq \wrank(\tfrac{1}{K}, \beta, \Mcal, \Fcal, h) \leq L_h/|\Ocal|$,\footnote{Note that we abuse the definition of $\wrank$ to use $\Wcal_F$ instead of $\Wcal$ in~\pref{def:refined_rank} here.} for $\beta = O(L/K)$.
\end{corollary}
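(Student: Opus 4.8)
The plan is to produce an explicit matrix lying in the feasible set of the ($\Wcal_F$-adapted) witness-rank definition whose $\beta$-rank is at most $L_h/|\Ocal|$; the minimum defining $\wrank$ in Definition~\ref{def:refined_rank} then yields the bound at once. The obvious candidate is the witness-misfit matrix $A$ itself, with $A(M,M') \defeq \Wcal_F(M,M',h)$, whose rows are indexed by the roll-in model $M$ and columns by the candidate model $M'$.

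First I would check that $A \in \Ncal_{1/K,h}$. Using $\Wcal_F$ in place of $\Wcal$ (as the footnote to the corollary stipulates), membership requires $\frac1K \Ecal_B(M,M',h) \le A(M,M') \le \Wcal_F(M,M',h)$ for all $M,M' \in \Mcal$. The right inequality holds with equality by construction, and the left inequality is precisely the Bellman-domination bound of Lemma~\ref{lem:factored_bellman_domination}. Hence $A$ is feasible with $\kappa = 1/K$.

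Next I would bound $\rank(A,\beta)$ by invoking the factorization of Proposition~\ref{prop:factored_low_rank}, which gives $A(M,M') = \inner{\zeta_h(M)}{\chi_h(M')}$ with $\zeta_h(M), \chi_h(M') \in \RR^{L_h/|\Ocal|}$. Stacking $\{\zeta_h(M)\}_M$ and $\{\chi_h(M')\}_{M'}$ as rows of $U, V \in \RR^{|\Mcal|\times L_h/|\Ocal|}$ writes $A = UV^\top$ with inner dimension $L_h/|\Ocal|$, and the norm bound $\|\zeta_h(M)\|_2 \|\chi_h(M')\|_2 \le O(L_h/K) \le O(L/K) = \beta$ from the same proposition verifies the row-norm constraint in the definition of $\rank(\cdot,\beta)$. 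Thus $\rank(A,\beta) \le L_h/|\Ocal|$, and therefore $\wrank_{\kappa,F} = \min_{A' \in \Ncal_{1/K,h}} \rank(A',\beta) \le \rank(A,\beta) \le L_h/|\Ocal|$.

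There is essentially no obstacle here, since both ingredients are already established: the content is entirely in the two preceding results, and the proof only assembles them through the definition of witness rank. The sole point requiring care is to use the $\Wcal_F$-adapted version of Definition~\ref{def:refined_rank} consistently, so that taking $A = \Wcal_F$ is legitimate both as the upper envelope (trivially, with equality) and as an object dominating $\frac1K \Ecal_B$ (via Lemma~\ref{lem:factored_bellman_domination}).
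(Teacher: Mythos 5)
Your proof is correct and is essentially identical to the paper's, which presents the corollary as an immediate combination of Proposition~\ref{prop:factored_low_rank} (the explicit factorization of $\Wcal_F$ with inner dimension $L_h/|\Ocal|$ and row-norm product $O(L/K)$) and Lemma~\ref{lem:factored_bellman_domination} (Bellman domination with $\kappa=1/K$), exactly as you assemble them. Taking $A=\Wcal_F$ as the feasible matrix and reading off its $\beta$-rank is precisely the intended argument.
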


\subsection{Proof of \pref{thm:factored_MDP}}
The proof is largely the same as that of \pref{thm:refined_guarantee}, and the only difference is that we use $\widehat{\Wcal}_F$ as the estimator and handle infinite $\Mcal$, whose uniform convergence property is provided in \pref{sec:W_F_convergence}.
Following the proof of~\pref{thm:refined_guarantee}, 
within the high probability events,the algorithm must terminate in
$T= \wrank_{\kappa,F}H\log(\beta/(2\phi))/\log(5/3)$
iterations, where $\kappa=1/K$ and
$\wrank_{\kappa,F}\le L_h/|\Ocal|$ (\pref{corr:factored_refined_rank}) .
As in the previous proof we still set $\phi = \frac{\kappa\epsilon}{48H\sqrt{\wrank_{\kappa,F}}}$. Plugging this value into \pref{lem:factored_concentration} and requiring that each of these estimation events succeeds with probability at least $1-\delta/3T$, we have
\begin{align*}
n  = \Theta\rbr{\frac{d^2 (L\log(\tfrac{dKL}{\epsilon}) + \log(3T/\delta))}{|\Ocal| \epsilon^2}LHK^2} = \tilde{O}\rbr{\frac{d^2L^2HK^2\log(1/\delta)}{\epsilon^2}}.
\end{align*}
Here the $|\Ocal|^2$ on the denominator is dropped due to its negligible magnitude compared to $L$. The rest of the proof is unchanged: Since estimating $\Wcal_F$ requires much more samples than other estimation events, the order of the overall sample complexity is determined by the above expression multiplied by $T$, which gives the desired sample complexity.

\section{Extension to Unknown Witness Rank}
\label{app:uknown_rank}

\pref{alg:main_alg} and its analysis
assumes that we know $\kappa$ and $\wrank_{\kappa}$ (in fact any
finite upper bound of $\wrank_{\kappa}$), which could be a strong
assumption in some cases. In this section, we show that we can apply a
standard doubling trick to handle the situation where $\kappa$ and
$\wrank_{\kappa}$ are unknown.

Let us consider the quantity $\wrank_{\kappa}/\kappa$. Let us denote $\kappa^\star = \arg\min_{\kappa\in (0,1]} \wrank_{\kappa}/{\kappa}$. Note that the sample complexity of~\pref{alg:main_alg} is minimized at $\kappa^\star$.
\pref{alg:double_trick_main_alg} applies the doubling trick to guess $\wrank_{\kappa^\star}$ and $\kappa^\star$ jointly with~\pref{alg:main_alg} as a subroutine.
In the algorithm, $N_{i}$ in the outer loop denotes a guess for $\wrank/\kappa$ as a whole, and in the inner loop we  use $\kappa_{i,j}$ to guess $\kappa$, while setting $\wrank_{i,j} = N_{i}\kappa_{i,j}$, which we use to set the parameter $\phi$ and $n$.
The following theorem characterizes the its sample complexity. 

\begin{algorithm}[t]
\begin{algorithmic}[1]
\FOR{epoch $i = 1, 2, ...$}
	\STATE Set $N_i = 2^{i-1}$ and $\delta_i = \delta/(i(i+1))$
	\FOR{$j = 1 ,2, \dots$}
	\STATE Set $\kappa_{i,j} = (1/2)^{j-1}$, $\delta_{i,j} = \delta_i / (j(j+1))$, and $\wrank_{i,j} = N_i \kappa_{i,j}$
	\IF{$\wrank_{i,j} < 1$}
		\STATE Break
	\ENDIF
	\STATE Set $T_{i,j} = H\wrank_{i,j}\log(\beta/(2\phi))/\log(5/3)$ and $\phi_{i,j} = \epsilon \kappa_{i,j}/(48H\sqrt{\wrank_{i,j}})$
	\STATE Set $n_{e_{i,j}} = \Theta\left( \frac{H^2 \log(6HT_{i,j} \delta_{i,j})}{\epsilon^2} \right)$ and $n_{i,j} = \Theta\left(\frac{ H^2 K \wrank_{i,j} \log(12T_{i,j} |\Mcal||\Fcal|\delta_{i,j})}{\kappa_{i,j}^2\epsilon^2}\right)$
	\STATE Run~\pref{alg:main_alg} with $(\Mcal,\Fcal, n_{i,j}, n_{e_{i,j}}, \epsilon, \delta_{i,j}, \phi)$ for $T_{i,j}$ iterations
	\STATE If~\pref{alg:main_alg} returns a policy, then break and return the policy
	\ENDFOR
\ENDFOR
\end{algorithmic}
\caption{Guessing $\wrank_{\kappa^\star}/\kappa^\star$, Arguments: $(\Mcal, \Fcal, \epsilon, \delta)$}
\label{alg:double_trick_main_alg}
\end{algorithm}

\begin{theorem} \label{thm:double_trick_refined_guarantee}
For any $\epsilon,\delta\in(0,1)$, with $\Mcal$ and $\Fcal$ satisfying \pref{ass:realizable} and~\pref{ass:test_function_realizable}, with probability at least $1-\delta$,~\pref{alg:double_trick_main_alg} terminates and outputs a policy $\pi$ with $v^{\pi} \geq v^\star - \epsilon$, using at most
\begin{align*}
\tilde{O}\left(\frac{H^3 K \wrank^2_{\kappa^\star} \log(|\Mcal||\Fcal|/\delta)}{(\kappa^\star \epsilon)^2} \right) \textrm{ trajectories.}
\end{align*}
\end{theorem}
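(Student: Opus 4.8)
The plan is to treat \pref{alg:double_trick_main_alg} as an outer wrapper that repeatedly invokes \pref{alg:main_alg} with guessed parameters, and to reduce the analysis to four claims: (i) overestimating the witness rank never hurts a single run; (ii) every run is \emph{safe} regardless of the guess, in the sense that $M^\star$ survives and any terminating run returns an $\epsilon$-optimal policy; (iii) a correct pair of guesses is reached after $O(\log(\wrank_{\kappa^\star}/\kappa^\star))$ epochs; and (iv) the total trajectory count is dominated by the final epoch. The key structural fact I would use first is that $\wrank_\kappa$ is non-decreasing in $\kappa$: decreasing $\kappa$ relaxes the constraint $\kappa\Ecal_B(M,M',h)\leq A(M,M')$ in \pref{def:refined_rank}, enlarging the feasible set $\Ncal_{\kappa,h}$ and weakly decreasing the minimal rank.

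Using monotonicity together with the optimality of $\kappa^\star$, I would show that once the outer guess reaches $N_i \geq 2\wrank_{\kappa^\star}/\kappa^\star$, the inner loop eventually hits an index $j^\star$ with $\kappa_{i,j^\star}\in(\kappa^\star/2,\kappa^\star]$ for which $\wrank_{i,j^\star}=N_i\kappa_{i,j^\star}\geq \wrank_{\kappa_{i,j^\star}}$; indeed $\wrank_{\kappa_{i,j^\star}}/\kappa_{i,j^\star}\leq \wrank_{\kappa^\star}/(\kappa^\star/2)\leq N_i$ by monotonicity and the definition of $\kappa^\star$. Thus $(\kappa_{i,j^\star},\wrank_{i,j^\star})$ is a valid overestimate of the true rank, and this occurs at epoch $i^\star=O(\log(\wrank_{\kappa^\star}/\kappa^\star))$ with $N_{i^\star}=\Theta(\wrank_{\kappa^\star}/\kappa^\star)$. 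Since $\wrank_{i,j^\star}\geq \wrank_{\kappa_{i,j^\star}}\geq 1$, the inner loop does not break before reaching $j^\star$.

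The main obstacle is claim (i): verifying that running \pref{alg:main_alg} with $\wrank'\geq\wrank_\kappa$ (rather than the exact rank) still terminates within $T=H\wrank'\log(\beta/2\phi)/\log(5/3)$ rounds and outputs an $\epsilon$-optimal policy. I would re-examine the ellipsoid argument in \pref{lem:number_of_rounds_to_terminate}. The ellipsoids still live in the true $\wrank_\kappa$-dimensional factorization space, and with $\phi=\kappa\epsilon/(48H\sqrt{\wrank'})$ the separation bound $\inner{\zeta_{h_t}(M^t)}{\chi_{h_t}(M^t)}\geq \kappa\epsilon/(8H)$ from \pref{lem:A_terminate_or_explore} still dominates the cut threshold, because the cut $2\phi=\kappa\epsilon/(24H\sqrt{\wrank'})\leq \kappa\epsilon/(24H\sqrt{\wrank_\kappa})$ is at least as tight as what \pref{lem:volume_shrink} demands in dimension $\wrank_\kappa$. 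Hence each non-terminal round still shrinks a per-level ellipsoid by a factor $3/5$, the true number of rounds is at most $H\wrank_\kappa\log(\beta/2\phi)/\log(5/3)\leq T$, and by \pref{lem:terminate_or_explore} the terminating policy is $\epsilon$-optimal. Overestimation only makes $\phi$ smaller and $T,n$ larger, which is always safe.

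For claim (ii), I would observe that $n_{e_{i,j}}$ and $n_{i,j}$ are chosen via the deviation bounds (\pref{lem:deviation_hat_v} and \pref{lem:deviation_bound}) so that, for each fixed guess, $|\hat v^{\pi^t}-v^{\pi^t}|\leq\epsilon/8$ and $|\tWcal-\Wcal|\leq\phi_{i,j}$ hold uniformly with probability $\geq 1-\delta_{i,j}$; crucially $n_{i,j}$ is matched to $\phi_{i,j}$ by construction, so these events hold irrespective of whether the guess is correct. Consequently $M^\star$ is never eliminated and any terminating run is $\epsilon$-optimal, so the wrapper can only stop with a good policy and must stop by epoch $i^\star$. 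A union bound over all guesses, using $\sum_{i}\sum_{j}\delta_{i,j}=\delta\sum_i\tfrac{1}{i(i+1)}\sum_j\tfrac{1}{j(j+1)}=\delta$, makes all these events simultaneous with probability $\geq 1-\delta$. Finally, for the sample complexity, the dominant cost of a single run is $n_{i,j}T_{i,j}=\tilde O(H^3K\wrank_{i,j}^2/(\kappa_{i,j}\epsilon)^2)=\tilde O(H^3KN_i^2/\epsilon^2)$, which is independent of $j$; summing the $O(\log N_i)$ inner runs and then the geometric series over $i\leq i^\star$ (with ratio $N_i^2=4^{i-1}$) is dominated by the last epoch, yielding $\tilde O(H^3KN_{i^\star}^2/\epsilon^2\cdot\log(|\Mcal||\Fcal|/\delta))=\tilde O(H^3K\wrank_{\kappa^\star}^2/((\kappa^\star\epsilon)^2)\log(|\Mcal||\Fcal|/\delta))$, as claimed.
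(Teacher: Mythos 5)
Your overall architecture — safety of every run via the per-guess concentration events, identification of the critical epoch $i^\star$ with $N_{i^\star}=\Theta(\wrank_{\kappa^\star}/\kappa^\star)$, the observation that overestimating the rank only shrinks $\phi$ and enlarges $T,n$, and the geometric sum over epochs dominated by the last one with the $\sum_{i,j}\delta_{i,j}\le\delta$ union bound — matches the paper's proof almost step for step. The one place you diverge is also where there is a genuine gap: your ``key structural fact'' that $\wrank_\kappa$ is non-decreasing in $\kappa$. Your justification is that decreasing $\kappa$ relaxes the constraint $\kappa\,\Ecal_B(M,M',h)\le A(M,M')$ and hence enlarges $\Ncal_{\kappa,h}$. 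That is only true entrywise when $\Ecal_B(M,M',h)\ge 0$; the average Bellman error can be negative (e.g., when $Q_{M'}$ underestimates), and for such entries shrinking $\kappa$ \emph{raises} the lower constraint toward zero, so a matrix feasible at $\kappa_1$ need not be feasible at $\kappa_2<\kappa_1$ and the feasible sets are not nested. Consequently the chain $\wrank_{\kappa_{i,j^\star}}/\kappa_{i,j^\star}\le\wrank_{\kappa^\star}/(\kappa^\star/2)\le N_i$, and with it your claim that $\wrank_{i,j^\star}\ge\wrank_{\kappa_{i,j^\star}}$, is not established.

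The gap is localized and repairable by doing what the paper does: never compare the guess to $\wrank_{\kappa_{i,j^\star}}$ at all. Instead note directly that $\wrank_{i_\star,j_\star}=N_{i_\star}\kappa_{i_\star,j_\star}\ge(2\wrank_{\kappa^\star}/\kappa^\star)\cdot(\kappa^\star/2)=\wrank_{\kappa^\star}$, and run the ellipsoid argument in the factorization space of the \emph{fixed} matrix $A^\star_{\kappa^\star,h}$ of rank $\wrank_{\kappa^\star}$. At any non-terminal round $\Ecal_B(M^t,M^t,h_t)\ge\epsilon/(8H)>0$, so $A^\star_{\kappa^\star,h_t}(M^t,M^t)\ge\kappa^\star\epsilon/(8H)\ge\kappa_{i_\star,j_\star}\epsilon/(8H)=6\sqrt{\wrank_{i_\star,j_\star}}\,\phi_{i_\star,j_\star}$, while the elimination cut $2\phi_{i_\star,j_\star}=\kappa_{i_\star,j_\star}\epsilon/(24H\sqrt{\wrank_{i_\star,j_\star}})$ is at most the threshold $\kappa_{i_\star,j_\star}\epsilon/(24H\sqrt{\wrank_{\kappa^\star}})$ demanded by \pref{lem:volume_shrink} in dimension $\wrank_{\kappa^\star}$. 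This gives termination within $H\wrank_{\kappa^\star}\log(\beta/2\phi)/\log(5/3)\le T_{i_\star,j_\star}$ rounds without any monotonicity assumption; the rest of your argument then goes through unchanged.
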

\begin{proof}
Consider the $j^{\textrm{th}}$ iteration in the $i^{\textrm{th}}$
epoch.  Based on the value of $\phi_{i,j}$, $n_{e_{i,j}}$, $n_{i,j}$,
using~\pref{lem:deviation_bound} and~\pref{lem:deviation_hat_v}, with
probability at least $1-\delta_{i,j}$, for any $t \in [1, T_{i,j}]$
during the run of~\pref{alg:main_alg}, we have
\begin{align}
&|v^{\pi^t} - \hat{v}^{\pi^t}| \leq {\epsilon}/8,  \label{eq:condition_1}\\
& \lvert   \hat{\Ecal}_{B}(M^t, M^t, h) - \Ecal_{B}(M^t, M^t, h)  \rvert \leq \epsilon/(8H),\forall h\in [H], \label{eq:condition_2} \\
& \lvert   \tWcal(M^t, M', h_t) - \Wcal(M^t, M', h_t)   \rvert \leq \phi_{i,j}, \forall M'\in\Mcal. \label{eq:condition_3}
\end{align}
The first condition above ensures that if~\pref{alg:main_alg}
terminates in the $j^{\textrm{th}}$ iteration and the $i^{\textrm{th}}$
epoch and outputs $\pi$, then $\pi$ must be near-optimal, based
on~\pref{lem:terminate_or_explore}. The third inequality above
together with the elimination criteria in~\pref{alg:main_alg}
ensures that $M^\star$ is never eliminated.

Denote $i_\star$ as the epoch where
$2\wrank_{\kappa^\star}/\kappa^\star \leq N_{i_\star} \leq
4\wrank_{\kappa^\star}/\kappa^\star$, and $j_\star$ as the iteration
inside the $i_{\star}^{\textrm{th}}$ epoch where
$\kappa^\star/2 \leq \kappa_{i_\star,j_\star} \leq \kappa^\star$. Since
$\wrank_{i_\star,j_\star} = N_{i_\star}\kappa_{i_\star,j_\star}$, we
have:
\begin{align}
\label{eq:guessed_rank}
\wrank_{\kappa^\star} \leq \wrank_{i_\star,j_\star} \leq 4\wrank_{\kappa^\star}.
\end{align}
Below we condition on the event that $M^\star$ is not eliminated
during any epoch before $i_{\star}$, and any iteration before
$j_\star$ in the $i_\star^{\textrm{th}}$ epoch.  We analyze the
$j_\star^{\textrm{th}}$ iteration in the $i_\star^{\textrm{th}}$ epoch
below. Since $N_{i_\star} = 2^{i_\star - 1}$ and $N_{i_\star} \leq
4\wrank_{\kappa^\star}/\kappa^\star$, we must have $i_\star \leq 1
+ \log_{2}(4\wrank_{\kappa^\star}/\kappa^\star)$. Also note that we have with these settings that $\wrank_{i^\star,j^\star}/(\kappa_{i^\star,j^\star})^2 \geq \wrank_{\kappa^\star}/(\kappa^\star)^2$, so that the number of samples we use at round $(i^\star, j^\star)$is at least as large, and the parameter $\phi$ is no larger than what we would have if we knew $\kappa^\star$ and used it in~\pref{alg:main_alg}.

Based on the value of $\phi_{i_\star,j_\star}$, $n_{i_\star,j_\star}$,
$n_{e_{i_\star,j_\star}}$ and $T_{i_\star,j_\star}$, we know with
probability at least $1- \delta_{i_\star,j_\star}$, for any $t\in [1,
T_{i_\star, j_\star}]$ in the execution of~\pref{alg:main_alg},
inequalities~\pref{eq:condition_1},~\pref{eq:condition_2},
and~\pref{eq:condition_3} hold. Conditioned on this event and since $\phi_{i^\star,j^\star}$ is small enough as observed above, similar to
the proof of~\pref{lem:number_of_rounds_to_terminate}, we can show
that~\pref{alg:main_alg} must terminate in at most
$H\wrank_{\kappa^\star}\log(\beta/2\phi)/\log(5/3)$ many rounds in this iteration.

From~\pref{eq:guessed_rank}, we know that $\wrank_{i_\star,j_\star} \geq \wrank_{\kappa^\star}$, which implies that $T_{i_\star,j_\star} \geq H\wrank_{\kappa^\star}\log(\beta/2\phi)/\log(5/3)$.
In other words, in the $j_\star^{\textrm{th}}$ iteration of the $i_\star^{\textrm{th}}$ epoch, we run~\pref{alg:main_alg} long enough to guarantee that it terminates and outputs a policy.
We have already ensured that if it terminates, it must output a policy $\pi$ with $v^{\pi} \geq v^\star - \epsilon$ (this is true for any $(i,j)$ pair).

Now we calculate the sample complexity. In the $i^{\textrm{th}}$
epoch, since we terminate when $\wrank_{i,j} < 1$, the number of
iterations is at most ${\log_2 N_i} < i$. Hence the number of
trajectories collected in this epoch is at most
\begin{align*}
&\sum_{j=1}^{i}(n_{e_{i,j}} + n_{i,j}) T_{i,j} = \sum_{j=1}^i O\left(   H^3 K \wrank_{i, j}^2 \log(T_{i,j}|\Mcal||\Fcal|/\delta_{i,j})   /(\epsilon\kappa_{i,j})^2\right) \\
&= O\left( i H^3 K N_i^2 \log(T_{i,1}|\Mcal||\Fcal|/\delta_{i, i})  /\epsilon^2 \right),
\end{align*}
where we used the fact that $N_i = \wrank_{i,j}/\kappa_{i,j}$,
$T_{i,1 } \geq T_{i,j}$, and $\delta_{i,i} \leq \delta_{i,j}$.  Note
that $\sum_{i=1}^{i_\star - 1} i N_i^2 = \sum_{i=1}^{i_\star - 1} i
(2^{i-1})^2 \leq (i_\star - 1) (2^{i_\star-1})^2 / 3 = O(i_\star
N_{i_\star}^2)$. Hence the sample complexity in the
$i_\star^{\textrm{th}}$ epoch dominates the total sample complexity,
which is
\begin{align*}
\tilde{O}\left( (1+\log_2 (4\wrank_{\kappa^\star}/\kappa^\star)) H^3 K \wrank^2_{\kappa^\star} \log(T_{i_\star, 0}|\Mcal||\Fcal|/\delta_{i_\star,i_\star}) / (\kappa^\star\epsilon)^2  \right),
\end{align*} where we used the fact that $i_\star \leq 1+\log_2 (2\wrank_{\kappa^\star}/\kappa^\star)$, and $N_{i_\star}\leq 2 \wrank_{\kappa^\star}/\kappa^\star$.
Applying a union bound over $(i,j)$, with $i \leq i_\star$, since we
have $\sum_{i=1}^{i_\star}\sum_{j =
1}^{i} \delta_{i,j}\leq \sum_{i=1}^{i_\star} \delta_{i_\star, 1}
=\sum_{i=1}^{i_\star} \delta/(i(i+1)) \leq \delta$, the failure
probability is at most $\delta$, which proves the theorem.
\end{proof}



\section{Details on Exponential Family Model Class}
\label{app:exponential_family}
For any model $M\in\Mcal$, conditioned on $(x,a)\in\Xcal\times\Acal$, we assume  $M_{x,a} \defeq \exp(\langle  \theta_{x,a}, \mathrm{T}(r,x')  \rangle)/Z(\theta_{x,a})$ with $\theta_{x,a} \in\Theta \subset \RR^{m}$. Without loss of generality, we assume $\|\theta_{x,a}\| \leq 1$, and $\Theta = \{\theta: \|\theta\|\leq 1\}$.
We design $\Vcal = \{\Xcal\times\Acal \to \Theta\}$, i.e., $\Vcal$ contains all mappings from $(\Xcal\times\Acal)$ to $\Theta$.  We design $\Fcal = \{(x,a,r,x')\mapsto \langle v(x,a), \mathrm{T}(r,x')\rangle : v\in\Vcal\}$.
Using~\pref{def:witnessed_model_misfit}, we have:
\begin{align*}
&\Wcal(M, M', h) = \sup_{f\in\Fcal} \EE_{x_h\sim \pi_M, a_h\sim \pi_{M'}}\left[ \EE_{(r,x')\sim M'_{h}}[f(x_h,a_h,r,x')] - \EE_{(r,x')\sim M^\star_h}[f(x_h,a_h,r,x')]     \right] \\
& = \EE_{x_h\sim \pi_M, a_h\sim \pi_{M'}} \left[   \sup_{\theta\in\Theta} \left( \EE_{(r,x')\sim M'_h} \left[\langle\theta, \mathrm{T}(r,x') \rangle\right] - \EE_{(r,x')\sim M^\star_h}\left[\langle  \theta, \mathrm{T}(r,x') \rangle\right] \right)     \right]\\
& = \EE_{x_h\sim \pi_M, a_h \sim \pi_{M'}} \sbr{ \nbr{ \EE_{(r,x')\sim M'_h}[\mathrm{T}(r,x')] - \EE_{(r,x')\sim M^\star_h} [\mathrm{T}(r,x')]}_{\star}},
\end{align*} where the second equality uses the fact that $\Vcal$ contains all possible mappings from $\Xcal\times\Acal\to\Theta$.


We assume that for any $\theta \in \Theta$, the hessian of the log partition function $\nabla^2 \log(Z(\theta))$ is positive definite with eigenvalues bounded between $[\gamma, \beta]$ with $0 \le \gamma \leq \beta$.
Below, we show that under the above assumptions, Bellman domination required for~\pref{ass:test_function_realizable} still holds up to a constant.

\begin{claim}[Bellman Domination for Exponential Families]
In the exponential family setting, we have
\begin{align*}
\frac{\gamma}{2\sqrt{2\beta}}\Ecal_B(M, M', h) \leq \Wcal(M, M', h).
\end{align*}
\end{claim}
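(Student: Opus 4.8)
The plan is to prove the inequality \emph{pointwise} in $(x,a)$ and then integrate. By~\pref{fact:bellman_error_model}, both $\Ecal_B(M,M',h)$ and $\Wcal(M,M',h)$ are expectations over the \emph{same} distribution $x_h\sim\pi_M,\ a_h\sim\pi_{M'}$: the former of the signed quantity $\EE_{M'_h}[f_{M'}]-\EE_{M^\star_h}[f_{M'}]$ with $f_{M'}(x,a,r,x')=r+V_{M'}(x')$, and the latter of $\nbr{\EE_{M'_h}[\mathrm{T}]-\EE_{M^\star_h}[\mathrm{T}]}_\star$ (the dual-norm form derived just above). Since $r+V_{M'}(x')\in[0,2]$ under our reward regularity, $\|f_{M'}\|_\infty\le 2$. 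Thus it suffices to show, for each fixed $(x,a)$ and writing $\theta'$, $\theta^\star$ for the natural parameters of $M'_{x,a}$, $M^\star_{x,a}$,
\[
\abr{\EE_{M'_{x,a}}[f_{M'}]-\EE_{M^\star_{x,a}}[f_{M'}]}\le\frac{2\sqrt{2\beta}}{\gamma}\,\nbr{\mu(\theta')-\mu(\theta^\star)}_\star,
\]
where $\mu(\theta)\defeq\nabla\log Z(\theta)=\EE_{p_\theta}[\mathrm{T}]$ is the mean-parameter map; taking expectations over $(x_h,a_h)$ and rearranging then gives the claim (when $\Ecal_B<0$ the inequality is trivial as $\Wcal\ge 0$).

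The pointwise bound is a chain of four steps, each exploiting the exponential-family geometry. First, H\"older's inequality gives $\abr{\EE_{M'_{x,a}}[f_{M'}]-\EE_{M^\star_{x,a}}[f_{M'}]}\le \|f_{M'}\|_\infty\,\variation{M'_{x,a}-M^\star_{x,a}}$. Second, Pinsker's inequality (with our factor-of-two convention for $\variation{\cdot}$) bounds the total variation by $\sqrt{2\,\KL(M'_{x,a}\,\|\,M^\star_{x,a})}$. Third, for exponential families the relative entropy is the Bregman divergence of the log-partition function, $\KL(p_{\theta'}\|p_{\theta^\star})=\log Z(\theta^\star)-\log Z(\theta')-\langle\nabla\log Z(\theta'),\theta^\star-\theta'\rangle$, and a second-order Taylor expansion with the upper bound $\nabla^2\log Z\preceq\beta I$ gives $\KL\le\tfrac{\beta}{2}\nbr{\theta'-\theta^\star}^2$. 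Fourth, the mean-value identity $\mu(\theta')-\mu(\theta^\star)=\bar H(\theta'-\theta^\star)$ with $\bar H\defeq\int_0^1\nabla^2\log Z(\theta^\star+t(\theta'-\theta^\star))\,dt$, together with $\gamma I\preceq\nabla^2\log Z$ (hence $\gamma I\preceq\bar H$, so $\bar H$ is invertible with $\nbr{\bar H^{-1}}\le 1/\gamma$), gives $\nbr{\theta'-\theta^\star}\le\tfrac{1}{\gamma}\nbr{\mu(\theta')-\mu(\theta^\star)}_\star$. Composing these four bounds with $\|f_{M'}\|_\infty\le 2$ yields the pointwise inequality; the displayed constant $\tfrac{2\sqrt{2\beta}}{\gamma}$ is the conservative outcome of this bookkeeping.

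The main obstacle is conceptual rather than computational: the Bellman error is driven by the specific, potentially complicated test function $f_{M'}=r+V_{M'}$, which need not lie in the span of the sufficient statistics $\mathrm{T}$, so it cannot be compared to $\nbr{\mu(\theta')-\mu(\theta^\star)}_\star$ directly. The resolution is to route through total variation, which simultaneously dominates the discrepancy of \emph{every} bounded function, and then to convert total variation into the mean-parameter gap using the two-sided curvature control on $\log Z$: the upper eigenvalue bound turns $\KL$ (hence TV) into control of $\nbr{\theta'-\theta^\star}$, while the lower eigenvalue bound makes the map $\theta\mapsto\mu(\theta)$ bi-Lipschitz so that $\nbr{\theta'-\theta^\star}$ is in turn controlled by $\nbr{\mu(\theta')-\mu(\theta^\star)}_\star$. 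Care is needed only in tracking the normalization of $\variation{\cdot}$, the sign of $\Ecal_B$, and the consistency of the $\ell_2$ geometry underlying the eigenvalue bounds with the dual norm $\nbr{\cdot}_\star$ of the parameter set $\Theta$.
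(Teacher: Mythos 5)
Your proof is correct, and the overall chain is the same as the paper's: bound the Bellman error by the total-variation witness misfit using $\|r+V_{M'}\|_\infty\le 2$, convert TV to $\sqrt{\KL}$ via Pinsker (with the factor-two convention), and convert $\sqrt{\KL}$ to the dual-norm gap of the expected sufficient statistics using the two-sided eigenvalue control on $\nabla^2\log Z$. The one substantive difference is in that last step: the paper simply invokes Theorem 3.2 of \citet{gao2018robust} to get $\tfrac{\gamma}{\sqrt{\beta}}\,\EE\sqrt{\KL}\le\Wcal$, whereas you derive the pointwise version from first principles, writing $\KL$ as the Bregman divergence of $\log Z$ (so $\nabla^2\log Z\preceq\beta I$ gives $\KL\le\tfrac{\beta}{2}\nbr{\theta'-\theta^\star}^2$) and using the mean-value identity $\mu(\theta')-\mu(\theta^\star)=\bar H(\theta'-\theta^\star)$ with $\bar H\succeq\gamma I$ (valid since $\Theta$ is convex, so the segment stays in $\Theta$) to get $\nbr{\theta'-\theta^\star}\le\gamma^{-1}\nbr{\mu(\theta')-\mu(\theta^\star)}$. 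This makes the argument self-contained and in fact yields a constant that is a factor $\sqrt{2}$ better than the one you (conservatively) display, which is in turn sufficient for the claimed bound; the only caveat you correctly flag is that the eigenvalue bounds and the dual norm must live in the same Euclidean geometry, which holds here because $\Theta$ is the $\ell_2$ unit ball so $\nbr{\cdot}_\star=\nbr{\cdot}_2$.
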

\begin{proof}
We leverage Theorem 3.2 from~\citet{gao2018robust}, which implies that 
\begin{align*}
\frac{\gamma}{\sqrt{\beta}}\EE_{x_h\sim \pi_M, a_h\sim \pi_{M'}}\sqrt{D_{KL}( M'_{x_h,a_h} || M^\star_{x_h,a_h}} ) \leq \Wcal(M, M', h).
\end{align*}
By Pinsker's inequality, we have:
\begin{align*}
\frac{\gamma}{\sqrt{2\beta}}\EE_{x_h\sim \pi_M, a_h\sim \pi_{M'}}\variation{M'_{x_h,a_h} - M^\star_{x_h,a_h}} \leq \Wcal(M, M', h),
\end{align*}
where the LHS is the witness model misfit defined using total variation directly~\pref{eq:tv_model_error}.

On the other hand, we know that $r+V_M(x)$ for any $M\in\Mcal$ is upper bounded by $2$ via our regularity assumption on the reward. Hence, the TV-based witness model misfit upper bounds Bellman error as follows:
\begin{align*}
2\EE_{x_h\sim \pi_M, a_h\sim \pi_{M'}}\variation{M'_{x_h,a_h} - M^\star_{x_h,a_h}} \geq \Ecal_{B}(M,M',h),
\end{align*} which concludes the proof. 
\end{proof}

Note that the constant $\gamma/(2\sqrt{2\beta})$ can be absorbed into $\kappa$ in the definition of witness rank (\pref{def:refined_rank}).

\end{document}